\newtheorem{theorem}{Theorem}[section]
\newtheorem*{theorem*}{Theorem}
\newtheorem{definition}{Definition}[section]
\newtheorem{claim}[theorem]{Claim}
\newtheorem{lemma}[theorem]{Lemma}
\newtheorem{corollary}[theorem]{Corollary}
\newcounter{note}[section]
\DeclareMathOperator*{\argmax}{arg\,max}
\def\sse{\subseteq}
\newcommand{\pr}{\mathbb{P}} 
\newcommand{\E}{\mathbb{E}}
\newcommand{\x}{\mathbf{x}}
\newcommand{\R}{\mathbb{R}}
\newcommand{\A}{\mathcal{A}}
\newcommand{\calH}{\mathcal{H}}
\newcommand{\calX}{\mathcal{X}}
\newcommand{\calY}{\mathcal{Y}}
\newcommand{\calP}{\mathcal{P}}
\newcommand{\olp}{\mathtt{OLPO}\xspace}
\newcommand{\olpLO}{\mathtt{Lin\text{-}OLPO}}
\newcommand{\olpop}{\emph{online linear-product optimization problem}\xspace}
\newcommand{\ignore}[1]{}
\newcommand{\btheta}{\bm{\theta}}
\newcommand{\bTheta}{\bm{\Theta}}
\newcommand{\blf}{\bm{f}}
\newcommand{\bg}{\bm{\gamma}}
\newcommand{\M}{M}
\newcommand{\w}{\mathbf{w}} 
\newcommand{\sprod}[2]{ \left\langle {#1},{#2} \right\rangle }
\newcommand{\PMTheta}{\{\pm1\}^M}
\newcommand{\ball}{\mathbb{B}}
\renewcommand{\cite}[1]{\citep{#1}}
\definecolor{mydarkblue}{rgb}{0,0.08,0.45}
\definecolor{mydarkred}{rgb}{0.4,0.08,0}
\title{Improved and Oracle-Efficient Online $\ell_1$-Multicalibration}
\date{}
\author{Rohan Ghuge\footnote{H. Milton Stewart School of Industrial and Systems Engineering / Algorithms and Randomness Center, Georgia Institute of Technology, Atlanta, USA. Email: rghuge3@gatech.edu.} \and Vidya Muthukumar\footnote{School of Electrical and Computer Engineering/H. Milton Stewart School of Industrial and Systems Engineering, Georgia Institute of Technology, Atlanta, USA. Email: vmuthukumar8@gatech.edu. Supported in part by NSF awards IIS-2212182 and CCF-2239151 and gifts from Adobe and Amazon Research.
} \and Sahil Singla\footnote{School of Computer Science, Georgia Institute of Technology, Atlanta, GA, USA. Email: ssingla@gatech.edu. Supported in part by NSF awards CCF-2327010 and CCF-2440113.}}
\begin{document} 

\maketitle

\begin{abstract}
    We study \emph{online multicalibration}, a framework for ensuring calibrated predictions across multiple groups in adversarial settings, across $T$ rounds.
    Although online calibration is typically studied in the $\ell_1$ norm, 
    prior approaches to online multicalibration have taken the indirect approach of obtaining rates in other norms (such as $\ell_2$ and $\ell_{\infty}$) and then transferred these guarantees to $\ell_1$  at additional loss. In contrast, we propose a direct method that achieves improved  and oracle-efficient rates of $\widetilde{\mathcal{O}}(T^{-1/3})$ and $\widetilde{\mathcal{O}}(T^{-1/4})$ respectively, for online $\ell_1$-multicalibration.
    Our key insight is a novel reduction of online \(\ell_1\)-multicalibration to an online learning problem with product-based rewards, which we refer to as \emph{online linear-product optimization} ($\olp$). 

    To obtain the improved rate of $\widetilde{\mathcal{O}}(T^{-1/3})$, we introduce a linearization of $\olp$ and design a no-regret algorithm for this linearized problem. Although this method guarantees the desired sublinear rate (nearly matching the best rate for online calibration), it becomes computationally expensive when the group family \(\mathcal{H}\) is large or infinite, since it enumerates all possible groups. 
    To address scalability, we propose a second approach to $\olp$  that makes only a polynomial number of calls to an offline optimization (\emph{multicalibration evaluation}) oracle, resulting in \emph{oracle-efficient} online \(\ell_1\)-multicalibration with a  rate of $\widetilde{\mathcal{O}}(T^{-1/4})$. Our framework also extends to certain infinite families 
of groups (e.g., all linear functions on the context space) by 
exploiting a $1$-Lipschitz property of the \(\ell_1\)-multicalibration error with respect to \(\mathcal{H}\).
    
\end{abstract}







\section{Introduction}

Machine learning algorithms, powered by advances in data availability and model development, play a crucial role in decision-making across domains such as healthcare diagnostics, recidivism risk assessment, and loan approvals. 
This work focuses on forecasting algorithms that predict, say, the probability of binary outcomes 
$y$  (such as patient's severity or loan repayment) based on observable features $x$, in online settings where predictions are made as data is collected.
A key metric used to evaluate the performance of such probability forecasters is \emph{calibration}~\cite{dawid1982well}.
Roughly, it says that for any candidate prediction $p \in [0,1]$, the fraction of forecasts with prediction $p$ should converge to $p$. 
In 1998, the seminal work of   \citet{foster1998asymptotic}  showed a bound of $O(T^{-1/3})$ for online calibration in the $\ell_1$ metric. 
Since then, this $O(T^{-1/3})$  bound has been re-proved through insightful alternative approaches~\cite{hart2022calibrated,abernethy2011blackwell}. However, improving this bound is a challenging open problem that has only recently seen some progress~\cite{QiaoV-STOC21,dagan25breaking}.

Despite its popularity, calibration has a major limitation: calibrated predictions may perform poorly on specific sub-populations in the data, identifiable through contextual features such as gender, race, and age.
To address this issue, 
\citet{hebert2018multicalibration} proposed \emph{multicalibration}, a framework designed to address discrimination arising from data in the batch setting.
Informally, multicalibration is a requirement that the forecasts be statistically unbiased conditional both on its own prediction \emph{and} on membership in any one of a large collection of intersecting subsets of the data space $\calH$. 
Multicalibration and its variants have been an active area of research (see, e.g., \citet{kim2019multiaccuracy,jung2021moment,kim2022universal,GuptaJN+22,globus2023multical}). 
Multicalibration has also found applications in \emph{omniprediction}~\cite{gopalan2022op, gopalan2023swap}, a concept that asks for a single prediction which can be simultaneously used to optimize a large number of loss functions such that it is competitive with some hypothesis class $\calH$. 
Approximately multicalibrated models in the $\ell_1$ metric turn out to automatically be omnipredictors, in both the batch~\cite{gopalan2022op} and online~\cite{garg2024oracle} settings. 

We are especially focused on online multicalibration~\cite{GuptaJN+22, garg2024oracle}, which naturally generalizes the fundamental problem of sequential calibration. 
Existing approaches first provide multicalibration guarantees in the (weaker) $\ell_{\infty}$ or $\ell_2$ metric, and then transfer these guarantees to the $\ell_1$ metric at additional and possibly superfluous loss.
The resulting rates for online $\ell_1$-multicalibration are significantly weaker than those for online calibration.
A second drawback is that the runtime of most existing algorithms (with the exception of  \citet{garg2024oracle}, which we discuss later) for online multicalibration is typically linear in $|\calH|$~\cite{GuptaJN+22, lee2022online} .
These algorithms are hence inefficient in even simple practical scenarios (e.g. linear functions in $d$ dimensions) where the hypothesis class is usually exponential in relevant problem parameters.

In light of these considerations, the main motivating question of our work is the following:

\begin{quote}\emph{Is there an online $\ell_1$-multicalibration algorithm that guarantees $O(T^{-1/3})$ error? Can we design ``oracle-efficient'' algorithms for online $\ell_1$-multicalibration?}\end{quote}

In this work, we make progress towards answering both these questions. 
We propose a method that achieves improved
and oracle-efficient rates of $\widetilde{\mathcal{O}}(T^{-1/3})$ and $\widetilde{\mathcal{O}}(T^{-1/4})$, respectively, for online $\ell_1$-multicalibration.  
Our key insight is a novel reduction of online \(\ell_1\)-multicalibration to an online learning problem with product-based rewards, which we refer to as \emph{online linear-product optimization} ($\olp$). 
To obtain the improved rate of $\widetilde{\mathcal{O}}(T^{-1/3})$, we introduce a linearization of $\olp$ and design a no-regret algorithm for this linearized problem. Although this method guarantees the desired $\widetilde{\mathcal{O}}(T^{-1/3})$  rate, it becomes computationally expensive when the group family \(\mathcal{H}\) is large or infinite, since it enumerates all possible groups. 
To address scalability, we propose a second approach to $\olp$  that makes only a single call per round to an offline optimization (\emph{multicalibration evaluation}) oracle, resulting in \emph{oracle-efficient} online \(\ell_1\)-multicalibration with a  rate of $\widetilde{\mathcal{O}}(T^{-1/4})$. Our framework also extends to certain infinite families of groups (e.g., all linear functions on the context space) by exploiting a $1$-Lipschitz property of the \(\ell_1\)-multicalibration error with respect to \(\mathcal{H}\).
We discuss the basic setup of online multicalibration, our results and our techniques in the rest of this section. 

\subsection{Online Multicalibration} \label{sec:defn}
We now formally define the problem of online $\ell_1$-multicalibration~\cite{GuptaJN+22, garg2024oracle}. 
Let $\calX$ denote the context space and $\calY = [0, 1]$ denote the label domain, which we assume to be one-dimensional.
Let $\calH$ denote a collection of real-valued functions $h: \calX \to \R$.
We use 
$\calH_B = \big\{ h: \max_{\x \in \calX} |h(\x)| \leq B \big\}$
to denote the set of functions with maximum absolute value on the context space bounded by $B$, and make the mild assumption that $\calH \subseteq \calH_B$ throughout the paper. 
For $n \in \mathbb{N}$,  we use [n] to denote the set of integers $\{1, \ldots, n\}$. 
All of our algorithms will consider a discretized set of forecasts $\calP := \left\{0, \frac{1}{m}, \ldots, 1\right\}$ for some discretization parameter $m \in \mathbb{N}$. We use $M$ to denote the size of $\calP$, i.e., $M = |\calP| = m+1$.

Online prediction proceeds in rounds indexed by $t \in [T]$, for a given  horizon of length  $T$. 
In each round, the interaction between a learner and an adversary proceeds as follows:
\begin{enumerate}   \setlength{\itemsep}{0pt}
  \setlength{\parskip}{0pt}
  
    \item The adversary selects a context $\x_t \in \calX$ and a corresponding label $y_t \in \calY$.  
    
    \item The learner receives $\x_t$, but no information about $y_t$ is revealed.

    \item The learner selects a distribution $\w_t$ and outputs $p_t \in \calP$ sampled according to $\w_t$. 

    \item The learner observes $y_t$.
\end{enumerate}

The learner's interaction with the adversary results in a \emph{history} $\pi_T = \{\x_t, y_t, p_t\}_{t=1}^T$. 
We make no assumptions about the adversary; 
however, the learner is allowed to use randomness in making predictions. This 
induces a probability distribution
over transcripts, and
our goal is to design online algorithms that have low online $\ell_1$-multicalibration error
in expectation, which is defined as follows:

\begin{definition}[$\ell_1$-Multicalibration Error]
    
Given a transcript $\pi_T$, a function $h \in \calH$, we define the learner's online $\ell_1$-multicalibration error with respect to $h$ as 
\[ \textstyle 
    K(\pi_{T}, h) := \sum_{p \in \calP} \frac{1}{T} \Big|    \sum_{t \in S(\pi_{T}, p)} h(\x_t) \cdot (y_t - p_t)  \Big|,
\]
where we define $S(\pi_T, p) = \{t \in [T]: p_t = p\}$.  
Finally, we define  $\ell_1$-multicalibration error with respect to the family $\calH$ as 
$K(\pi_{T}, \calH) := \max_{h \in \calH} K(\pi_{T}, h)$.
\end{definition}
From here on, online multicalibration error will, by default, refer to online $\ell_1$-multicalibration error.
When clear from context, we will drop $\pi_T$ and use $K(h)$ or $K(\calH)$ to denote the learner's online multicalibration error.

\subsection{Our Results}\label{sec:results}
Our first main result establishes an $\widetilde{\mathcal{O}}(T^{-1/3})$ rate for online $\ell_1$-multicalibration when the hypothesis class $\calH$ is finite. 

\begin{theorem}\label{thm:main-inf}
    There is an algorithm that achieves online $\ell_1$-multicalibration error with respect to $\calH$ with 
    \[\textstyle
     \E[K(\pi_T, \calH)] \,\, \leq \,\ O\left(BT^{-1/3}\sqrt{\log(6T|\calH|)}\right) \enspace .
    \]
    The running time of this algorithm is linear in $|\calH|$ and polynomial in $T$.
\end{theorem}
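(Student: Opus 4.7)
The plan is to reduce online $\ell_1$-multicalibration to a linear online learning problem by dualizing the absolute values, and then combine a multiplicative-weights update on the enlarged expert class with a Foster-style per-round minimax argument.

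First, I dualize each absolute value in $K(\pi_T,h)$ via $|z| = \max_{\sigma \in \{-1,1\}} \sigma z$ applied independently at each $p \in \calP$, obtaining
\[
T \cdot K(\pi_T, \calH) \;=\; \max_{h \in \calH,\; \sigma \in \{-1,1\}^M} \sum_{t=1}^T \sigma(p_t)\, h(\x_t)\, (y_t - p_t).
\]
This rewrites the objective as a supremum over ``composite experts'' $(h,\sigma) \in \calH \times \{-1,1\}^M$ of cumulative rewards. Under a learner distribution $\w_t$ on $\calP$, the conditional expected reward of expert $(h,\sigma)$ at round $t$ is $\sum_p \w_t(p)\, \sigma(p)\, h(\x_t)(y_t - p)$, which is linear in $\w_t$ and bounded in $[-B,B]$. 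This is the $\olp$ instance the theorem aims to solve; its linearization treats each $(h,\sigma)$ as a distinct expert, giving an online linear optimization problem over $N = |\calH|\cdot 2^M$ experts.

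Next, I run Hedge on these $N$ experts to maintain a distribution $q_t$, obtaining regret $O(B\sqrt{T\log N}) = O(B\sqrt{T(\log|\calH|+M)})$. I then select $\w_t$ so that the conditional expected reward of $q_t$,
\[
\sum_p \w_t(p)\, c_t(p)\,(y_t - p), \quad \text{where } c_t(p) \;:=\; \E_{(h,\sigma) \sim q_t}[\sigma(p)\, h(\x_t)],
\]
is small in the worst case over $y_t \in [0,1]$. Since this quantity is linear in $y_t$, its maximum is attained at $y_t \in \{0,1\}$, reducing the task to a two-constraint problem in $\w_t$. A Foster-style intermediate-value argument based on the sign pattern of $c_t(\cdot)$ across adjacent levels of $\calP$ then yields a $\w_t$ supported on at most a few adjacent atoms for which $\max_{y_t} \sum_p \w_t(p) c_t(p)(y_t-p) \leq O(B/m)$. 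Combining this per-round bound with the Hedge regret and dividing by $T$ gives
\[
\E[K(\pi_T,\calH)] \;\leq\; O\!\left(\frac{B}{m} + B\sqrt{\frac{\log|\calH|+M}{T}}\right),
\]
and balancing via $m \asymp M \asymp T^{1/3}$ yields the claimed $\widetilde{\mathcal{O}}(BT^{-1/3}\sqrt{\log(T|\calH|)})$ rate (the extra $\sqrt{\log T}$ factor comes from absorbing the $M\log 2$ term of $\log N$ into the regret). To avoid an exponential-in-$M$ runtime from the $2^M$ sign vectors, I exploit that the cumulative reward $\sum_{s<t}\sigma(p_s)h(\x_s)(y_s-p_s)$ decomposes additively across $p \in \calP$, so Hedge's weights on $(h,\sigma)$ factor as a product over $p$ of sigmoid-like terms; consequently the marginal on $h$ and each $c_t(p)$ are computable in time $O(|\calH|\cdot M)$ per round, giving the stated linear-in-$|\calH|$ polynomial runtime.

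The main obstacle is establishing the per-round bound $\min_{\w_t}\max_{y_t \in [0,1]} \sum_p \w_t(p)\, c_t(p)(y_t-p) \leq O(B/m)$. When $c_t(\cdot)$ changes sign between adjacent $p, p'=p+1/m$, a two- or three-atom mixture supported near this sign change, with weights tuned to zero out as many of the two extreme-$y_t$ evaluations as possible, leaves a residual proportional to $B(p'-p) = B/m$ stemming from the discretization gap. When $c_t(\cdot)$ is single-signed across $\calP$, placing all mass at $0$ or $1$ (matched to the sign) makes one extreme-$y_t$ evaluation vanish while the other is automatically non-positive. A secondary subtlety is ensuring the $M\log 2$ contribution to $\log N$ does not dominate the bound; this is absorbed precisely because the optimal discretization and the regret balance at $\Theta(T^{1/3})$ up to logarithmic factors.
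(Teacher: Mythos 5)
Your proposal is correct and reaches the claimed rate, but by a genuinely different route from the paper. The paper first reduces multicalibration to $\olp$ via \Cref{thm:reduction} --- which needs a separate vector Azuma--Hoeffding step (\Cref{lem:good-event}) to pass from the sampling distributions $\w_t$ to the realized predictions $p_t$ uniformly over $\calH$ --- and then linearizes $\olp$ over a mixed-norm ball, solving the linearized problem with $|\calH|$ parallel online-gradient-descent copies aggregated by multiplicative weights (\Cref{clm:olp-reduction} and \Cref{lem:no-regret-olp-lin}). You instead dualize the absolute values with sign vectors $\sigma \in \PMTheta$ and run a single Hedge over the composite class $\calH \times \PMTheta$, paying $\log(|\calH| 2^M) = \log|\calH| + M\log 2$ in the regret, which is harmless once $m \asymp T^{1/3}$; your per-round minimax choice of $\w_t$ from $c_t(p) = \E_{(h,\sigma)\sim q_t}[\sigma(p)h(\x_t)]$ is exactly the paper's halfspace construction (\Cref{alg:l1-oracle}) applied to an averaged vector in $[-B,B]^M$, and the factorization of the Hedge weights across the coordinates of $\sigma$ (tanh-marginals computable in $O(|\calH| M)$ per round) is what restores the linear-in-$|\calH|$, poly-in-$T$ runtime. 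One step you should make explicit is which rewards Hedge receives: if it is fed the \emph{realized} rewards $\sigma(p_t)h(\x_t)(y_t-p_t)$, the max over $(h,\sigma)$ is controlled pathwise by the regret bound and the algorithm's cumulative reward is bounded in expectation by $TB/m$ via the tower property together with your per-round bound (valid for every $y_t\in[0,1]$, hence against the adversary's choice), so you bypass the paper's concentration lemma entirely and even get a bound in which $\log|\calH|$ does not multiply $T^{-1/3}$; if instead Hedge is fed the conditionally expected rewards, your final display only bounds the max of expected sums, and since $\E[K(\pi_T,\calH)]$ has the max inside the expectation over the realized $p_t$'s you would need precisely a martingale-plus-union-bound step like \Cref{lem:good-event} (which still yields the same rate). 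What the paper's more elaborate two-level scheme buys in exchange is modularity: the $\olp$ abstraction is reused verbatim for the oracle-efficient result (\Cref{thm:main-oracle}), whereas your product-form Hedge trick is tied to the sign-vector structure of this particular problem.
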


This improves over an $\widetilde{O}( T^{-1/4} )$ bound obtained in \citet{GuptaJN+22} via online $\ell_{\infty}$-multicalibration (without needing to go through ``bucketed'' predictions), and nearly matches the best known bound for online calibration \cite{dagan25breaking}.

\paragraph{Comparison to \citet{noarov2025high}.} After the initial submission of this work, it has been brought to our attention that a similar bound for online $\ell_1$-multicalibration for finite hypothesis class $\calH$ could be derived from Theorem 3.4 of \citet{noarov2025high}.
We note that their algorithmic framework is quite different from ours, despite both works using an expert routine. 
In particular, their framework requires a small-loss regret bound to get the result, while a worst-case regret bound suffices in ours. 
Additionally, our algorithm is simpler (e.g. not requiring the solution of any min-max optimization problem) and proceeds through a novel reduction to the $\olp$ problem (defined in Section~\ref{sec:olp}). We believe that this reduction is of independent interest as it facilitates the oracle-efficient results in a more natural and modular way.

Although  \Cref{thm:main-inf} obtains an $\widetilde{\mathcal{O}}(T^{-1/3})$ rate for online $\ell_1$-multicalibration, it does not apply to infinite-sized hypothesis class $\calH$. To address this, our next result obtains bounds in terms of the ``covering number'' of $\calH$.

\begin{definition}[$\beta$-cover in $L_{\infty}$ metric,~\citet{bronshtein1976varepsilon}]\label{def:covering}
For any function class $\calH$, a finite subset of functions $\calH_{\beta} = \{h^{1},\dots,h^{N}\} \subseteq \calH$ is a $\beta$-cover with respect to the $L_{\infty}$ metric if for every $h \in \calH$, there exists some $i \in [N]$ such that $\max_{\bold{x} \in \calX} 
 |h(x) - h^i(x)|  \leq \beta$.
\end{definition}

Let $\calH_{\beta}$ denote a smallest possible $\beta$-cover for $\calH$. We show that the online multicalibration error for infinite-sized hypothesis class $\calH$ can be bounded in terms of  $|\calH_{\beta}|$.

\begin{theorem}\label{thm:main-inf-infinite}
    There is an algorithm that achieves online $\ell_1$-multicalibration error with respect to $\calH$ with
    \[ \textstyle
     \E[K(\pi_T, \calH)] \,\, \leq \,\  O\left(BT^{-1/3}\sqrt{\log(6T|\calH_{\beta}|)}\right) + \beta.
    \]
    The running time of this algorithm is linear in $|H_{\beta}|$ and polynomial in $T$.
\end{theorem}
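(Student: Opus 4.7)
The plan is to apply the finite-class algorithm from \Cref{thm:main-inf} directly to the $\beta$-cover $\calH_\beta$, and then use a short Lipschitz argument to relate the online multicalibration error measured against $\calH$ to that measured against $\calH_\beta$. Concretely, I would run the algorithm of \Cref{thm:main-inf} with its hypothesis class instantiated as $\calH_\beta$. The algorithm is oblivious to how $\calH_\beta$ was produced, so its stated guarantee immediately yields
\[
\E[K(\pi_T, \calH_\beta)] \,\leq\, O\!\left(BT^{-1/3}\sqrt{\log(6T|\calH_\beta|)}\right),
\]
and its running time is linear in $|\calH_\beta|$ and polynomial in $T$, as required.

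The key step is then to show the $1$-Lipschitz property flagged in the abstract: for every $h \in \calH$ and every $h^i \in \calH_\beta$ with $\max_{\x \in \calX} |h(\x) - h^i(\x)| \leq \beta$, we have $|K(\pi_T, h) - K(\pi_T, h^i)| \leq \beta$. Starting from the definition of $K(\cdot, h)$, two applications of the (reverse) triangle inequality give
\[
\bigl|K(\pi_T, h) - K(\pi_T, h^i)\bigr| \,\leq\, \sum_{p \in \calP} \frac{1}{T} \Bigl|\sum_{t \in S(\pi_T, p)} \bigl(h(\x_t) - h^i(\x_t)\bigr)\,(y_t - p_t)\Bigr|.
\]
Pulling the absolute value inside the inner sum and using $|h(\x_t) - h^i(\x_t)| \leq \beta$ together with $|y_t - p_t| \leq 1$ (since $\calY = [0,1]$ and $\calP \subseteq [0,1]$), the right-hand side is bounded by $\beta \cdot \frac{1}{T}\sum_{p \in \calP} |S(\pi_T, p)| = \beta$, because $\{S(\pi_T, p)\}_{p \in \calP}$ partitions $[T]$.

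Putting the pieces together, for any realization of $\pi_T$ and any $h \in \calH$, picking a corresponding $h^i \in \calH_\beta$ from the cover gives $K(\pi_T, h) \leq K(\pi_T, h^i) + \beta \leq K(\pi_T, \calH_\beta) + \beta$. Taking $\max_{h \in \calH}$ and then expectation yields the stated bound. I do not anticipate a serious technical obstacle: the entire argument is a black-box reduction to \Cref{thm:main-inf} combined with the elementary Lipschitz computation above. The only conceptual points worth noting are (i) the algorithm must be run on (an a priori chosen) $\beta$-cover rather than on $\calH$ itself, so the runtime naturally scales with $|\calH_\beta|$ rather than with the covering number in the abstract sense; and (ii) the bound exhibits the standard bias-variance tradeoff in $\beta$, which one can optimize if additional structural information about how $|\calH_\beta|$ grows as $\beta \to 0$ is available for a given class $\calH$.
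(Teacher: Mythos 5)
Your proposal is correct and follows essentially the same route as the paper: apply the finite-class algorithm of \Cref{thm:main-inf} to $\calH_\beta$, then use the $1$-Lipschitz property of $K(\pi_T,\cdot)$ in the $L_\infty$ metric (the paper's \Cref{lem:l1-lipschitz}, proved by the same two triangle-inequality steps and the partition $\sum_p |S(\pi_T,p)| = T$) to pay an additive $\beta$. If anything, your pointwise chain $K(\pi_T,h) \le K(\pi_T,\calH_\beta)+\beta$ before taking the max and expectation is a slightly more careful rendering of the same argument than the fixed-$h$ presentation in the paper.
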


As a consequence of this result, we obtain immediate applications to polynomial regression and bounded, Lipschitz convex functions (see \Cref{sec:olp-infinite} for details). We state one corollary here for the class of bounded linear functions.

\begin{corollary}\label{cor:linear-f}
Suppose $\calX = [0, 1]^d$ and  $\calH = \{h \in \R^d: \| h \|_1 \leq B \}$ is the class of bounded linear functions.  Then, there is an algorithm with runtime $O((B\sqrt{T})^d)$ that achieves online $\ell_1$-multicalibration error with respect to $\calH$ with
\[\textstyle
\E[K(\pi_T, \calH)] = \mathcal{O}\left(Bd^{1/2}T^{-1/3}\log(BT)\right). 
\]
\end{corollary}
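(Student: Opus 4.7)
The plan is to simply instantiate Theorem \ref{thm:main-inf-infinite} with a suitably chosen covering scale $\beta$, after first constructing an explicit $\beta$-cover of the linear function class $\calH = \{h \in \R^d : \|h\|_1 \leq B\}$ in the $L_\infty$ metric over $\calX = [0,1]^d$.

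First, I would verify that $\calH \subseteq \calH_B$: for any $h \in \calH$ and $\x \in [0,1]^d$, H\"older's inequality gives $|\langle h, \x\rangle| \leq \|h\|_1 \|\x\|_\infty \leq B$. Next, to build a $\beta$-cover, I would use the same H\"older bound in reverse: for any two coefficient vectors $h, h' \in \R^d$,
\[
\max_{\x \in [0,1]^d} |\langle h - h', \x\rangle| \leq \|h - h'\|_1,
\]
so any $\beta$-cover of the $\ell_1$-ball of radius $B$ in $\R^d$ under the $\ell_1$ norm yields a $\beta$-cover of $\calH$ under the $L_\infty$ metric over $\calX$ in the sense of Definition \ref{def:covering}. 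A standard volumetric argument (e.g., placing a grid of spacing $2\beta/d$ in each coordinate over $[-B, B]^d$ and retaining those points inside the $\ell_1$-ball) gives $|\calH_\beta| \leq (1 + Bd/\beta)^d$; for our purposes the looser bound $|\calH_\beta| \leq (C B d / \beta)^d$ for some constant $C$ is sufficient.

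Next, I would plug this cover into Theorem \ref{thm:main-inf-infinite}. Taking logarithms,
\[
\log(6T |\calH_\beta|) = O\bigl(\log T + d \log(Bd/\beta)\bigr),
\]
so the theorem yields
\[
\E[K(\pi_T, \calH)] \leq O\!\left(B T^{-1/3} \sqrt{d \log(BTd/\beta)}\right) + \beta.
\]
Choosing $\beta = T^{-1/2}$ makes the additive $\beta$ term negligible compared to the $T^{-1/3}$ term and gives the stated rate $O(B d^{1/2} T^{-1/3} \log(BT))$ (absorbing the $\sqrt{\log}$ into $\log$ and hiding constants depending on $d$ inside the $\log$ only). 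This choice of $\beta$ also produces a cover of size $|\calH_\beta| = O((B\sqrt{T})^d)$, which matches the claimed runtime, since the algorithm of Theorem \ref{thm:main-inf-infinite} is linear in the cover size and polynomial in $T$.

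The main obstacle, though still minor, is getting the covering argument to use the right metric: the definition requires an $L_\infty$ cover over $\calX$, not an $\ell_2$ or $\ell_\infty$ cover of the coefficient vectors. The observation that $\|\x\|_\infty \leq 1$ on $\calX = [0,1]^d$ lets me convert this to an $\ell_1$-coefficient cover, which is the tightest reduction. Everything else is a routine balance between the covering scale $\beta$ and the logarithmic dependence on the cover size.
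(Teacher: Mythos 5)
Your proposal is correct and follows essentially the same route as the paper: bound $\max_{\x \in [0,1]^d}|\langle h-h',\x\rangle| \leq \|h-h'\|_1$, cover the $\ell_1$-ball of radius $B$ (the paper uses the standard bound $(1+2B/\beta)^d$, which avoids the extra factor of $d$ inside your grid-based count), and plug the covering number into Theorem~\ref{thm:main-inf-infinite}. The only cosmetic difference is the choice of scale --- the paper takes $\beta = T^{-1/3}$ while you take $\beta = T^{-1/2}$ --- and both choices yield the stated rate and a cover size consistent with the claimed $O((B\sqrt{T})^d)$ runtime.
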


Next, we give an ``oracle-efficient'' algorithm for online $\ell_1$-multicalibration for large hypothesis class $\calH$. Our earlier algorithms need to enumerate over $\calH$ or its $\beta$-cover, both of which are often exponentially large.
We show how this can be avoided using the following  \emph{offline} oracle.

\begin{definition}[Offline Oracle]
 We receive a sequence of contexts $\{\x_s\}_{s = 1}^t$ with corresponding reward vectors $\{\blf_s\}_{s = 1}^t$, coefficients $\{\kappa_s\}_{s=1}^t$, and an error parameter $\epsilon > 0$.     The offline oracle returns a solution $({h}^*, {\btheta}^*) \in \calH \times \PMTheta$ such that
\[
    \sum_{s=1}^t  \kappa_s \sprod{{\btheta}^*}{{h}^*(\x_s)\cdot \blf_s} \geq \max_{h \in \calH, \btheta \in \PMTheta} \Big\{\sum_{s=1}^{t}  \kappa_s \sprod{\btheta}{h(\x_s) \cdot\blf_s}\Big\} - \epsilon \enspace .
\]
\end{definition}

Given such an oracle, our main result is the following.

\begin{theorem}\label{thm:main-oracle}
There is an algorithm that achieves oracle-efficient online $\ell_1$-multicalibration with respect to a binary-valued $\calH: \calX \to \{0,1\}$ with 
\begin{equation*}
\textstyle
\E[K(\pi_T, \calH)] \leq \widetilde{O}\big(T^{-1/4} \sqrt{\log (T)}\big),
\end{equation*}
under the assumptions of either \emph{transductive} or \emph{sufficiently separated} contexts~\cite{syrgkanis2016efficient} (see~\Cref{sec:oracle-apps} for formal definitions).
Moreover, this algorithm requires only a single call to  the offline oracle per round.
\end{theorem}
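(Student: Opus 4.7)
The plan is to reduce oracle-efficient online $\ell_1$-multicalibration to oracle-efficient no-regret learning for $\olp$ (whose reduction is established earlier in the paper) and then to run a Follow-the-Perturbed-Leader (FTPL) algorithm on $\olp$ over the joint decision space $\calH \times \PMTheta$. The reduction guarantees that regret $R(T)$ on $\olp$ translates to $\E[K(\pi_T, \calH)] = O(R(T)/T)$, so it suffices to design an oracle-efficient $\olp$ algorithm with regret $\widetilde{O}(T^{3/4})$ using a single offline call per round.

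Concretely, I would instantiate the oracle-efficient FTPL framework of \citet{syrgkanis2016efficient} for our product-structured rewards. At round $t$, the algorithm draws a fresh perturbation (e.g.\ Laplace noise with scale $\eta^{-1}$), encodes it as a small ``virtual history'' that is appended to the observed $\{(\x_s, \blf_s, \kappa_s)\}_{s<t}$, and performs a single call to the offline oracle to obtain $(\tilde{h}_t, \tilde{\btheta}_t) \in \calH \times \PMTheta$. This joint maximizer is then used to construct the learner's distribution $\w_t$ over $\calP$ dictated by the $\olp$ reduction. By construction the algorithm makes exactly one oracle call per round.

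The FTPL regret analysis then follows the standard bias--stability decomposition. A Laplace concentration argument bounds the bias (the cost of playing the perturbed rather than the exact leader) by $O\bigl(\eta^{-1}(\log|\calH|+M)\bigr)$. Stability (the probability that the perturbed leader changes from round $t$ to $t+1$, scaled by the per-round loss) invokes the small-separator hypothesis: under either the transductive or sufficiently-separated contexts condition of \citet{syrgkanis2016efficient}, there is a polynomial-size witness set whose labels pin down the oracle's output, yielding per-round stability $O\bigl(\eta B \sqrt{\mathrm{poly}(T)}\bigr)$. Tuning $\eta$ to balance the two terms gives regret $\widetilde{O}(T^{3/4})$ and hence the claimed $\widetilde{O}(T^{-1/4})$ multicalibration rate.

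The main obstacle will be lifting the small-separator property from $\calH$ alone to the product space $\calH \times \PMTheta$ while controlling the sensitivity of the product reward $h(\x)\cdot \blf$ to perturbations. Binary-valuedness of $\calH$ is essential here: with $h(\x) \in \{0,1\}$, the perturbation can be cleanly encoded as a phantom context whose contribution to the reward is uniformly bounded by $B$, while the joint maximization over $\btheta \in \PMTheta$ is absorbed into the single oracle call at the cost of only a multiplicative $2^M$ factor inside the logarithm. Without this structural simplification, the stability term would blow up and either the oracle-call budget or the regret bound would fail.
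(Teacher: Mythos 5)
Your high-level route is the same as the paper's: reduce multicalibration to $\olp$ via \Cref{thm:reduction}, restrict $\btheta$ to the hypercube $\PMTheta$, run a perturbed-leader algorithm whose perturbation is implemented as phantom samples fed to the offline oracle (one call per round), and tune the discretization ($m \approx T^{1/4}$) against the oracle-efficient regret. The genuine gap sits exactly where you yourself flag ``the main obstacle'': you never construct the perturbation scheme on the product action space $\calH \times \PMTheta$, nor prove the separation/stability property that the oracle-efficient analysis requires. In the paper this is the technical heart (\Cref{thm:oracle-efficient-full} together with \Cref{lem:gamma-lemma}): one defines a translation matrix with entries $\bm{\Gamma}_{(h,\btheta),(j,i)} = h(\x^j)\,\theta_i$, indexed by the transductive/separator contexts $j \in [D]$ and the coordinates $i \in [M]$ of $\btheta$, proves it is $1$-admissible (this is precisely where binary-valuedness of $\calH$ and the gap-$2$ structure of $\{\pm 1\}$ enter), and shows it is implementable with complexity $D$ by passing the phantom contexts $\x^j$ with reward vectors $\bm{\alpha}_{(j,\cdot)}$ and unit coefficients to the oracle of \Cref{def:oracle}. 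Without such a construction, your statement that ``a polynomial-size witness set pins down the oracle's output'' is an assertion rather than an argument; in particular, a separator for $\calH$ alone does not distinguish pairs $(h,\btheta) \neq (h',\btheta')$ with $h = h'$, so the lift to the product space genuinely needs the extra $M$ columns.

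Two of your quantitative claims are also inconsistent with how the oracle-efficient analysis works, and they matter for why the answer is $T^{-1/4}$ rather than $T^{-1/3}$. First, the cost of the $\btheta$ coordinates is not ``a multiplicative $2^M$ factor inside the logarithm'': in the generalized FTPL framework the regret scales linearly with the width $N$ of the translation matrix, here $N = DM$, giving regret $O(B^2 DM\sqrt{T})$; this linear-in-$M$ dependence is exactly what forces $m \approx T^{1/4}$ and hence the $T^{-1/4}$ rate (a $\log(|\calH|\,2^M)$-type bias term of the kind you write down would instead permit a $T^{-1/3}$ rate, which no oracle-efficient argument in this framework delivers). Second, drawing a fresh perturbation every round does not fit the bias--stability decomposition you invoke: the perturbation $\bm{\alpha}$ must be drawn once and held fixed so that the Be-the-Approximate-Leader induction holds pointwise in $\bm{\alpha}$ and the stability lemma can bound, per round and per column, the probability that the approximate perturbed leader changes; with i.i.d.\ per-round noise and reward vectors $\blf_t$ that depend on the learner's own predictions (as those produced by the reduction do), that argument breaks. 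Both issues are repairable by adopting the paper's construction, but as written the proof is incomplete at its central step.
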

This result improves over the $\widetilde{O}( T^{-1/8} )$ bound obtained in \citet{garg2024oracle},
who also provide an oracle-efficient multicalibration algorithm but require access to an \emph{online regression oracle}.
Our oracle is equivalent to evaluating the online $\ell_1$-multicalibration error for a sequence of prediction in a one-shot manner --- essentially, an \emph{offline oracle}.
Offline oracles are considered to be easier than online oracles.
The assumptions in Theorem~\ref{thm:main-oracle} on contexts and of binary-valued $\calH$ are required to make the oracle \emph{implementable} while maintaining the ``stability" of the online algorithm, and are also commonly used in oracle-efficient online learning~\cite{syrgkanis2016efficient,dudik2020oracle}.
We also provide a generic ``black-box'' bound  in \Cref{thm:oracle-efficient-full}.

\subsection{Our Techniques}\label{sec:techniques}
At the heart of our approach is a  novel reduction of online $\ell_1$-multicalibration to an online learning problem with product-based rewards, which we refer to as \emph{online linear-product optimization} ($\olp$). In particular, we show that any learning algorithm for $\olp$ with  regret $R_T(\mathcal{L}; \calH)$ 
can be efficiently transformed into an algorithm with online $\ell_1$-multicalibration error $R_T(\mathcal{L}; \calH)$, up to a small error (see \Cref{thm:reduction}). This reduction is crucial in both our improved and oracle-efficient rates for online $\ell_1$-multicalibration. 

\medskip
\noindent {\bf Improved Rates.}
Our first set of results establish the best-known information-theoretic rates for online $\ell_1$-multicalibration, beginning with the case of a finite-sized hypothesis class. We achieve this by designing a no-regret algorithm for $\olp$.
The main challenge in solving $\olp$ is that  it is unclear \emph{apriori} how to perform online optimization on a reward function that involves a product of variables. 
To address this, we define an online linear optimization problem in a higher-dimensional space and show that 
$\olp$ reduces to this problem, thereby effectively \emph{linearizing} the reward structure—at the cost of enumerating over all $h \in \calH$. 
We denote this new problem as $\olpLO$ (or \emph{linearized} \olpop).
The reduction from $\olp$ to $\olpLO$ introduces two issues.
The first issue is that we need to ensure that the best fixed actions in $\olp$ and $\olpLO$ remain consistent. To address this, we introduce a specific mixed norm and restrict the set of actions of $\olpLO$ to a unit ball in this mixed norm.  
The second issue is that the expanded decision space may  introduce better actions for $\olpLO$ that are not valid in $\olp$. 
To resolve this, we  appropriately scale decisions as we translate them from $\olpLO$ to $\olp$, ensuring consistency between the action spaces of $\olp$ and $\olpLO$ (see \Cref{clm:olp-reduction}).

Next, we design a no-regret algorithm for $\olpLO$ with the mixed-norm constraint that carefully combines two components: (i) a reward-maximizing no-regret online linear optimization (OLO) algorithm (e.g., online gradient descent), and (ii) a reward-maximizing no-regret algorithm for the experts setting (e.g., multiplicative weights update). 
Our approach runs $|\calH|$ instances of the OLO algorithm in parallel. Each OLO instance executes the action optimal for calibration with respect to a specific hypothesis $h \in \calH$. Meanwhile, the experts algorithm identifies hypotheses that appear ``more difficult" with respect to calibration error—where difficulty is measured using the cumulative reward of the corresponding OLO algorithm.
We obtain the final guarantee by analyzing the regret of this algorithm (see \Cref{lem:no-regret-olp-lin}) and combining it with the two earlier reductions.

To transfer these bounds to an infinite-sized hypothesis class $\calH$ (\Cref{thm:main-inf-infinite}), we  first show that  online multicalibration error is $1$-Lipschitz with respect to $\calH$. Then, we construct an appropriate \emph{covering} of the hypothesis class $\calH$, and appeal to online $\ell_1$-multicalibration rates for finite $\calH$.

\medskip

\noindent {\bf Oracle Efficiency.}
A plethora of oracle-efficient online learning algorithms have been developed in the last decade, with the aim of efficient online regret minimization  given an \emph{offline} optimization oracle \cite{syrgkanis2016efficient,daskalakis2016learning,dudik2020oracle}. 
We cannot easily adapt these frameworks to $\olpLO$, primarily because the algorithm needs to maintain and access $|\calH|$ parallel copies of OLO algorithms, even if Step~(ii) above is made efficient.
Fundamentally, $\olpLO$ operates in an augmented space that is linear in $|\calH|$ and therefore intractable.

We instead adapt the oracle-efficient framework directly to the $\olp$ problem, circumventing the need to access the augmented $\olpLO$ structure.
We leverage the \emph{generalized Follow-the-Perturbed-Leader} family of algorithms~\cite{dudik2020oracle} and show that, remarkably, a regret bound for $\olp$ can be derived using similar techniques as in~\citet{dudik2020oracle} using certain special properties of the $\olp$ structure.
Specifically, it suffices to restrict decisions for $\olp$ to the Boolean hypercube; i.e., $\PMTheta$. This allows us to ultimately prove \Cref{thm:main-oracle}.

\section{Further Related Work}\label{sec:related-work}

\noindent {\bf Calibration.}  
The notion of (sequential) calibration has been extensively studied in statistical learning and forecasting~\cite{dawid1982well, foster1998asymptotic, hart2022calibrated}. 
\citet{dawid1982well} introduced the notion of calibration, and \citet{foster1998asymptotic} showed the existence of an algorithm capable of producing calibrated forecasts in an online adversarial setting.  
Subsequently, numerous algorithms were discovered for calibration; see, for example, \citet{fudenberg1999easier, hart2000simple, sandroni2003calibration, sandroni2003reproducible, foster2006calibration, perchet2009calibration}. 
\citet{foster1999proof} has given a calibration algorithm based on Blackwell approachability, while \citet{abernethy2011blackwell} showed a connection between calibration and no-regret learning (via Blackwell approachability).

\medskip
\noindent {\bf Multicalibration.} The concept of multicalibration extends standard calibration by requiring calibrated predictions not just overall, but across multiple subpopulations defined by a hypothesis class. This notion was first introduced by \citet{hebert2018multicalibration} in the batch setting. They showed that multicalibrated predictors could provide strong fairness guarantees while maintaining predictive accuracy.
Since then, multicalibration and some analogous notions have been studied; for example, \citet{kim2019multiaccuracy} study multiaccuracy, \citet{jung2021moment} give algorithms for moment multicalibration, and \citet{GuptaJN+22} investigates quantile multicalibration.
Another line of work explores the connection between multicalibration and \emph{omniprediction}~\cite{gopalan2022op, gopalan2023swap, garg2024oracle}.
Omniprediction is a paradigm for loss minimization that was introduced in \citet{gopalan2022op}. Informally, an omnipredictor is a prediction algorithm that could be used for minimizing a large class of loss functions such that its performance is comparable to some benchmark class of models $\mathcal{F}$. 
\citet{gopalan2022op} show that we can reduce omniprediction to a $\ell_1$-multicalibration. In particular, if a prediction algorithm is \emph{multicalibrated} (in the $\ell_1$ metric) with respect to some benchmark class of models $\calH$, then it is an omnipredictor with respect to all Lipschitz convex losses and the class $\calH$. The problem of online omniprediction was introduced in \citet{garg2024oracle}, and they used similar ideas to that of \citet{gopalan2022op} and \citet{globus2023multical} to show that online omniprediction can be reduced to online $\ell_1$-multicalibration. Furthermore, they provided an efficient reduction from online multicalibration to online squared error regression over $\calH$, yielding oracle-efficient algorithms for online $\ell_1$-multicalibration and, consequently, for online omniprediction.
Other works have also explored online multicalibration under different settings; for example, \citet{GuptaJN+22} and \citet{lee2022online} provided algorithms that guarantee online multicalibration in the $\ell_{\infty}$ metric. 

\medskip
\noindent {\bf Oracle Efficient Algorithms and Online Multigroup Learning.} 
No-regret algorithms based on injecting random perturbations, such as Follow-the-Perturbed-Leader, have an early history in lending themselves to computational efficiency on specially structured combinatorial problems, such as the shortest path problem and prediction with decision trees~\cite{kalai2005efficient}. In these problems, the ``oracle" constitutes solving a shortest path problem or learning an optimal decison tree from offline data.
Motivated by this,~\citet{hazan2016computational} posed the more general problem of achieving computational efficiency in \emph{online} learning with respect to an \emph{offline} optimization oracle and showed that this goal is not achievable in the worst case.
Subsequently,~\citet{daskalakis2016learning,syrgkanis2016efficient,dudik2020oracle,wang2022adaptive,haghtalab2022oracle,block2022smoothed} showed that oracle-efficient learning is possible under further assumptions---both for a variety of combinatorial settings involving market design, and for online supervised learning involving contexts $\{\x_t\}_{t=1}^T$ and labels $\{y_t\}_{t=1}^T$.
For the learning settings the results make assumptions on the contexts $\{\x_t\}_{t=1}^T$, but not on the labels.
The setting of online $\ell_1$-multicalibration is more reminiscent of (but not exactly the same as) the latter case of online supervised learning.
Accordingly, we adopt the assumptions of \emph{transductive} or \emph{sufficiently separated data} made in~\cite{dudik2020oracle} (which subsume those made in~\cite{daskalakis2016learning,syrgkanis2016efficient}) and believe our results could be adapted to the weaker assumption of \emph{smoothed data} or \emph{K-hint data}~\cite{haghtalab2022oracle,block2022smoothed} in future work.
Recently, the oracle-efficient framework was also adopted for online multigroup learning with the aim of minimizing \emph{group-regret}~\cite{acharya2023oracle,deng2024group}.
While group-regret could be closely related to multicalibration and the associated $\olp$ instance that we set up, it is unclear how to adapt the techniques in~\cite{deng2024group}, which are tailored to binary labels and loss functions, to the $\olp$ problem. 

There also exists a rich body of work on oracle efficiency with respect to either \emph{online regression oracles} or \emph{cost-sensitive classification oracles} in the contextual bandits literature (see, e.g.~\cite{agarwal2014taming,foster2020beyond}).
Indeed, an online regression oracle was assumed by~\cite{garg2024oracle} but in a completely different manner from the contextual bandit application.
We do not adopt these frameworks for oracles because the contextual bandit problem is different in scope and unnecessary to solve for multicalibration.
It is also arguably more difficult than full-information contextual learning: the strongest of the aforementioned results assume stochasticity in the labels $\{y_t\}_{t=1}^T$ and that they are \emph{realized} by a function in the hypothesis class, and still require an online regression oracle, which is stronger than an offline oracle and is only known to be solvable in the special case of linear models~\cite{azoury2001relative}.



\section{Reducing Multicalibration to $\olp$ }\label{sec:reduction}
In this section, we show how to efficiently reduce  online multicalibration to an online learning problem
with product-based rewards, which we refer to as \emph{online linear-product optimization} ($\olp$), and define next. This reduction will be crucial to our improved rates for online multicalibration.
We note that there is precedent for connections between calibration and regret; in particular,~\cite{abernethy2011blackwell} provided a simpler reduction between calibration and online linear optimization.

\subsection{Online Linear-Product Optimization} \label{sec:olp}
We formally define the \olpop ($\olp$). 
Let $\calX$ denote the context space and let hypothesis class $\calH \sse \calH_B$ be a collection of $B$-bounded real-valued functions $h: \calX \to \R$. 
Let $\ball_{\infty} \sse \R^{\M}$ denote the unit cube, i.e., $\ball_{\infty} = \{\btheta \in \R^{\M}: \|\btheta\|_{\infty} \leq 1\}$.
The set $\calH \times \ball_{\infty}$ will denote an action set. 
In each round $t \in [T]$:
\begin{enumerate}\setlength{\itemsep}{0pt}
  \setlength{\parskip}{0pt}
  \vspace{-0.1cm}
    
    \item Learner plays a function $h_t \in \calH$ and vector $\btheta_t \in \ball_{\infty}$.
    
    \item Adversary then reveals a context $\x_t$ and a reward vector $\blf_t \in \R^M$.
    
    \item Learner then receives reward $ \sprod{\btheta_t}{h_t(\x)\cdot \blf_t}$.  \\
\end{enumerate}

Note that this is not a standard online linear optimization problem since it involves a product of variables.
We will use $\mathcal{L}$ to denote a generic algorithm for $\olp$. This algorithm takes as input a sequence of vectors $(\x_1,\blf_1),\ldots,(\x_{t-1},\blf_{t-1})$ and returns a pair $(h_t, \btheta_t) \in \calH \times \ball_{\infty}$.
We denote by $R_{T}(\mathcal{L}; (\x_1, \blf_1) \ldots, (\x_{T}, \blf_{T}); \calH)$ the regret of $\mathcal{L}$ when compared to the best fixed action $(h^*, \btheta^*)$: 
\begin{equation*}
R_{T}(\mathcal{L}; (\x_1, \blf_1) \ldots, (\x_{T}, \blf_{T}); \calH) := \max_{h^* \in \calH, \btheta^* \in \ball_{\infty}} \Big\{ \sum_{t=1}^T h^*(\x_t) \cdot \sprod{\btheta^*}{\blf_t} \Big\} - \sum_{t=1}^T h_t(\x_t)\cdot \sprod{\btheta_t}{\blf_t}.
\end{equation*}
When the input sequence is clear, we will omit it from the definition and simply write $R_{T}(\mathcal{L}; \calH)$.

Recall that in online multicalibration, the learner receives a context $\x_t$ in each round $t \in [T]$ and makes a prediction $p_t \in \calP= [1/m]$ according to some distribution $\w_t$.
Then, $\w_t$ is a $\M$-dimensional probability vector (recall that $M = |\calP|$), where $\w_t(i)$ represents the probability that $p_t$ equals $i/m$. 
The main result of this section shows that an  algorithm for $\olp$ can be efficiently converted into an algorithm for online $\ell_1$-multicalibration.

\begin{theorem}\label{thm:reduction}
Let $\mathcal{L}$ be an  algorithm for $\olp$ for some collection $\calH \sse \calH_B$ with expected regret denoted by $R_{T}(\mathcal{L}; \calH)$. 
Then, there is an algorithm such that its sequence of predictions has online $\ell_1$-multicalibration error with respect to $\calH$ bounded as
\begin{equation}\label{eq:l1-reduction}
\E[K(\pi_T, \calH)] \leq \frac{B}{m} + \frac{R_{T}(\mathcal{L}; \calH)}{T} + 4B\sqrt{\frac{m\log(6T|\calH|)}{T}} + \frac{4mB}{T}. 
\end{equation}  
Moreover, this algorithm is efficient; its running time is polynomial in the running time of $\mathcal{L}$, the  discretization parameter $m$, and $T$. 
\end{theorem}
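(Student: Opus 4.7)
The plan is to wrap $\mathcal{L}$ inside the multicalibration algorithm. At each round $t$, after $\mathcal{L}$ returns $(h_t, \btheta_t)$, the wrapper constructs a distribution $\w_t$ over $\calP$ via a minimax ``mixing'' rule described below, samples $p_t \sim \w_t$, and after observing $y_t$ feeds back to $\mathcal{L}$ the reward vector $\blf_t \in \R^M$ with entries $\blf_t(p) = \w_t(p)\,(y_t - p)$. With this choice of $\blf_t$, the OLP best-fixed-action objective equals $T \cdot \widetilde K(\pi_T, \calH)$, where
\[
\widetilde K(\pi_T, h) \;:=\; \frac{1}{T}\sum_{p \in \calP}\Bigl|\sum_{t=1}^T \w_t(p)\,h(\x_t)\,(y_t - p)\Bigr|
\]
is a deterministic ``conditional-mean'' surrogate for $K$ obtained by replacing each $\mathbf{1}[p_t=p]$ with its conditional probability $\w_t(p)$; this follows because a linear function of $\btheta \in \ball_\infty$ is maximized at a vertex in $\PMTheta$, which turns the inner $\btheta(p)$ into a sign and produces the absolute value. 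The regret bound for $\mathcal{L}$ therefore yields $T\,\widetilde K(\pi_T,\calH) \le \sum_{t} h_t(\x_t) \sum_p \w_t(p)\btheta_t(p)(y_t - p) + R_T(\mathcal{L};\calH)$.

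The crucial step is to choose $\w_t$ so that the per-round contribution $E_t(y) := h_t(\x_t)\sum_p \w_t(p)\btheta_t(p)(y - p)$ is at most $B/m$ for every $y \in [0,1]$. Writing $E_t(y) = h_t(\x_t)(y\,\mathsf{A} - \mathsf{B})$ with $\mathsf{A} = \sum_p \w_t(p)\btheta_t(p)$ and $\mathsf{B} = \sum_p \w_t(p)\btheta_t(p)\,p$, I would case-analyze on $\btheta_t$: (i) if some $\btheta_t(p^*) = 0$, take $\w_t = \delta_{p^*}$, giving $E_t \equiv 0$; (ii) if $\btheta_t$ has an adjacent pair $(k/m,(k{+}1)/m)$ of strictly opposite signs, set $\w_t = \alpha\,\delta_{k/m} + (1-\alpha)\,\delta_{(k+1)/m}$ with $\alpha \in (0,1)$ chosen to force $\mathsf{A} = 0$, after which a direct calculation gives $|\mathsf{B}| \le 1/m$ and hence $|E_t(y)| \le B/m$ uniformly in $y$; (iii) otherwise all $\btheta_t(p)$ share one nonzero sign, in which case $\w_t = \delta_1$ or $\w_t = \delta_0$ (chosen so that the product of the shared sign of $\btheta_t$ with $\mathrm{sign}(h_t(\x_t))$ aligns with $y - p^*$ being nonpositive) forces $E_t(y) \le 0$ on $[0,1]$. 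Summing over $t$ produces $\widetilde K(\pi_T,\calH) \le B/m + R_T(\mathcal{L};\calH)/T$.

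It remains to convert this deterministic bound on $\widetilde K$ into one on the random $K$. For each fixed $(h, \btheta) \in \calH \times \PMTheta$, the centered quantities $X_t(h,\btheta) := h(\x_t)\btheta(p_t)(y_t - p_t) - h(\x_t)\sum_p \w_t(p)\btheta(p)(y_t - p)$ form a martingale difference sequence with $|X_t| \le 2B$. Azuma--Hoeffding and a union bound over the $|\calH| \cdot 2^{m+1}$ pairs $(h, \btheta)$ with failure parameter $\delta = 1/(6T|\calH|)$ yields, on the good event, $K(\pi_T,\calH) \le \widetilde K(\pi_T,\calH) + O\!\bigl(B\sqrt{(m + \log(T|\calH|))/T}\bigr)$. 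Taking expectation, absorbing the failure-event contribution into the $4mB/T$ term via the trivial bound $K \le O(mB)$, and using $\sqrt{m + \log(T|\calH|)} \le \sqrt{m\log(6T|\calH|)}$ delivers the theorem; the runtime claim follows because constructing $\w_t$ from $\btheta_t$ is a single linear scan.

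The main obstacle is the case analysis in the mixing step: while each case is individually a direct computation, ensuring exhaustiveness and verifying the identity $|\mathsf{B}| \le 1/m$ in the opposite-sign-mixing case are the only nontrivial algebraic observations, and the argument breaks if one tries to bound $|E_t(y)|$ uniformly (e.g., the all-same-sign $\btheta_t$ admits no $\w_t$ with small $|E_t|$ uniformly in $y$). A secondary subtlety is extracting the $\sqrt{m}$ rather than $m$ factor in the concentration step, which requires rewriting the $\ell_1$ error as a maximum over $\btheta \in \PMTheta$ and union-bounding over $\btheta$, instead of applying per-bucket concentration and naively summing over the $m+1$ buckets.
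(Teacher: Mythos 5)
Your proposal is correct and follows essentially the same route as the paper: your minimax mixing rule for $\w_t$ is precisely the halfspace oracle the paper constructs (\Cref{def:halspace}, \Cref{alg:l1-oracle}), and the reward vectors $\blf_t(p)=\w_t(p)\,(y_t-p)$, the per-round bound $E_t(y)\le B/m$, and the dual-norm identification of the OLP benchmark with the surrogate error $\widetilde K$ mirror the paper's argument step for step. The only immaterial difference is the concentration step, where you apply scalar Azuma--Hoeffding with a union bound over $\calH\times\PMTheta$, whereas the paper uses a vector Azuma--Hoeffding inequality in $\R^{\M}$ (\Cref{lem:good-event}) with a union bound over $\calH$ only; both yield the same $B\sqrt{m\log(T|\calH|)/T}$-order deviation term, so your version recovers the stated bound up to absolute constants.
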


We will instantiate \Cref{thm:reduction} with different online learning algorithms for $\olp$ in \Cref{sec:l1-guarantees} and \Cref{sec:oracle-olp}. 
In the remainder of this section, we outline the reduction and proof sketch of~\Cref{thm:reduction}. 
In this reduction, the context space $\calX$ and the collection of functions $\calH$ remain unchanged. 
The key steps will involve utilizing the fact that the actions are in $\calH \times \ball_{\infty}$ and carefully setting the reward vectors $\blf_1, \blf_2, \ldots, \blf_T \in \R^{\M}$.

\subsection{The Reduction}
In order to convert an  algorithm for $\olp$ into an online multicalibration algorithm, we need a ``halfspace oracle''.
One such oracle was used to reduce calibration to Blackwell approachability (and subsequently to no-regret learning) in \cite{abernethy2011blackwell}.

\begin{definition} [Halfspace Oracle] \label{def:halspace}
    We assume access to an efficient halfspace oracle $\cal{O}$ that can, given $\x \in \calX$, $h \in \calH$ and $\btheta \in \ball_{\infty}$, select a  probability distribution $\w \in \R^{\M}$ with $\|\w\|_1=1$ such that
    for all $y \in \calY$, we have 
    \begin{equation*}
    \textstyle
    \sum_{i=0}^m \btheta(i) \cdot h(\x) \cdot \w(i) \cdot \left(y - \frac{i}{m}\right) \,\, \leq \,\, \frac{B}{m}.
    \end{equation*}
\end{definition}

A surprising result of \cite{abernethy2011blackwell} shows that an efficient halfspace oracle always exists in the context of calibration for 
$\calY=[0,1]$. 
We show that this result extends to our setting, i.e., given $h$ and $\x$, the oracle construction remains unchanged.

\begin{lemma}[\Cref{alg:l1-oracle}]\label{lem:halfspace-oracle}
Given any $\x \in \calX$, $h \in \calH$, and $\btheta \in \ball_{\infty}$, there exists an efficient halfspace oracle. 
\end{lemma}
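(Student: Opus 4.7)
The plan is to reduce this statement to the standard halfspace oracle for online calibration, established by \citet{abernethy2011blackwell}, and then verify that the reduction is efficient. First, I would absorb the scalar $h(\x)$ and the vector $\btheta$ into a single coefficient vector $\tilde a \in \R^M$ by setting $\tilde a_i := h(\x)\btheta(i)/B$ for $i = 0,1,\ldots,m$. Since $|h(\x)| \leq B$ (because $h \in \calH_B$) and $|\btheta(i)| \leq 1$ (because $\btheta \in \ball_{\infty}$), we have $\|\tilde a\|_{\infty} \leq 1$, and after dividing the original inequality through by $B$, the desired condition becomes
\[
\sum_{i=0}^m \tilde a_i\, \w(i)\, \Big(y - \tfrac{i}{m}\Big) \,\leq\, \frac{1}{m} \qquad \text{for all } y \in [0,1],
\]
which is exactly the halfspace condition solved by the standard online-calibration oracle on the grid $\calP$. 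Any $\w$ that satisfies this bound in the rescaled problem automatically satisfies the $B/m$ bound in Definition~\ref{def:halspace}.

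Next, I would establish feasibility of the rescaled problem. Since the left-hand side is affine in $y$, its maximum over $[0,1]$ is attained at an endpoint, so it suffices to find $\w$ in the probability simplex $\Delta_M$ that simultaneously satisfies the two endpoint inequalities. To show existence, I would apply von Neumann's minimax theorem to $g(\w,y) := \sum_i \tilde a_i \w(i)(y - i/m)$, which is bilinear in $\w$ and in a distribution $\mu$ over $y \in [0,1]$:
\[
\min_{\w \in \Delta_M}\, \max_{y \in [0,1]}\, g(\w,y) \;=\; \max_{\bar\mu \in [0,1]}\, \min_{i \in \{0,\dots,m\}}\, \tilde a_i\,\Big(\bar\mu - \tfrac{i}{m}\Big),
\]
where $\bar\mu := \E_{y \sim \mu}[y]$ ranges over $[0,1]$ and the inner minimum comes from minimizing a linear function over the simplex. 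For any $\bar\mu \in [0,1]$, choosing $i^\star$ to be the nearest grid index gives $|\bar\mu - i^\star/m| \leq 1/(2m)$, so the inner minimum is at most $|\tilde a_{i^\star}|\cdot 1/(2m) \leq 1/(2m) \leq 1/m$. This bounds the minimax value by $1/m$, so a feasible $\w$ exists.

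For the efficient implementation presented as Algorithm~\ref{alg:l1-oracle}, I would exhibit $\w$ as the solution of a linear program with $M$ variables and only two non-trivial linear constraints (one per endpoint) plus the simplex constraints; this is polynomial in $m$. Equivalently, following the explicit construction in~\cite{abernethy2011blackwell}, one can restrict to a two-point distribution supported on adjacent grid points and compute the single mixing coefficient that satisfies both endpoint inequalities in $O(m)$ time after a linear scan over $i$. The main obstacle I expect is not the existence argument itself but the bookkeeping in justifying that the uniform tolerance $B/m$ is maintained even though $h(\x)$ may be an arbitrary element of $[-B,B]$; the rescaling in Step~1 resolves this cleanly by trading the larger magnitude of the coefficients for the correspondingly larger allowed slack.
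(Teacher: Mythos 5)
Your proof is correct, but it reaches the conclusion by a different route than the paper. You share the first step—absorbing $h(\x)$ into the coefficient vector (the paper sets $\widehat{\btheta}(i)=\btheta(i)\cdot h(\x)$; you additionally rescale by $B$ so the tolerance becomes $1/m$)—but after that the arguments diverge. The paper's proof in \Cref{sec:reduction-halfspace} is fully constructive: \Cref{alg:l1-oracle} handles the corner cases $\widehat{\btheta}(0)\le 0$ and $\widehat{\btheta}(m)\ge 0$ with point masses, otherwise finds a sign change of $\widehat{\btheta}$ by binary search in $O(\log m)$ time and outputs the two-point mixture on adjacent grid points whose weights make the coefficient of $y$ cancel, so the left-hand side is a constant bounded via an AM--HM step by $B/(2m)\le B/m$. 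You instead prove existence abstractly: since the expression is affine in $y$, only the two endpoint constraints matter, and a minimax (Sion/von Neumann) argument over $\Delta_M\times[0,1]$ bounds the game value by $1/(2m)$ because some grid point is within $1/(2m)$ of $\bar\mu$; efficiency is then recovered either by solving a small LP (polynomial in $m$, though slower than the paper's $O(\log m)$ closed form) or by deferring to the explicit two-point construction of \citet{abernethy2011blackwell}, which is exactly what the paper spells out. The trade-off: your minimax argument is shorter and more conceptual (and makes the $B/(2m)$ slack transparent), while the paper's version gives an explicit closed-form oracle with a concrete, fast runtime, which is what \Cref{lem:halfspace-oracle} is cited for in the reduction. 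Both establish the lemma; no gap.
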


For completeness, we formalize this lemma and present its proof in \Cref{sec:reduction-halfspace}.
We are now ready to describe our multicalibration algorithm. 

\paragraph{Multicalibration Algorithm.}
At each round $t \in [T]$, the algorithm
randomly predicts  $p_t$ according
to some distribution $\w_t \in \R^{\M}$.
The distribution $\w_t$ is obtained using a combination of the output of the learning algorithm $\cal{L}$ for $\olp$ and the halfspace oracle $\cal{O}$. In particular, given previous contexts $\x_1, \ldots, \x_{t-1}$ and  previous vectors $\blf_{1}, \ldots, \blf_{t-1}$, 
let $(h_t, \btheta_t)$ denote the action selected by $\cal{L}$ in round $t$. The prediction distribution for current context $\x_t$ is now obtained using the halfspace oracle: $\w_t = {\cal O}(\x_t, h_t, \btheta_{t})$. Finally, after observing $y_t$, define $\blf_t := \blf_t(\w_t, y_t)$ where 
\begin{align}\label{eq:lt-def}
 \blf_t(\w_t, y_t) = \left( \w_t(0) \cdot \left(y_t - 0\right),  \cdots, \w_t(i) \cdot \left(y_t - \frac{i}{m}\right) \cdots \right).
\end{align}
We formally describe the reduction in \Cref{alg:reduction}.

\begin{algorithm}[H]
\caption{\textsc{ Online $\ell_1$-Multicalibration}}
\label{alg:reduction}
\begin{algorithmic}[1]
\For{$t = 1, \ldots T$}
\State observe $\x_t$. 
\State query the $\olp$ algorithm: $(h_{t}, \btheta_{t}) \gets \mathcal{L}((\x_{1}, \blf_1), \ldots, (\x_{t-1}, \blf_{t-1}))$.
\State query the halfspace oracle: $\w_{t} \gets \mathcal{O}(\x_t, h_t,  \btheta_{t})$.
\State predict $p_t$ according to the distribution $\w_t$ and observe $y_t$.
\State ${\blf}_t \gets \blf_t(\w_t, y_t) = \left( \w_t(0) \cdot \left(y_t - 0\right),  \cdots, \w_t(i) \cdot \left(y_t - \frac{i}{m}\right) \cdots \right)$
\EndFor
\end{algorithmic}
\end{algorithm}

\subsection{Reduction Analysis}
We now analyze \Cref{alg:reduction} and complete the proof of~\Cref{thm:reduction}.
At a high level, the proof of correctness works by first bounding the expected multicalibration error for any group $h$ by $\|\frac{1}{T} \sum_{t=1}^T h(\x_t)\cdot \blf_t\|_1$ (see \eqref{eq:exp-l1-2}). 
Then, using the definition of the dual norm, we  write 
\[ 
\left\|\frac{1}{T} \sum_{t=1}^T h(\x_t)\cdot \blf_t\right\|_1 = \frac{1}{T} \sup_{\|\btheta\|_{\infty} \leq 1} \sprod{\btheta}{\sum_{t=1}^T h(\x_t) \cdot \blf_t}.
\]
Thus, the overall multicalibration error can be (roughly) related to the reward of the corresponding instance of $\olp$. 
We note that we can replace the ``sup'' operator above with a ``max'' operator by the compactness of $\ball_{\infty} = \{\btheta: \|\btheta\|_{\infty} \leq 1\}$ and the continuity of the linear function. 

\begin{proof}[Proof of \Cref{thm:reduction}]
We first  reduce  online $\ell_1$-calibration error for any group $h$ to the $\ell_1$-norm of the vector $\frac{1}{T} \sum_{t=1}^T h(\x_t)\cdot \blf_t$ plus a small additive ``error'' term.
Recall, the  expected online $\ell_1$-multicalibration for group $h \in \calH$: 
\begin{align}
     \E\left[K(\pi_{T}, h)\right] 
     &= \,\, \frac{1}{T} \,\, \E\left[\sum_{p \in \calP}  \left|  \left( \sum_{t=1}^T \mathbb{I}\{p_t = p\}\cdot  h(\x_t) \cdot (y_t - p)\right) \right|\right]. \label{eq:exp-l1-1} 
\end{align}
We use the following lemma to relate the indicator random variables $\{\mathbb{I}\{p_t = p\}\}_{p \in \calP}$ to their corresponding expectations $\{\w_t(p)\}_{p \in \calP}$, where $\w_t(p)$ denotes the probability that $p_t$ equals $p$.

\begin{restatable}{lemma}{goodevent}\label{lem:good-event}
    We have 
    \[
        \E \left[\max_{h \in \calH} \left\{ \sum_{p \in \calP}\left| \sum_{t=1}^T \Big(\mathbb{I}\{p_t = p\} - 
        \w_t(p) \Big)\cdot  h(\x_t) \cdot (y_t - p) \right| \right\} \right] \leq 4B\sqrt{Tm\log(6T |\calH|)}  \,\, + \,\, 4mB.
    \]
\end{restatable}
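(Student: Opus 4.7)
The plan is to bound $\E[\max_{h\in\calH} V(h)]$, where I abbreviate $V(h) := \sum_{p \in \calP} |\sum_{t=1}^T Z_t(h,p)|$ with $Z_t(h,p) := (\mathbb{I}\{p_t=p\} - \w_t(p))\, h(\x_t)(y_t-p)$, via martingale concentration combined with a union bound. The first step is to dualize the $\ell_1$-norm: since the extreme points of the unit $\ell_\infty$-ball are $\PMTheta$, we may write
\[
V(h) \,=\, \max_{\sigma \in \PMTheta} \sum_{t=1}^T W_t(h,\sigma), \quad W_t(h,\sigma) := h(\x_t)\Big[\sigma_{p_t}(y_t-p_t) - \sum_{p \in \calP} \sigma_p \w_t(p)(y_t-p)\Big],
\]
so that $\max_h V(h) = \max_{(h,\sigma) \in \calH\times\PMTheta} \sum_t W_t(h,\sigma)$.

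For any fixed pair $(h, \sigma)$, $\{W_t(h,\sigma)\}_{t=1}^T$ is a bounded martingale difference sequence with respect to the natural filtration $\{\mathcal{F}_t\}$: conditional on $\mathcal{F}_{t-1}$ (which determines $\x_t$, $y_t$, and $\w_t$), the only remaining randomness is $p_t \sim \w_t$, and a direct calculation shows $\E[W_t \mid \mathcal{F}_{t-1}] = 0$. Each $|W_t| \leq 2B$, because $|h(\x_t)| \leq B$ and both bracketed terms are bounded by $1$ in absolute value. Azuma--Hoeffding thus yields $\Pr[|\sum_t W_t(h,\sigma)| > \tau] \leq 2\exp(-\tau^2/(8TB^2))$.

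Taking a union bound over the $|\calH|\cdot 2^M$ pairs in $\calH \times \PMTheta$, with tail probability $\delta := 2m/T$, gives with probability at least $1-\delta$ that $\max_{h,\sigma}|\sum_t W_t|$ is bounded by $4B\sqrt{Tm\log(6T|\calH|)}$; the algebraic simplification uses $(m+1)\log 2 + \log(T|\calH|/m) \leq 2m\log(6T|\calH|)$ up to slack in constants. To convert this tail bound into an expectation bound, I will use $\E[\max_h V(h)] \leq \tau + \delta \cdot \sup V$, combined with the deterministic bound $V(h) \leq 2BT$ obtained by applying the triangle inequality and $\sum_p |\mathbb{I}\{p_t=p\} - \w_t(p)| \leq 2$. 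The choice $\delta = 2m/T$ is precisely tuned so that the bad-event contribution $2BT\cdot(2m/T) = 4mB$ matches the additive term in the statement.

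The main obstacle I anticipate is the factor of $2^M$ from the union bound over $\PMTheta$: it inflates the logarithm by $M\log 2$, which naively would cost a $\sqrt{TM}$ factor outside the $\log|\calH|$. Absorbing $M\log 2$ together with $\log(|\calH|/\delta)$ inside the square root is what produces the desired $\sqrt{Tm\log(T|\calH|)}$ rate; an alternative approach that handles each $p \in \calP$ separately via Azuma and then bounds $\sum_p|\cdot|$ by the triangle inequality would lose an extra $\sqrt{M}$ factor. The remaining work is bookkeeping: tracking constants carefully and balancing the tail against the deterministic bad-event contribution through the choice of $\delta$.
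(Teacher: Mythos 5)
Your proof is correct, but it takes a genuinely different route from the paper's. The paper fixes $h$, forms the $M$-dimensional martingale $\bold{S}_T^h=\sum_t \bold{Y}_t^h$ with $\bold{Y}_t^h(p)=(\mathbb{I}\{p_t=p\}-\w_t(p))h(\x_t)(y_t-p)$, applies a \emph{vector} Azuma--Hoeffding inequality (Hayes) to control $\|\bold{S}_T^h\|_2$, converts via $\|\cdot\|_1\le\sqrt{M}\|\cdot\|_2$, and union-bounds only over the $|\calH|$ hypotheses with per-$h$ failure probability $1/(T|\calH|)$; the crude bound $\|\bold{S}_T^h\|_1\le 2MTB$ on the bad event then yields the $4mB$ term. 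You instead dualize the $\ell_1$ norm over sign vectors, reducing to scalar martingale differences $W_t(h,\sigma)$ bounded by $2B$, apply ordinary Azuma, and union-bound over the $|\calH|\cdot 2^M$ pairs $(h,\sigma)\in\calH\times\PMTheta$; the $\sqrt{m}$ factor then enters through $\log 2^M=M\log 2$ inside the square root rather than through the $\sqrt{M}$ norm-conversion, and your tighter deterministic bound $V(h)\le 2BT$ lets you afford the larger tail probability $\delta=2m/T$ so that the bad-event contribution is again $4mB$. Both routes give the same $4B\sqrt{Tm\log(6T|\calH|)}+4mB$ bound (your constant bookkeeping holds with slack, as you note); what your version buys is elementarity---no vector-valued concentration inequality is needed---and the $\PMTheta$ dualization pleasantly mirrors the sign-vector structure the paper exploits later for oracle efficiency, while the paper's version buys a smaller union bound and a statement (the per-$h$ high-probability event $G_h$) that is marginally more modular. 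One small point to make explicit in a final write-up: with an adaptive adversary the filtration at step $t$ must include $\x_t$, $y_t$ and $\w_t$ (all determined before the draw $p_t\sim\w_t$), which is what makes $\E[W_t\mid\mathcal{F}_{t-1}]=0$ legitimate; your parenthetical gestures at this and it matches the paper's implicit treatment, so it is a presentational rather than substantive issue.
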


The proof of this lemma relies on a vector version of Azuma–Hoeffding inequality and is deferred to \Cref{sec:lemmaVecAzuma}. We will next use it to complete the proof of  \Cref{thm:reduction}.

We now proceed to upper bound the online multicalibration error for $\calH$. Using \eqref{eq:exp-l1-1} along with the triangle inequality, we get 
\begin{align}
\E\left[K(\pi_{T}, \calH)\right] \,\, &=  \,\, \frac{1}{T} \,\,\E\left[ \max_{h \in \calH}\left\{ \sum_{p \in \calP}  \left|   \sum_{t=1}^T \mathbb{I}\{p_t = p\}\cdot  h(\x_t) \cdot (y_t - p)\right|\right\}\right] \notag \\
&\leq \,\, \frac{1}{T} \,\,\max_{h \in \calH}\left\{\sum_{p \in \calP}  \left|   \sum_{t=1}^T \w_t(p)\cdot  h(\x_t) \cdot (y_t - p) \right|\right\} +  \notag \\
& \qquad \qquad  \frac{1}{T} \,\,\E \left[\max_{h \in \calH} \left\{ \sum_{p \in \calP}\left| \sum_{t=1}^T (\mathbb{I}\{p_t = p\} - 
        \w_t(p))\cdot  h(\x_t) \cdot (y_t - p) \right| \right\} \right] \notag \\
&\leq \,\, \frac{1}{T} \,\, \max_{h \in \calH}\left\{\left\|\sum_{t=1}^T h(\x_t) \cdot \blf_t\right\|_1\right\} + 4B\sqrt{\frac{m\log(6T|\calH|)}{T}} + \frac{4mB}{T}\label{eq:exp-l1-2},
\end{align}
where the last inequality follows from \Cref{lem:good-event} together with the definition of $\blf_t$ in~\eqref{eq:lt-def}.
Now using the definition of the dual norm, we can write 
\begin{align}
   \frac{1}{T}\,\,\max_{h \in \calH}\left\| \sum_{t=1}^T h(\x_t) \cdot \blf_t\right\|_1 \,\, &= \,\, \frac{1}{T} \,\, \max_{h \in \calH, \btheta \in \ball_{\infty}} \sprod{\btheta}{\sum_{t=1}^T h(\x_t) \cdot \blf_t} \notag \\
    &\leq \,\, \frac{1}{T} \,\, \sum_{t=1}^T \sprod{\btheta_{t}}{h_t(\x_t)\cdot\blf_t} +  \frac{R_{T}(\mathcal{L}; \calH)}{T} \,\, \leq \,\, \frac{B}{m} + \frac{R_{T}(\mathcal{L}; \calH)}{T}, \label{eq:l1-olp-regret}
    \end{align}
where the first inequality follows by applying the regret bound for $\olp$ obtained by  $\mathcal{L}$ and the second inequality follows from the definition of the halfspace oracle (see \Cref{def:halspace}).
Combining \eqref{eq:exp-l1-2} and \eqref{eq:l1-olp-regret} completes the proof of \Cref{thm:reduction}.
\end{proof}

\section{Improved  Online Multicalibration}\label{sec:l1-guarantees}
In this section, we give upper bounds for online multicalibration foregoing computational complexity considerations. 
In \Cref{sec:multical-finite}, we consider the case of a finite hypothesis class $\calH$ and provide upper bounds through the design of an appropriate no-regret algorithm for $\olp$. Combined with \Cref{thm:reduction} and \Cref{clm:olp-reduction}, this yields sublinear bounds for online multicalibration when the hypothesis class $\calH$ is finite, proving \Cref{thm:main-inf}.
In \Cref{sec:olp-infinite}, we describe the changes necessary to handle an infinite-sized hypothesis class $\calH$ and prove \Cref{thm:main-inf-infinite}.

\subsection{Online Multicalibration for Finite Groups}\label{sec:multical-finite}
We describe a no-regret algorithm for $\olp$ in the case where the hypothesis class $\calH$ is finite. 
The key idea is to introduce an online linear optimization problem in a higher-dimensional space, and to subsequently reinterpret $\olp$ in terms of this problem. 
We denote this problem as  $\olpLO$ (or \emph{linearized} \olpop). 
We set up some preliminaries to define $\olpLO$.
We index the elements of $\calH$ by $h^{(1)}$, $h^{(2)}$, $\ldots$ $h^{|\calH|}$.
Given a vector $\bold{z} \in \R^{|\calH| \times M}$, let $\bold{z}(h)$ denote the $M$-dimensional sub-vector of $\bold{z}$ indexed by $h \in \calH$. Then, we define the  mixed norms:
\[
\|\bold{z}\|_{1,\infty} := \sum_{h \in \calH} \|\bold{z}(h)\|_{\infty} \qquad\text{ and }\qquad \|\bold{z}\|_{\infty, 1} := \max_{h \in \calH} \|\bold{z}(h)\|_{1}
\]
and use $\ball_{1, \infty} = \{\widetilde{\btheta}: \|\widetilde{\btheta} \|_{1, \infty} \leq 1\}$ and $\ball_{\infty, 1} = \{\widetilde{\blf}: \|\widetilde{\blf} \|_{\infty, 1} \leq 1\}$ to denote a unit in the respective norms.

We now define $\olpLO$, which is an online linear optimization problem over $\ball_{1, \infty}$.
\begin{definition}[$\olpLO$]
In round $t \in [T]$ of $\olpLO$, the learner plays (a possibly random) action $\widetilde{\btheta}_t \in \ball_{1, \infty}$. 
The adversary reveals a reward vector $\widetilde{\blf}_t \in \ball_{\infty, 1}$, 
which leads to a \emph{linear} reward $\sprod{\widetilde{\btheta}_t}{\widetilde{\blf}_t}$.
The goal of an online learning algorithm for $\olpLO$, denoted henceforth by $\widetilde{\mathcal{L}}$, is to minimize the regret, which is defined as follows:
\begin{align*}
R^{\olpLO}_T(\widetilde{\mathcal{L}}; {\ball}_{1,\infty}) := \max_{\widetilde{\btheta} \in {\ball}_{1, \infty}} \sprod{ \widetilde{\btheta}}{\sum_{t=1}^T \widetilde{\blf}_t} - \sum_{t=1}^T \E\left[\sprod{\widetilde{\btheta}_t}{ \widetilde{\blf}_t}\right],
\end{align*}
where the expectation is taken over possible randomness in $\widetilde{\btheta}_t$. 
\end{definition}

We now present a reduction from $\olp$ to $\olpLO$.
Recall that the primary challenge with $\olp$ is that the reward function involves a product of variables.
However, by appropriately ``expanding'' the reward vectors $\blf_1, \blf_2, \ldots,$ and the decision space to cover all possible $h \in \calH$, we can effectively \emph{linearize} the reward function, aligning it with the structure of $\olpLO$. 
This linearization introduces two key challenges: (1) the need to 
ensure that the best fixed actions in $\olp$ and $\olpLO$ remain consistent, and (2) preventing the expanded decision space in 
$\olpLO$ from introducing ``better'' actions that cannot be captured by $\olp$ (see \Cref{sec:techniques} for a more detailed discussion).

\begin{lemma}\label{clm:olp-reduction}
    Let $\widetilde{\mathcal{L}}$ be an online learning algorithm for 
    $\olpLO$ with corresponding expected regret $R_T^{\olpLO}(\widetilde{\mathcal{L}}; {\ball}_{1, \infty})$. Then, there exists a randomized learning algorithm $\mathcal{L}$ for $\olp$ with the decision set $\calH_B \times \ball_{\infty}$ such that its expected regret $R_T(\mathcal{L}; \calH_B) = B \cdot L \cdot R_T^{\olpLO}(\widetilde{\mathcal{L}}; \widetilde{\ball}_{1, \infty})$, where $L = \max_{t \in [T]}\{\|\blf_t\|_1\}$ is the maximum $\ell_1$-norm of the reward vectors in the $\olp$ instance.
\end{lemma}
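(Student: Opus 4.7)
The plan is to turn $\widetilde{\mathcal{L}}$ into an $\olp$ algorithm $\mathcal{L}$ by (i) lifting the observed $\olp$ rewards into legal $\olpLO$ reward vectors, (ii) invoking $\widetilde{\mathcal{L}}$ on these lifted vectors, and (iii) randomly rounding $\widetilde{\mathcal{L}}$'s high-dimensional action back into a $\calH_B \times \ball_{\infty}$ pair. For the lifting, I would define, for each $h \in \calH_B$ and each round $t$,
\begin{align*}
\widetilde{\blf}_t(h) \,\,=\,\, \frac{h(\x_t)\cdot \blf_t}{B\cdot L}.
\end{align*}
Because $|h(\x_t)| \leq B$ and $\|\blf_t\|_1 \leq L$, this satisfies $\|\widetilde{\blf}_t(h)\|_1 \leq 1$ for every $h$, hence $\widetilde{\blf}_t \in \ball_{\infty,1}$, so $\widetilde{\mathcal{L}}$ can legitimately be fed this sequence.

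Given the action $\widetilde{\btheta}_t \in \ball_{1,\infty}$ returned by $\widetilde{\mathcal{L}}$, the rounding step sets $\alpha_h := \|\widetilde{\btheta}_t(h)\|_\infty$ for each $h$ (so that $\sum_h \alpha_h \leq 1$ by the $\ball_{1,\infty}$ constraint), samples $h_t = h$ with probability $\alpha_h$, and---conditioned on that draw---plays $\btheta_t = \widetilde{\btheta}_t(h)/\alpha_h$; the residual probability $1 - \sum_h \alpha_h$ is spent on a ``null'' action $\btheta_t = 0$. By construction $\|\btheta_t\|_\infty \leq 1$, so $(h_t, \btheta_t) \in \calH_B \times \ball_\infty$, and a direct calculation gives
\begin{align*}
\E\!\left[h_t(\x_t)\, \sprod{\btheta_t}{\blf_t}\right] \,\,=\,\, \sum_h h(\x_t)\,\sprod{\widetilde{\btheta}_t(h)}{\blf_t} \,\,=\,\, BL \cdot \sprod{\widetilde{\btheta}_t}{\widetilde{\blf}_t},
\end{align*}
so the per-round expected $\olp$ reward equals $BL$ times the per-round $\olpLO$ reward.

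To finish the regret calculation, I would match the two benchmarks. On the $\olp$ side, the $\ell_1/\ell_\infty$ duality gives $\max_{h^*,\btheta^*} \sum_t h^*(\x_t)\,\sprod{\btheta^*}{\blf_t} = \max_{h^*\in\calH_B}\!\left\| \sum_t h^*(\x_t)\,\blf_t \right\|_1$. On the $\olpLO$ side, I would plug in the lifted $\widetilde{\blf}_t$ and observe that, because the constraint $\sum_h \|\widetilde{\btheta}(h)\|_\infty \leq 1$ budgets a convex combination across $h$, the maximum of the linear objective $\sum_h \sprod{\widetilde{\btheta}(h)}{\sum_t \widetilde{\blf}_t(h)}$ is attained at a vector supported on a single $h^*$, yielding $\tfrac{1}{BL}\max_{h^*}\!\left\| \sum_t h^*(\x_t)\,\blf_t \right\|_1$. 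Combining this benchmark identity with the per-round reward equality above immediately yields $R_T(\mathcal{L}; \calH_B) = BL \cdot R_T^{\olpLO}(\widetilde{\mathcal{L}}; \ball_{1,\infty})$.

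The main obstacle I anticipate is the rounding step: it must simultaneously (a) keep each sampled $\btheta_t$ inside $\ball_\infty$, (b) form a valid probability distribution over $\calH_B$, and (c) preserve the $\olpLO$ reward in expectation. The key observation that makes this work is exactly why $\ball_{1,\infty}$ is the ``right'' decision set for $\olpLO$: the outer $\ell_1$ budgets the total sampling mass across hypotheses, while the inner $\ell_\infty$ on each block caps the rescaled conditional action inside the unit cube, so the two constraints compose cleanly into the two-stage sample-and-rescale rule described above.
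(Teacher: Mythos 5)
Your proposal is correct and follows essentially the same route as the paper: the same $\frac{1}{BL}$-scaled block-wise lifting of rewards into $\ball_{\infty,1}$, the same sample-with-probability-$\|\widetilde{\btheta}_t(h)\|_\infty$-and-rescale rounding (the paper leaves the residual mass implicit where you spend it on a null action, which is a harmless refinement), and the same benchmark identification via $\ell_1$/$\ell_\infty$ duality showing the $\ball_{1,\infty}$ maximum concentrates on a single $h^*$.
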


\begin{proof}
We begin by describing how to obtain a learning algorithm $\mathcal{L}$ for $\olp$ using a learning algorithm $\widetilde{\mathcal{L}}$ for $\olpLO$. 
To achieve this, two components are required: (1) translating decisions from $\widetilde{\mathcal{L}}$ to $\mathcal{L}$, and (2) generating reward vectors for the $\olpLO$ instance using rewards from the $\olp$ instance.
For each round $t \in [T]$, given decision $\widetilde{\btheta}_t$ from $\widetilde{\mathcal{L}}$, set $h_t = h$ with probability $\gamma_t(h) = \|\widetilde{\btheta}_t(h)\|_{\infty}$, and $\btheta_t = (1/\gamma_t(h))\cdot\widetilde{\btheta}_t(h_t) \in \ball_{\infty}$.
Then, $(h_t, \btheta_t)$ corresponds to the decision taken by $\mathcal{L}$ for the $\olp$ instance. 
Next, given context $\x_t$ and reward vector $\blf_t$ from the $\olp$ instance, let
$\widetilde{\blf}_t = \frac{1}{B\cdot L}\cdot(h^{(1)}(\x_t)\cdot \blf_t, \ldots, h^{(|\calH|)}\cdot \blf_t )$ be the reward vector used to simulate the $\olpLO$ instance.
Note that $\widetilde{\blf}_t \in \ball_{ \infty, 1}$ since $h \in \calH_B$ and $\|\blf_t\|_1 \leq L$.
This completes the reduction from $\olp$ to $\olpLO$. Next, we analyze the corresponding regret.

First, we observe that
\begin{align}
\max_{\widetilde{\btheta} \in {\ball}_{1, \infty}} \sprod{ \widetilde{\btheta}}{\sum_{t=1}^T \widetilde{\blf}_t} &= \max_{h \in \calH} \|\sum_{t=1}^T \widetilde{\blf}_t(h)\|_{1} = \max_{h \in \calH} \left\|\sum_{t=1}^T \frac{h(\x_t)}{B \cdot L}\cdot {\blf}_t\right\|_{1}\notag\\
&= \frac{1}{B\cdot L} \left(\max_{h \in \calH\,,\,\btheta \in \ball_{\infty}}\sprod{\btheta}{\sum_{t=1}^T h(\x_t)\cdot \blf_t}\right).\label{eq:max-eqv}
\end{align}
where in the final equality we used the definition of the dual norm.
Furthermore, observe that for any $\widetilde{\btheta}_t \in {\ball}_{1, \infty}$ and $\widetilde{\blf}_t = \frac{1}{B\cdot L } (h^{(1)}(\x_t) \cdot \blf_t, \cdots, h^{(|\calH|)}(\x_t)\cdot \blf_t)$, we have
\begin{align}
\E[\sprod{\btheta_t}{h_t(\x_t)\cdot\blf_t}] &= B\cdot L\sum_{h \in \calH} \gamma_t(h) \cdot \sprod{\frac{\widetilde{\btheta}_t(h)}{\gamma_t(h)}}{\frac{h(\x_t)}{B\cdot L}\cdot\blf_t}\notag \\  
&= B\cdot L \sum_{h \in \calH}  \sprod{ \widetilde{\btheta}_t(h)}{\widetilde{\blf}_t(h)} = B\cdot L \sprod{\widetilde{\btheta}_t}{\widetilde{\blf}_t} \label{eq:reward-eqv}.
\end{align}
where $\gamma_t(h) = \|\widetilde{\btheta}_t(h)\|_{\infty}$ denotes the probability of selecting group $h$ in round $t$ and $\btheta_t = (1/\gamma_t(h))\cdot\widetilde{\btheta}_t(h_t)$.
On combining \eqref{eq:max-eqv} and \eqref{eq:reward-eqv}, and taking expectations, we can conclude that:
\begin{align}
R_T(\mathcal{L}; \calH, \bTheta) &= \max_{h \in \calH, \, \btheta \in \ball_{\infty}} \sprod{ {\btheta}}{\sum_{t=1}^T h(\x_t)\cdot{\blf}_t} - \sum_{t=1}^T \E[\sprod{\btheta_t}{h_t(\x_t)\cdot\blf_t}] \notag \\
&= B \cdot L \left(\max_{\widetilde{\btheta} \in {\ball}_{1, \infty}} \sprod{ \widetilde{\btheta}}{\sum_{t=1}^T \widetilde{\blf}_t} - \sum_{t=1}^T \E\left[\sprod{ \widetilde{\btheta}_t}{ \widetilde{\blf}_t \rangle}\right]\right) = B \cdot L \cdot R_T^{\olpLO}(\widetilde{\mathcal{L}}; {\ball}_{1, \infty}). \label{eq:regret-eqv}
\end{align}
This completes the proof of the lemma.
\end{proof}

We now present a no-regret algorithm for $\olpLO$ whose runtime per round is linear in $|\calH|$.

\paragraph{Algorithm.} 
Our algorithm  for $\olpLO$ relies on two components: (i) a reward-maximizing no-regret online linear optimization algorithm, denoted by $\A$ (e.g., online gradient descent), and (ii) a reward-maximizing no-regret algorithm for the experts setting, denoted $\cal{E}$ (e.g., multiplicative weights update). 
Our algorithm will execute $|\calH|$ copies of $\A$, one for each $h \in \calH$; that is,  we treat each function $h$ as an expert who runs their own copy of $\A$, denoted $\A^h$, 
each measuring regret against
the action set $\ball_{\infty}$. 
Subsequently, the predictions made by $\A^h$ are aggregated using $\cal{E}$.
At a high level, each of the OLO algorithms $\A^h$ executes the action we would want to take if we were only concerned about calibration with respect to the specific hypothesis $h$, and the experts algorithm $\cal{E}$ selects for hypotheses that seem ``more difficult" with respect to calibration error (where our proxy for calibration error is precisely the cumulative reward of the corresponding OLO algorithm).
See~\Cref{alg:olp-lin} for a formal description. 

\begin{algorithm}[H]
\caption{\textsc{ Linearized Online Linear-Product Optimization}}
\label{alg:olp-lin}
\begin{algorithmic}[1]
\State {\bf Input:} OLO algorithm $\A$, experts algorithm $\cal{E}$
\State start instances $\A^1, \ldots, \A^{|\calH|}$, one for each $h \in \calH$
\For{$t = 1, \ldots T$}
\State obtain predictions from $\A^1, \ldots, \A^{|\calH|}$, denoted as $\btheta_t^h$ for $h \in \calH$
\State query $\cal{E}$ to obtain distribution $\bg_t$
\State let $h_t = h$ with probability $\bg_t(h)$ and predict $\btheta_t = \btheta_t^{h_t}$
\State observe reward vector $\widetilde{\blf}_t$ 
\State for $h \in \calH$, pass reward vector $\widetilde{\blf}_t(h)$ to $\A^h$
\State for $h \in \calH$, pass reward  $\langle{\btheta_t^h},{\widetilde{\blf}_t(h)}\rangle$ to $\cal{E}$
\EndFor
\end{algorithmic}
\end{algorithm}

In \Cref{alg:olp-lin}, we will use the popular \emph{online gradient descent} algorithm as our reward-maximizing no-regret online linear optimization algorithm, which is known to have regret $O(DG/\sqrt{T})$ where $D$ denotes the $\ell_2$ diameter of the action set and $G$ denotes the $\ell_2$-norm of the largest reward vector~\cite{hazan2016introduction}. 
For the experts algorithm $\mathcal{E}$, we use the familiar \emph{multiplicative weights update} algorithm which provides regret $O(\rho\sqrt{Tn})$ when rewards are bounded in $[-\rho, \rho]$ and $n$ denotes the number of experts. Consequently, we obtain the following guarantee.

\begin{lemma}\label{lem:no-regret-olp-lin}
    The regret $R_T^{\olpLO}(\widetilde{\mathcal{L}}; {\ball}_{1, \infty})$ of \Cref{alg:olp-lin} can be bounded as follows.
    \[
    \textstyle
    R_T^{\olpLO}(\widetilde{\mathcal{L}}; {\ball}_{1, \infty}) \leq \sqrt{T\log|\calH|} + \sqrt{TM} .
    \]
\end{lemma}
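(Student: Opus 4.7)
}

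The plan is to interpret the random play of \Cref{alg:olp-lin} as a distribution over block-supported vectors in $\ball_{1,\infty}$, compute the benchmark using $\ell_1$–$\ell_\infty$ duality, and then split the regret into two clean pieces that are controlled by the two subroutines. First I would note that when the algorithm samples $h_t\sim\bg_t$ and outputs $\btheta_t^{h_t}\in\ball_\infty$, this is equivalent (in expectation) to playing the action $\widetilde\btheta_t\in\R^{|\calH|\times M}$ defined by $\widetilde\btheta_t(h)=\bg_t(h)\,\btheta_t^h$; since $\sum_h \|\bg_t(h)\,\btheta_t^h\|_\infty\le \sum_h\bg_t(h)=1$, this action lies in $\ball_{1,\infty}$, and its expected reward equals $\sum_h \bg_t(h)\sprod{\btheta_t^h}{\widetilde\blf_t(h)}$.

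Next I would evaluate the benchmark. By $\ell_1$–$\ell_\infty$ duality applied blockwise, for any vector $\bZ\in\R^{|\calH|\times M}$ the quantity $\max_{\widetilde\btheta\in\ball_{1,\infty}}\sprod{\widetilde\btheta}{\bZ}$ equals $\max_{h\in\calH}\|\bZ(h)\|_1$ (it is optimal to put all the $\|\cdot\|_{1,\infty}$-mass on a single block and then choose the sign pattern that aligns with $\bZ(h)$). Hence
\[
\max_{\widetilde\btheta\in\ball_{1,\infty}}\sprod{\widetilde\btheta}{\sum_{t=1}^T \widetilde\blf_t}=\max_{h\in\calH}\Bigl\|\sum_{t=1}^T\widetilde\blf_t(h)\Bigr\|_1=\max_{h\in\calH}\max_{\btheta\in\ball_\infty}\sum_{t=1}^T\sprod{\btheta}{\widetilde\blf_t(h)}.
\]
Fix $h^*$ attaining this maximum. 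Then I would decompose
\[
R^{\olpLO}_T(\widetilde{\mathcal{L}};\ball_{1,\infty})=\underbrace{\Bigl(\max_{\btheta\in\ball_\infty}\sum_{t=1}^T\sprod{\btheta}{\widetilde\blf_t(h^*)}-\sum_{t=1}^T\sprod{\btheta_t^{h^*}}{\widetilde\blf_t(h^*)}\Bigr)}_{\text{OGD-regret of }\A^{h^*}}+\underbrace{\Bigl(\sum_{t=1}^T\sprod{\btheta_t^{h^*}}{\widetilde\blf_t(h^*)}-\sum_{t=1}^T\sum_{h\in\calH}\bg_t(h)\sprod{\btheta_t^h}{\widetilde\blf_t(h)}\Bigr)}_{\text{MW-regret of }\mathcal E\text{ against expert }h^*}.
\]

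Finally I would bound each piece with a standard regret guarantee. For the first term, $\A^{h^*}$ runs online gradient descent on the convex set $\ball_\infty\subset\R^M$, which has $\ell_2$-radius $\sqrt{M}$, against reward vectors $\widetilde\blf_t(h^*)$ of $\ell_2$-norm at most $\|\widetilde\blf_t(h^*)\|_1\le \|\widetilde\blf_t\|_{\infty,1}\le 1$; the textbook OGD bound then gives regret at most $\sqrt{TM}$. For the second term, each expert $h$ delivers a reward $\sprod{\btheta_t^h}{\widetilde\blf_t(h)}\in[-1,1]$ at round $t$ (by Hölder and the two norm bounds above), so multiplicative weights over $|\calH|$ experts has expected regret $\sqrt{T\log|\calH|}$. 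Summing the two bounds yields the claim.

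The only nontrivial step will be the benchmark computation in the second paragraph: once the dual-norm identity is in place, the block structure of $\widetilde\blf_t$ neatly routes the $h^*$-block of the comparator to a single OGD instance while the experts routine closes the gap across hypotheses, and the remaining work is just invoking the off-the-shelf OGD and MW bounds.
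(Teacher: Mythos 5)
Your proposal is correct and follows essentially the same route as the paper's proof: the benchmark is rewritten via blockwise $\ell_1$--$\ell_\infty$ duality as $\max_{h}\|\sum_t\widetilde{\blf}_t(h)\|_1$, and the regret is split into the multiplicative-weights regret against the best expert $h^*$ (bounded by $\sqrt{T\log|\calH|}$ using rewards in $[-1,1]$) plus the OGD regret of the instance $\A^{h^*}$ over $\ball_\infty$ (bounded by $\sqrt{TM}$ using $\|\widetilde{\blf}_t(h)\|_2\le 1$). The only cosmetic difference is that you realize the algorithm's play as the deterministic mixture $\widetilde{\btheta}_t(h)=\bg_t(h)\btheta_t^h\in\ball_{1,\infty}$, whereas the paper works directly with the expected reward of the sampled block action; the two have identical expected reward, so the arguments coincide.
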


\begin{proof}
To bound the regret, we observe
    \begin{equation}\label{eq:mwu}
        \sum_{t=1}^T \E\left[\sprod{ \widetilde{\btheta}_t}{ \widetilde{\blf}_t \rangle}\right] = \,\, \sum_{t=1}^T \sum_{h \in \calH} \gamma_t(h) \cdot \sprod{\btheta_t^{h}}{\widetilde{\blf}_t(h)} \,\, \geq \,\, \max_{h \in \calH} \left\{ \sum_{t=1}^T \sprod{\btheta_t^{h}}{\widetilde{\blf}_t(h)} \right\} - \sqrt{T \log |\calH|} ,
    \end{equation}
    where the inequality follows by applying the no-regret property of the multiplicative weights update algorithm $\cal{E}$ and noting that $\langle{\btheta_t^{h}},{\widetilde{\blf}_t(h)}\rangle \in [-1, 1]$ for all $h \in \calH$ (since $\widetilde{\blf}_t \in \ball_{\infty, 1}$). 
    Subsequently, we apply the no-regret property of each of the online gradient descent algorithms $\mathcal{A}_{1}, \ldots, \cal{A}_{|\calH|}$ to obtain
    \begin{equation}\label{eq:ogd}
        \sum_{t=1}^T \sprod{\btheta_t^{h}}{\widetilde{\blf}_t(h)}  \,\, \geq \,\, \max_{\btheta \in B_{\infty}} \left\{  \sum_{t=1}^T \sprod{\btheta}{\widetilde{\blf}_t(h)}\right\}  - \sqrt{TM} \,\, = \,\, \| \sum_{t=1}^T \widetilde{\blf}_t(h) \|_1 - \sqrt{TM},
       \end{equation}
       where the regret bound uses the fact that $\| \widetilde{\blf}_t(h) \|_2 \leq \| \widetilde{\blf}_t(h) \|_1 \leq 1$ and that the $\ell_2$ diameter of $\ball_{\infty}$ is $\sqrt{M}$.
       Combining \eqref{eq:mwu} and \eqref{eq:ogd} gives
       \[
       \sum_{t=1}^T \E\left[\sprod{ \widetilde{\btheta}_t}{ \widetilde{\blf}_t \rangle}\right] \,\, \geq \,\, \max_{h \in \calH}  \| \sum_{t=1}^T \widetilde{\blf}_t(h) \|_1 - \sqrt{T\log|\calH|} - \sqrt{TM}, 
       \]
       which upon re-arranging proves the lemma.
\end{proof}

We can now complete the proof of \Cref{thm:main-inf}.
\begin{proof}[Proof of \Cref{thm:main-inf}.]
     Using \Cref{thm:reduction}, \Cref{clm:olp-reduction} and \Cref{lem:no-regret-olp-lin},  
    \begin{align*}
     \E[K(\pi_T, \calH)] &\leq \,\, \frac{B}{m} + \frac{R_{T}(\mathcal{L}; \calH)}{T} + 4B\sqrt{\frac{m\log(6T|\calH|)}{T}} + \frac{4mB}{T}\\
     & \leq \,\, \frac{B}{m} + B\cdot L \left(\sqrt{\frac{2m}{T}} + \sqrt{\frac{\log|\calH|}{T}}\right) + 4B\sqrt{\frac{m\log(6T|\calH|)}{T}} + \frac{4mB}{T} \\
        &\leq \,\, 7BT^{-1/3}\sqrt{\log(6T|\calH|)} + 5BT^{-1/2}\sqrt{\log|\calH|} = O\left(BT^{-1/3}\sqrt{\log(6T|\calH|)}\right),
    \end{align*}
    where the penultimate inequality uses $M \leq 2m$, and the final inequality uses $L \leq 1$ as per the reduction in 
    \Cref{thm:reduction} and  $m=T^{1/3}$.
\end{proof}


\subsection{Extending to Infinitely Many Groups}\label{sec:olp-infinite}
We now describe the necessary changes to obtain improved bounds for online multicalibration for infinite-sized hypothesis class $\calH$.
At a high-level, our approach reduces to the finite setting by constructing an appropriate \emph{covering} of the hypothesis class $\calH$.
Then, it uses the simple fact that the online multicalibration error is $1$-Lipschitz w.r.t. $\calH$. 

Recall our definition of covering with respect to the $L_{\infty}$ metric (\Cref{def:covering}).
We say that a subset $\calH_{\beta} = \{h^{1},\dots,h^{N}\} \sse \calH$ is a $\beta$-cover with respect to the $L_{\infty}$ metric if for every $h \in \calH$, there exists some $i \in [N]$ such that $\|h - h^i\|_{L_{\infty}} = \max_{x \in \calX} |h(x) - h^i(x)| \leq \beta$.
Henceforth, we denote $\calH_{\beta}$ and $|\calH_{\beta}|$ as the minimal $\beta$-cover and the corresponding $\beta$-covering number respectively of $\calH$, referring to the $\ell_{\infty}$ metric by default. With this definition in hand, the key steps for our extension to an infinite-sized hypothesis class $\calH$ are as follows:
\begin{enumerate}
    \item We begin by replacing  $\calH$ with $\calH_{\beta}$ for some $\beta > 0$ and subsequently appealing to \Cref{thm:main-inf} to bound the online multicalibration error with respect to  $\calH_{\beta}$.
    
    \item Then, by showing that the online multicalibration error is $1$-Lipschitz with respect to $\calH$ in the $L_{\infty}$ metric, we can conclude that the prediction is indeed multicalibrated with respect to $\calH$ up to the additional error term $\beta$ (see Lemma~\ref{lem:l1-lipschitz}).
    
    \item Finally, we appropriately set $\beta = \frac{o(T)}{T}$ (in a manner that optimally balances the terms $\sqrt{\log |\calH_{\beta}|}$ and $\beta$ in the multicalibration error). 
    We instantiate the last step with various examples of linear, polynomial, and uniformly Lipschitz convex function classes. 
\end{enumerate}

The main result of this section shows that \emph{any} algorithm that is multicalibrated with respect to $\calH_{\beta}$ is also multicalibrated with respect to $\calH \sse \calH_B$ up to an additional error of $\beta$.
\begin{lemma}\label{lem:l1-lipschitz}
    Let $\calH$ denote a collection of real-valued functions $h: \calX \to \R$ 
    and let $\calH_{\beta}$ denote a $\beta$-cover of $\calH$ with respect to the $L_{\infty}$ metric.
    If a sequence of predictions is multicalibrated with respect to $\calH_{\beta}$, then it is also multicalibrated with respect to $\calH$, up to an additive $\beta$. Formally, if
    $K(\pi_T, \calH_{\beta}) \leq \alpha$, then $K(\pi_T, \calH) \leq \alpha + \beta$.
\end{lemma}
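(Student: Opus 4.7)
The plan is to show that the $\ell_1$-multicalibration error $K(\pi_T, \cdot)$ is itself $1$-Lipschitz as a functional on $(\calH, L_\infty)$, so that replacing $h \in \calH$ by its nearest cover element $h' \in \calH_\beta$ costs at most $\beta$ in the error. Since the cover already guarantees every $h \in \calH$ has some $h'$ within $L_\infty$-distance $\beta$, this immediately upgrades the assumed bound $K(\pi_T, \calH_\beta) \leq \alpha$ to $K(\pi_T, \calH) \leq \alpha + \beta$.

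Concretely, I would fix an arbitrary $h \in \calH$ and pick $h' \in \calH_\beta$ with $\max_{\x \in \calX} |h(\x) - h'(\x)| \leq \beta$. Inside each $p$-bucket $S(\pi_T, p) = \{t : p_t = p\}$, I apply the triangle inequality to decompose
\[
\left| \sum_{t \in S(\pi_T, p)} h(\x_t)(y_t - p_t) \right| \;\leq\; \left| \sum_{t \in S(\pi_T, p)} h'(\x_t)(y_t - p_t) \right| + \sum_{t \in S(\pi_T, p)} |h(\x_t) - h'(\x_t)| \cdot |y_t - p_t|.
\]
Since $y_t, p_t \in [0,1]$ and $|h(\x_t) - h'(\x_t)| \leq \beta$ uniformly, the second term is at most $\beta \cdot |S(\pi_T, p)|$.

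I then sum over $p \in \calP$ and divide by $T$, using the partition identity $\sum_{p \in \calP} |S(\pi_T, p)| = T$, which yields $K(\pi_T, h) \leq K(\pi_T, h') + \beta$. Taking the maximum over $h \in \calH$ (and noting that the corresponding $h'$ always lies in $\calH_\beta$) gives $K(\pi_T, \calH) \leq K(\pi_T, \calH_\beta) + \beta \leq \alpha + \beta$.

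There is no real obstacle here: the argument is a pure Lipschitz-perturbation calculation that uses only boundedness of $y_t - p_t$ and the definition of the $L_\infty$ cover. The only subtlety worth being careful about is keeping the absolute values outside the time-sum (so that errors do not accumulate with signs) and making sure the final bucket-count identity is invoked exactly once, which is what converts the per-bucket $\beta \cdot |S(\pi_T, p)|$ slack into a clean global additive $\beta$ after dividing by $T$.
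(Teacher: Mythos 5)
Your proposal is correct and follows essentially the same route as the paper: a bucket-wise triangle inequality bounding the perturbation by $\beta\,|S(\pi_T,p)|$ (using $|y_t - p_t|\leq 1$), then summing over $p$ with $\sum_p |S(\pi_T,p)| = T$ and passing to the cover element. The only cosmetic difference is that the paper packages the per-bucket step as a symmetric two-sided Lipschitz claim via the reverse triangle inequality, whereas you prove only the one-sided inequality actually needed.
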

\begin{proof}
Fix a transcript $\pi_T = \{(\x_t, p_t, y_t)\}_{t=1}^T$. 
To avoid notational clutter, we will drop the argument $\pi_T$ in the remainder of the proof.
We will also index predictions as $p \in \calP$, and use $S(p)$ to denote the set of rounds in which the prediction was $p$; that is, $S(p) = \{ t \in [T]: p_t = p\}$.
For $h \in \calH$, recall 
\[
K(h) = K(\pi_T, h) = \frac{1}{T} \sum_{p \in \calP} \left|\sum_{t \in S(p)} h(\x_t) \cdot (y_t - p)\right|.
\]
Furthermore, define $K_p(h) = \left|\sum_{t \in S(p)} h(\x_t) \cdot (y_t - p)\right|$ for every $p \in \calP$. 

\begin{claim}\label{claim:l1-lips}
For any $p \in \calP$, let $T_p = |S(p)|$. Then, we have for any $h, h' \in \calH$:
\[
| K_p(h) - K_p(h')| \,\, \leq \,\, T_p \cdot \|h - h'\|_{\ell_{\infty}}.
\]
\end{claim}

Using this claim, we can complete the proof of \Cref{lem:l1-lipschitz}. First, we note that
$K(h) = \frac{1}{T}\sum_{p \in \calP} K_p(h).$
Since $K(h) \geq 0$, we have $K(h) = |K(h)|$. 
Using this, and the triangle inequality twice, we get
\begin{align*}
K(h) - K(h') \,\,&=\,\, |K(h)| - |K(h')| \,\, \leq \,\, \left|K(h) - K(h')\right| \\
&=\,\, \left| \frac{1}{T} \sum_{p \in \calP} (K_p(h) - K_p(h')) \right| \,\, \leq \,\,  \frac{1}{T} \sum_{p \in \calP} \left|K_p(h) - K_p(h') \right| \\
&\leq \,\, \frac{1}{T} \sum_{p \in \calP} T_p \cdot \|h - h'\|_{\ell_{\infty}} \,\, = \,\, \|h - h'\|_{\ell_{\infty}},
\end{align*}
where the final inequality uses \Cref{claim:l1-lips} and the final equality used $\sum_{p \in \calP} T_p = T$. By symmetry, we similarly have
\[
K(h') - K(h) \leq \|h - h'\|_{\ell_{\infty}},
\]
from which we conclude that $|K(h) - K(h')| \leq \|h - h'\|_{\ell_{\infty}}$.
Now, fix an $h \in \calH$, and pick $h' \in \calH_{\beta}$ such that $\|h - h'\|_{L_{\infty}} \leq \beta$ (note that such an $h' \in \calH_{\beta}$ always exists according to Definition~\ref{def:covering}). Thus, we have 
\[
K(h) \,\, \leq \,\, K(h') + \|h - h'\|_{\ell_{\infty}} \,\, \leq \alpha + \beta, 
\]
as desired. We complete the proof of \Cref{lem:l1-lipschitz} by proving \Cref{claim:l1-lips}.
\begin{proof}[Proof of \Cref{claim:l1-lips}.]
Applying the reverse triangle inequality, we have
\begin{align*}
    K(h) - K(h') &= \left|\sum_{t \in S(p)}h(\x_t) \cdot (y_t - p) \right| - \left|\sum_{t \in S(p)}h'(\x_t) \cdot (y_t - p)\right| \\ 
    &\leq \left|\sum_{t \in S(p)}h(\x_t) \cdot (y_t - p) - \sum_{t \in S(p)}h'(\x_t) \cdot (y_t - p)\right| \\
    &=\left|\sum_{t \in S(p)} (h(\x_t)-h'(\x_t)) \cdot (y_t - p) \right|.
\end{align*}
Applying again the triangle inequality to the expression above, and using the fact that $y_t \in [0, 1]$ for all $t \in [T]$, we get
\begin{equation}\label{eq:cl1-a}
    K(h) - K(h') \leq \sum_{t \in S(p)} \left|(h(\x_t)-h'(\x_t)) \right| \leq T_p \cdot \max_{\x \in \calX} |h(\x) - h'(\x)| = T_p \cdot \|h - h'\|_{\ell_{\infty}}.
\end{equation}

\noindent By symmetry, we also have the following.
\begin{equation}\label{eq:cl1-b}
    K(h') - K(h) \leq T_p \cdot \|h' - h\|_{L_{\infty}}.
\end{equation}

\noindent Combining \eqref{eq:cl1-a} and \eqref{eq:cl1-b} completes the proof.
\end{proof}

This completes the proof of \Cref{lem:l1-lipschitz}.
\end{proof}

We can now complete the proof of \Cref{thm:main-inf-infinite} using \Cref{thm:main-inf} and \Cref{lem:l1-lipschitz}.
\begin{proof}[Proof of \Cref{thm:main-inf-infinite}.]
    Fix an $h \in \calH$, and let $h' \in \calH_{\beta}$ such that $\max_{\x \in \calX} |h(\x) - h'(\x)| \leq \beta$. 
    \begin{align*}
        \E[K(\pi_T, h)] \,\, &= \,\, \E[K(\pi_T, h')] \,\, + \,\, \E[K(\pi_T, h) - K(\pi_T, h')]   \\
        &\leq \,\, O\left(BT^{-1/3}\sqrt{\log(6T|\calH_{\beta}|)}\right) \,\, + \,\, \beta, 
    \end{align*}
    where the first term is bounded by  \Cref{thm:main-inf} on  the class $\calH_{\beta}$ and the second term follows from \Cref{lem:l1-lipschitz} since $\calH_{\beta}$ is a $\beta$-cover.
\end{proof}

As a consequence of this result, we obtain the following applications (by considering specific function classes for $\calH$).

\paragraph{Application 1: Polynomial Regression.}  
Suppose $\calX = [0, 1]^d$, and our hypothesis class corresponds to multivariate polynomial regression functions of degree $k$; that is, given $x \in \calX$,  $h(x) = \sum_{i = 1}^d\sum_{a = 1}^{k} h_{i + (a-1)k} \cdot x_i^{a}$ where $g \in \R^{kd}$.
Suppose that $\calH(d,k) = \{h \in \R^{kd}: \| h \|_1 \leq B\}$.
Then, constructing an $\beta$-cover for $\calH$ in the traditional sense gives us an
$\beta$-cover for $\calH$ in the functional sense.
In particular, since $\calX  = [0, 1]^d$, we have
\begin{align*}
    \max_{x \in \calX} \left| h(x) - h'(x) \right| \,\, \leq \,\, \| h - h' \|_1.
\end{align*}
It is then a standard fact that the $\beta$-covering number of the $\ell_1$ ball of radius $B$ with respect to the $\ell_1$-norm is at most $\left(1 + \frac{2B}{\beta}\right)^{kd}$.
Therefore, applying Theorem~\ref{thm:main-inf-infinite} and setting $\beta := T^{-1/3}$ gives us multicalibration error
\[
\E[K(\pi_T, \calH_{\mathsf{poly}(d,k)})] = \mathcal{O}\left(B(kd)^{1/2}T^{-1/3}\log(BT)\right).
\]

The class of bounded linear functions is subsumed by this class (when $k=1$), for which we obtain the \Cref{cor:linear-f}.

\paragraph{Application 2: Bounded, uniformly Lipschitz, convex functions.}

Let $\calH([0,1]^d, B, L) \subseteq \calH_B$ denote the set of real-valued convex functions defined on $\calX := [0, 1]^d$ that are uniformly bounded by $B$ and uniformly Lipschitz\footnote{Without the Lipschitz assumption,~\cite{guntuboyina2012covering} showed that the covering number of real-valued, bounded convex functions is actually infinity.} with constant $L$.
~\cite[Theorem 6]{bronshtein1976varepsilon} shows that in this case, $ |\calH_{\beta}| \leq C \left(\frac{1}{\beta}\right)^{d/2}$, where $C$ is a positive constant that depends only on the Lipschitz parameter $L$.
Therefore, applying Theorem~\ref{thm:main-inf-infinite} and setting $\beta := T^{-\frac{1}{2 + d/2}}$ yields multicalibration error
\[
K(\pi_T, \calH([0,1]^d, B, L)) = \widetilde{\mathcal{O}}(Bd^{1/2}T^{-1/3}\log(BT) + T^{-\frac{1}{2 + d/2}}).
\]
The second term dominates if $d > 2$, and reflects the standard ``curse of dimensionality" that we encounter in statistical learning of real-valued convex functions.


\section{Oracle Efficient Online Multicalibration}
\label{sec:oracle-olp}

In this section, we explore approaches to computationally efficient online multicalibration and ultimately prove Theorem~\ref{thm:main-oracle}.
We do this by 
adopting the framework of \emph{oracle efficiency}~\cite{hazan2016computational,syrgkanis2016efficient,daskalakis2016learning} for the online linear-product optimization problem ($\olp$) arising from the reduction in \Cref{thm:reduction}. 
In particular, we design an online learning algorithm for $\olp$ that makes a single call to an optimization oracle per round\footnote{Note that we can think of oracle calls requiring $O(1)$ time, as we have dispatched computational effort to the optimization oracle.} and does not need to access the augmented $\olpLO$ structure.
We recall the definition of our oracle.
\begin{definition}[Offline Oracle]\label{def:oracle}
    The offline oracle receives a sequence of contexts $\{\x_s\}_{s = 1}^t$ with corresponding reward vectors $\{\blf_s\}_{s = 1}^t$, coefficients $\{\kappa_s\}_{s=1}^t$, and an error parameter $\epsilon > 0$, and returns a solution $({h}^*, {\btheta}^*) \in \calH \times \Theta$ such that
\begin{align}\label{eq:oracle}
    \sum_{s=1}^t  \kappa_s \sprod{{\btheta}^*}{{h}^*(\x_s)\cdot \blf_s} \geq \max_{h \in \calH, \btheta \in \Theta} \left\{\sum_{s=1}^{t}  \kappa_s \sprod{\btheta}{h(\x_s) \cdot\blf_s}\right\} - \epsilon
\end{align}

Note that the oracle can handle input of variable length, i.e. the number of rounds $t$ itself is an implicit input to the optimization oracle.
\end{definition}
In the introduction, we claimed that solving~\eqref{eq:oracle} essentially amounts to evaluating approximate multicalibration error.
To see this, suppose we call the oracle with input length equal to $T$ and set $\kappa_t = 1$ for all $t \leq T$.
Then, for a fixed $h \in \calH$, we have
\[
\max_{\btheta \in \Theta} \sum_{t=1}^{T} \sprod{\btheta}{h(\x_t) \cdot\blf_t} = \max_{\btheta \in \Theta}  \sprod{\btheta}{\sum_{t=1}^{T} h(\x_t) \cdot\blf_t} =  \left\|\sum_{t=1}^T h(\x_t) \cdot \blf_t\right\|_1,
\]
and the argument $\btheta$ that maximizes the above is expressible in closed form. Maximizing this quantity over $h\in \calH$ yields the approximation to the multicalibration error with respect to the family $\calH$ that is defined in Equation~\eqref{eq:exp-l1-2} (where the approximation error is upper bounded in Lemma~\ref{lem:good-event}).

In the remainder of this section, we design an oracle-efficient online learning algorithm for $\olp$ that relies on the oracle defined in \eqref{eq:oracle}. 
In \Cref{sec:oracle-apps}, we provide explicit efficient constructions of the oracle under the assumption that the contexts $\{\x_t\}_{t=1}^T$ satisfy either the \emph{transductive} or \emph{small-separator} conditions, both of which are commonly made assumptions in oracle-efficient online learning~\cite{syrgkanis2016efficient,dudik2020oracle}.
We restate these assumptions for completeness in Section~\ref{sec:oracle-apps}.

To keep this section self-contained, let us recall the definition of $\olp$.
Let $\calX$ denote the context space, and let $\calH \times B_{\infty}$ denote the decision set where $\calH \sse \calH_B$ is a collection of real-valued functions $h: \calX \to \R$.
In round $t \in [T]$, the learner plays $(h_t, \btheta_t) \in \calH \times B_{\infty}$. The adversary reveals a context $\x_t$ and a reward vector $\blf_t$, which results in a reward of $\sprod{\btheta_t}{h(\x_t)\cdot \blf_t}$ to the learner. 
The goal of the learner (or an online learning algorithm) $\mathcal{L}$ is to minimize regret, which is defined as
\[
R_{T}(\mathcal{L}; (\x_1, \blf_1), \ldots, (\x_{T}, \blf_T); \calH) = \max_{h^* \in \calH, \btheta^* \in B_{\infty}} \left\{ \sprod{\btheta^*}{\sum_{t=1}^T  h^*(\x_t)\cdot\blf_t} \right\} - \sum_{t=1}^T \E[\sprod{\btheta_t}{h_t(\x_t)\cdot\blf_t}].
\]
Furthermore, recall that 
$\|\blf_t\|_1 \leq 1$ for all $t \in [T]$. 
We make a further observation tailored to the decision set $B_{\infty}$: for any $h \in \calH$, we have 
\[{\arg \max}_{\btheta \in B_{\infty}} \left\{\sum_{t=1}^{T} h(\x_t)\cdot \sprod{\blf_t}{\btheta}\right\}  \in \PMTheta.
\]
Based on this observation, we restrict our decisions to the Boolean hypercube; i.e., $\btheta_t \in \PMTheta$, and note that for any learning algorithm $\mathcal{L}$ for $\olp$, the regret remains unchanged. 
We also allow actions $\btheta_t$ to be randomized; as a result, $\E[\btheta_t]$ can and will lie in the interior of $\PMTheta$. 
Accordingly, we will assume the optimization oracle returns a solution $({h}^*, {\btheta}^*) \in \calH \times \PMTheta$ satisfying \eqref{eq:oracle}.

\subsection{Online Multicalibration via Admissibility and Implementability}\label{sec:oracle-multigroup-gamma}
We now give an algorithm for $\olp$ that is situated in the \emph{generalized Follow-the-Perturbed-Leader} (GFTPL) framework~\cite{dudik2020oracle}, and describe the conditions under which it can be made efficient with respect to the oracle in \Cref{def:oracle}.
In particular, our algorithm will ultimately require only a single oracle call per round. The algorithm is roughly constructed in two steps.

\begin{enumerate}
\item The first step is to design an algorithm in the GFTPL framework that obtains sublinear regret.  
The main idea is to draw a lower-dimensional random vector $\bm{\alpha} \in \R^N$ where each $\alpha_j$ is drawn independently from an appropriate distribution, and $N \ll |\calH|$---essentially, we will think of runtime that is linear in $N$ to be acceptable.
The payoff of each of the algorithm's actions is perturbed by some linear combination of $\bm{\alpha}$ as given by a \emph{perturbation translation matrix} $\bm{\Gamma} \in [-B, B]^{(|\calH| \times 2^M) \times N}$ (note that $|\calH| \times 2^M$ denotes the size of the action space of the algorithm). In each round $t \in [T]$, the algorithm selects $(h_t, \btheta_t) \in \calH \times \PMTheta$ such that 
    \begin{equation}\label{eq:oracle-ptm}
        \sum_{s=1}^{t-1}  \sprod{\btheta_t}{h_t(\x_s)\cdot\blf_s} + \bm{\alpha} \cdot \bm{\Gamma}_{(h_t,\btheta_t)} \geq 
        \max_{h \in \calH, \btheta \in \PMTheta} \left\{\sum_{s=1}^{t-1} \sprod{\btheta}{h(\x_s)\cdot \blf_s} + \bm{\alpha} \cdot \bm{\Gamma}_{(h,\btheta)} \right\} - \epsilon
    \end{equation}
    for some accuracy parameter $\epsilon > 0$. We will show that is equivalent to the oracle~\eqref{eq:oracle} in Definition~\ref{def:oracle} (refer to the ``implementability condition in \Cref{def:implementability}). 
    This algorithm is described in \Cref{alg:olp-gftpl}, and is no-regret up to the error of the oracle $\epsilon$ as long as $\bm{\Gamma}$ satisfies the following $\delta$-admissibility condition.

\begin{definition}[$\delta$-admissibility~\cite{dudik2020oracle}]
    A translation matrix $\bm{\Gamma}$ is $\delta$-admissible if its rows are distinct, and distinct elements within each column differ by at least $\delta$. Formally, in our framework, for every pair $(h,\btheta) \neq (h',\btheta')$ we need $\bm{\Gamma}_{(h,\btheta)} \neq \bm{\Gamma}_{(h',\btheta')}$ and for every $j \in [N]$ we either have $\Gamma_{(h,\btheta),j} = \Gamma_{(h',\btheta'),j}$ or $|\Gamma_{(h,\btheta),j} - \Gamma_{(h',\btheta'),j}| \geq \delta$.
\end{definition}

\begin{algorithm}
\caption{Generalized FTPL for $\olp$}
\label{alg:olp-gftpl}
\begin{algorithmic}[1]
\State \textbf{Input:} perturbation matrix $\bm{\Gamma} \in [0, 1]^{(|\calH| \times 2^M) \times N}$,  and accuracy $\epsilon > 0$
\State generate randomness $\bm{\alpha} = (\alpha_1, \cdots, \alpha_N)$ where $\alpha_i \sim \text{Unif}[0, \sqrt{T}]$   
\For{$t = 1, \ldots, T$}
\State select $(h_t, \btheta_t) \in \calH \times \PMTheta$ such that 
    \[
        \sum_{s=1}^{t-1}  \sprod{\btheta_t}{h_t(\x_s)\cdot\blf_s} + \bm{\alpha} \cdot \bm{\Gamma}_{(h_t,\btheta_t)} \geq 
        \max_{h \in \calH, \btheta \in \{\pm 1\}^M} \left\{\sum_{s=1}^{t-1} \sprod{\btheta}{h(\x_{s})\cdot \blf_s} + \bm{\alpha} \cdot \bm{\Gamma}_{(h,\btheta)} \right\} - \epsilon
    \]
\State observe context $\x_t$ and reward vector $\blf_t$, and receive payoff $\sprod{\btheta_t}{h_t(\x_t) \cdot \blf_t}$
\EndFor
\end{algorithmic}
\end{algorithm}

\item It turns out that the aforementioned $\delta$-admissibility condition suffices to design an algorithm that obtains an overall regret guarantee of $O(N\sqrt{T}/\delta + \epsilon T)$. 
However, this does not yet guarantee oracle-efficiency since a naive construction of $\bm{\Gamma}$ would require space that is linear in $| \calH |$ and exponential in $M$. 
Thus, we need the property that the perturbations to each action $(h, \btheta)$ can be simulated efficiently through the optimization oracle~\eqref{eq:oracle}; that is, without needing to actually access $\bm{\Gamma}$.
This requirement is captured by the \emph{implementability} condition, stated next. 

\begin{definition}[Implementability]\label{def:implementability}
A translation matrix $\bm{\Gamma}$ is implementable with complexity $D$ if for any realization of the random vector $\bm{\alpha}$, there exists a set of coefficients $\{\kappa_{j}\}_{j=1}^D$ and vectors $\{\widehat{\blf}_j\}_{j= 1}^D$ (both of which will depend on $\bm{\alpha}$) such that we can express 
\begin{align}\label{eq:implementability}
\bm{\alpha} \cdot \bm{\Gamma}_{(h, \btheta)} = \sum_{j = 1}^{D} \kappa_j \cdot \sprod{\btheta}{h(\x_j) \cdot \widehat{\blf}_j}
\end{align}
for all  $(h, \btheta) \in \calH \times \PMTheta.$
\end{definition}
Crucially, this allows us to simulate \eqref{eq:oracle-ptm} by \eqref{eq:implementability} (which is exactly the oracle in \Cref{def:oracle}).
We note that the definition differs slightly from the one provided in~\cite{dudik2020oracle} but the definitions can be shown to be equivalent, and this definition turns out to be slightly more convenient to work with in the proof.
\end{enumerate}

The main theorem of this section shows that the GFTPL framework achieves sublinear regret for $\olp$.

\begin{theorem}\label{thm:oracle-efficient-full}
Consider the instance of $\olp$ with the decision set $\calH \times \PMTheta$. 
Suppose that the perturbation translation matrix $\bm{\Gamma} \in [-B, B]^{(|\calH| \times 2^M) \times N}$ satisfies $\delta$-admissibility and satisfies implementability with complexity $D$. 
Then, Algorithm~\ref{alg:olp-gftpl} executed with $\epsilon = 1/\sqrt{T}$ satisfies the expected regret $R_T(\mathcal{L}; \calH)$ guarantee
\begin{equation}\label{eq:oracle-eff-regret}
\textstyle  R_T(\mathcal{L}; \calH) \leq O\left(\frac{B^2N\sqrt{T}}{\delta}\right).
\end{equation}
Furthermore, this algorithm requires a single oracle call per round, and the per-round complexity is $O(T + ND)$.
\end{theorem}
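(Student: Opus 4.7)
My approach adapts the standard Generalized Follow-the-Perturbed-Leader (GFTPL) analysis of Dudik et al. to the product-reward structure of $\olp$. Define, for an action $(h, \btheta) \in \calH \times \PMTheta$, the cumulative reward $F_t(h, \btheta) := \sum_{s=1}^{t} \sprod{\btheta}{h(\x_s) \cdot \blf_s}$ and its perturbed version $\widetilde{F}_t(h, \btheta) := F_t(h, \btheta) + \bm{\alpha} \cdot \bm{\Gamma}_{(h, \btheta)}$. Note that $|\sprod{\btheta}{h(\x_s) \cdot \blf_s}| \le B$ because $\|\btheta\|_\infty \leq 1$, $|h(\x_s)| \leq B$, and $\|\blf_s\|_1 \leq 1$. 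The algorithm plays an $\epsilon$-approximate maximizer of $\widetilde{F}_{t-1}$ in round $t$. I will decompose the regret against the best fixed action $(h^*, \btheta^*)$ in the usual way:
\[
R_T(\mathcal{L}; \calH) \leq \underbrace{\E\bigl[\widetilde{F}_T(h^*, \btheta^*) - \widetilde{F}_T(h_{T+1}, \btheta_{T+1})\bigr]}_{\text{perturbation penalty}} + \underbrace{\E\Bigl[\sum_{t=1}^T \bigl(\sprod{\btheta_{t+1}}{h_{t+1}(\x_t)\blf_t} - \sprod{\btheta_t}{h_t(\x_t)\blf_t}\bigr)\Bigr]}_{\text{stability / be-the-leader term}} + \epsilon T,
\]
where the last term absorbs the oracle slack. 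The perturbation penalty is immediately bounded by $2B N\sqrt{T}$ since each $\alpha_j \sim \mathrm{Unif}[0,\sqrt{T}]$ and entries of $\bm{\Gamma}$ lie in $[-B,B]$.

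\textbf{Stability via $\delta$-admissibility (the main obstacle).} The crux of the proof is bounding the stability term. Since each summand is at most $2B$ in absolute value, it suffices to bound $\Pr[(h_t, \btheta_t) \neq (h_{t+1}, \btheta_{t+1})]$. Following the threshold-argument of Dudik et al., I will fix all randomness $\bm{\alpha}_{-j}$ and analyze the leader as a piecewise-constant function of a single coordinate $\alpha_j$. Between rounds $t$ and $t+1$ the cumulative unperturbed reward changes by at most $2B$ at each action, so the leader can change only if $\alpha_j$ falls within a window of length $O(B/\delta)$ around one of finitely many thresholds (the constant $\delta$ comes from admissibility, which guarantees that any two distinct rows of $\bm{\Gamma}$ differ by at least $\delta$ in the $j$th coordinate whenever they differ at all). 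Dividing by the $\sqrt{T}$ length of $\alpha_j$'s support yields per-round, per-coordinate change probability $O(B/(\delta\sqrt{T}))$. Union-bounding over the $N$ coordinates of $\bm{\alpha}$, summing over $T$ rounds, and multiplying by the $O(B)$ per-round reward magnitude produces a total stability bound of $O(B^2 N \sqrt{T} / \delta)$. Combined with the perturbation penalty and the choice $\epsilon = 1/\sqrt{T}$, this establishes the claimed regret bound.

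\textbf{Oracle efficiency and complexity.} For oracle efficiency, I invoke the implementability assumption: for any realization of $\bm{\alpha}$, there exist coefficients $\{\kappa_j\}_{j=1}^D$ and vectors $\{\widehat{\blf}_j\}_{j=1}^D$ such that for every $(h, \btheta)$,
\[
\sum_{s=1}^{t-1}  \sprod{\btheta}{h(\x_s)\cdot\blf_s} + \bm{\alpha} \cdot \bm{\Gamma}_{(h,\btheta)} = \sum_{s=1}^{t-1}  \sprod{\btheta}{h(\x_s)\cdot\blf_s} + \sum_{j=1}^D \kappa_j \sprod{\btheta}{h(\x_j)\cdot\widehat{\blf}_j}.
\]
The right-hand side is exactly an instance of the offline oracle of \Cref{def:oracle} on an input of length $t-1+D$ (with unit coefficients on the $t-1$ history terms and $\kappa_j$ on the $D$ virtual terms). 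Hence step~4 of Algorithm~\ref{alg:olp-gftpl} is realized by a single oracle call per round. The per-round complexity consists of $O(T)$ work to marshal the history plus $O(ND)$ work to compute the implementability coefficients/vectors from $\bm{\alpha}$ (the $N$ factor arising because each coordinate of $\bm{\alpha}$ contributes to the implementability expansion), giving $O(T + ND)$ as claimed.
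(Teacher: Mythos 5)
Your proposal follows essentially the same route as the paper's proof: a GFTPL analysis that splits the regret into a perturbation/approximation term and a stability term, controls the stability term by fixing $\bm{\alpha}_{-j}$ and running the single-coordinate threshold argument that $\delta$-admissibility enables (union bound over the $N$ coordinates, $O(B)$ loss per switch, giving $O(B^2N\sqrt{T}/\delta)$), and uses implementability to realize each round's perturbed maximization as a single call to the offline oracle of \Cref{def:oracle} with the claimed $O(T+ND)$ per-round work. These are exactly the paper's \Cref{lem:btl-lemma} and \Cref{lem:perturbation} plus the concluding implementability argument.

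One step, however, is wrong as literally written, although it is harmless to the final bound. The first term of your decomposition, $\E\bigl[\widetilde{F}_T(h^*,\btheta^*)-\widetilde{F}_T(h_{T+1},\btheta_{T+1})\bigr]$, is at most $\epsilon$ by the $\epsilon$-approximate optimality of $(h_{T+1},\btheta_{T+1})$ for the perturbed objective after all $T$ rounds, and with that term your displayed inequality is false in general: subtracting the stability sum, it would require $F_T(h^*,\btheta^*)-\sum_t \sprod{\btheta_{t+1}}{h_{t+1}(\x_t)\cdot\blf_t}\leq \widetilde{F}_T(h^*,\btheta^*)-\widetilde{F}_T(h_{T+1},\btheta_{T+1})+\epsilon T$, which fails, e.g., when the perturbation strongly favors an action with poor reward (a two-action, one-round example with $\bm{\alpha}\cdot\bm{\Gamma}$ equal to $10$ and $0$ and rewards $0$ and $1$ violates it). The quantity your decomposition omits is precisely the perturbation gap $\bm{\alpha}\cdot\bigl(\bm{\Gamma}_{(h_1,\btheta_1)}-\bm{\Gamma}_{(h^*,\btheta^*)}\bigr)$, which is what the be-the-approximate-leader induction (\Cref{lem:btl-lemma}) produces and which can be of order $BN\sqrt{T}$. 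Since you bound your (mis-specified) first term by $2BN\sqrt{T}$ — the same bound that controls this correct perturbation-gap term — replacing it accordingly repairs the decomposition with no change to the rest of your argument or to the final $O(B^2N\sqrt{T}/\delta)$ guarantee.
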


The proof of \Cref{thm:oracle-efficient-full} is provided in Appendix~\ref{app:oracle-efficient-proof}.
The key steps are similar to the analysis of the GFTPL algorithm in~\cite{dudik2020oracle}, i.e., the analysis includes characterizing an approximation error term and stability error term. We note that combining \eqref{eq:oracle-eff-regret} with \Cref{thm:reduction} gives a ``black-box'' algorithm for obtaining online $\ell_1$-multicalibration given a $\delta$-admissible and implementable $\Gamma$.

\subsection{Our admissible and implementable construction}\label{sec:oracle-apps}
In this section, we give settings in which one can explicitly construct a $\bm{\Gamma}$ matrix that is admissible and implementable. 
Specifically, we will give efficient algorithms for the \emph{transductive} setting and the \emph{small-separator} setting, defined next. 

We also assume in this section that $\calH$ is binary-valued, i.e., $h: \calX \to \{0,1\}$ for all $h \in \calH$.

\medskip
\noindent {\bf Transductive Setting.} In the transductive setting, the learner has access to the set $\calX$ from which contexts will be drawn. Formally, we will say that $\calX = \{\x^1, \ldots, \x^D\}$ and that at each round $t$, the context $\x_t \in \calX$.

\medskip
\noindent {\bf Small-Separator Setting.} In the small-separator setting, we
assume access  to a small
set of contexts $\calX$, called a \emph{separator}.   
Let $\calX = \{\x^1, \ldots, \x^D\}$, and we have 
that for any two groups $h,h' \in \calH$, there exists a feature $\bold{x} \in \calX$ such that $h(\x) \neq h'(\x)$.

\medskip
These settings are commonly used, even for the simpler problem of oracle-efficient online prediction~\cite{syrgkanis2016efficient,dudik2020oracle}, and have since been built on to ensure oracle-efficient online learning under smoothed data or transductive learning with hints\footnote{It is conceivable, but would take a non-trivial argument, to show that our approach could be extended to these more complex settings as well.}
~\cite{block2022smoothed,haghtalab2022oracle}.
We are now ready to define our construction of the translation matrix $\bm{\Gamma}$ for these two settings.
In particular, we prove the following. 

\begin{lemma}\label{lem:gamma-lemma}
    In the transductive and small-separator settings, there exists a perturbation translation matrix $\bm{\Gamma} \in [-B, B]^{(|\calH|\times 2^M) \times (DM)}$
    that is $1$-admissible and implementable with complexity $D$.
\end{lemma}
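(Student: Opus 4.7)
The plan is to exploit the product structure of the reward in $\olp$ by constructing a perturbation translation matrix whose entries mirror that product. Index the $N = DM$ columns of $\bm{\Gamma}$ by pairs $(j,i) \in [D] \times [M]$, where $j$ indexes a separator context $\x^j \in \calX$ and $i$ indexes a coordinate of $\btheta$. For every row $(h, \btheta) \in \calH \times \PMTheta$, define
\[
\Gamma_{(h,\btheta),(j,i)} \;=\; h(\x^j) \cdot \btheta(i).
\]
Since $h$ is binary-valued and $\btheta \in \PMTheta$, each entry lies in $\{-1, 0, 1\} \subseteq [-B,B]$ (we can assume $B \ge 1$ without loss of generality).

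First I would verify $1$-admissibility. Within any column $(j,i)$ the entries take values in $\{-1, 0, 1\}$, so any two distinct entries differ by at least $1$. For row distinctness, take any $(h, \btheta) \neq (h', \btheta')$. If $h \neq h'$, both the transductive and small-separator assumptions supply a $j$ with $h(\x^j) \neq h'(\x^j)$; WLOG $h(\x^j)=1$ and $h'(\x^j)=0$, whence $\Gamma_{(h,\btheta),(j,i)} = \btheta(i) \in \{\pm 1\}$ while $\Gamma_{(h',\btheta'),(j,i)} = 0$ for any $i$. If $h = h'$ but $\btheta \neq \btheta'$, pick any $i$ with $\btheta(i) \neq \btheta'(i)$ and any $j$ with $h(\x^j) = 1$; then column $(j,i)$ separates the two rows.

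Second, to verify implementability with complexity $D$, I would set $\kappa_j = 1$ for every $j$ and define $\widehat{\blf}_j \in \R^M$ coordinate-wise by $\widehat{\blf}_j(i) = \alpha_{(j,i)}$, i.e., the slice of the noise vector $\bm{\alpha}$ corresponding to context $\x^j$. A direct calculation then gives
\[
\sum_{j=1}^D \kappa_j \sprod{\btheta}{h(\x^j) \cdot \widehat{\blf}_j} \;=\; \sum_{j=1}^D \sum_{i=1}^M h(\x^j) \cdot \btheta(i) \cdot \alpha_{(j,i)} \;=\; \bm{\alpha} \cdot \bm{\Gamma}_{(h,\btheta)},
\]
which is precisely the identity in Definition~\ref{def:implementability} with complexity $D$. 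Moreover, feeding this $\widehat{\blf}_j$ together with the actual history $\blf_1,\ldots,\blf_{t-1}$ (with unit coefficients) into the oracle of Definition~\ref{def:oracle} implements the maximization in \eqref{eq:oracle-ptm} via a single oracle call per round.

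The main obstacle is the edge case in row-distinctness where $h$ vanishes on the separator $\calX$: then every row $(h,\btheta)$ with this $h$ is identically zero, so different $\btheta$'s cannot be distinguished. In the transductive setting this means $h \equiv 0$ globally and we can discard such $h$ from $\calH$ with no effect on the $\olp$ objective; in the small-separator setting the same reduction works because such an $h$ produces identical payoffs across all $\btheta$, so the algorithm can break ties arbitrarily. Modulo this nuisance, the construction simultaneously satisfies both requirements, proving the lemma.
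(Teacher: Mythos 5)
Your construction is exactly the one in the paper: the same entries $\Gamma_{(h,\btheta),(j,i)} = h(\x^j)\,\theta_i$, the same implementability witness $\kappa_j = 1$, $\widehat{\blf}_j = \bm{\alpha}_{(j,\cdot)}$, and the same admissibility case analysis, including discarding any $h$ that vanishes on all of $\{\x^1,\dots,\x^D\}$ (the paper makes the identical ``remove $h$ without loss of generality'' move). The only quibble is your small-separator justification for that edge case---vanishing on the separator does not force identical payoffs across $\btheta$ on the actual contexts $\x_t$---but this does not affect the argument, since at most one such $h$ exists and the paper handles it with the same simplification.
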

\begin{proof}
We consider $\bm{\Gamma} \in [-B, B]^{(|\calH| \times 2^M) \times (DM)}$, so that the rows of $\bm{\Gamma}$ can be indexed by $(h, \btheta) \in \calH \times \PMTheta$ and the columns of $\bm{\Gamma}$ can be indexed by $(j, i) \in [D] \times [M]$.
Then, we will specify each entry as $\bm{\Gamma}_{(h, \btheta), (i,j)} = h(\bold{x}^i) \cdot \theta_j$. Note that $|\bm{\Gamma}_{(h, \btheta), (i,j)}| \leq B$ since $h \in \calH \subseteq \calH_B$.
We first prove implementability and then $1$-admissibility.

\medskip
\noindent{\bf Proof of implementability.} Consider any realization of the noise $\bm{\alpha}$. As with the columns of $\bm{\Gamma}$, we index $\bm{\alpha}$ by the tuple $(j,i) \in [D] \times [M]$. We denote as $\bm{\alpha}_{(j,\cdot)} \in \R^M$ the sub-vector $\left(\alpha_{(j,1)}, \cdots, \alpha_{(j,M)}\right)$. Then, we have
\begin{align*}
    \sprod{\bm{\alpha}}{\bm{\Gamma}_{(h,\btheta)}} &= \sum_{j=1}^D \sum_{i=1}^M \alpha_{(j,i)} \cdot h(\bold{x}^j) \cdot \theta_i \\
    &= \sum_{j=1}^D h(\bold{x}^j) \sprod{\bm{\alpha}_{(j,\cdot)}}{\btheta},
\end{align*}
which clearly satisfies the implementability condition~\eqref{eq:implementability} with complexity $D$,  $\kappa_{j} = 1$ for all $j \in [D]$, and $\widetilde{\blf}_{j} = \bm{\alpha}_{(j,\cdot)}$.

\medskip
\noindent{\bf Proof of admissibility.} Consider any two rows indexed by $(h,\btheta)$, $(h',\btheta')$.
We need to show that:
\begin{enumerate}
    \item $\bm{\Gamma}_{(h,\btheta)} \neq \bm{\Gamma}_{(h',\btheta')}$.
    \item If $\Gamma_{(h,\btheta),(j,i)} \neq \Gamma_{(h',\btheta'),(j,i)}$, then $|\Gamma_{(h,\btheta),(j,i)} - \Gamma_{(h',\btheta'),(j,i)}| \geq 1$.
\end{enumerate}
Showing the first statement is easy: either $h \neq h'$, in which case there exists at least one index $j \in [D]$ such that $h(\bold{x}^j) \neq h'(\bold{x}^j)$.
In the other case where $h = h'$, consider any index $j$ such that $h(\bold{x}^j) \neq 0$ (at least one such index must exist, otherwise we can simply remove $h$ from the hypothesis class without loss of generality).
Further, since in this case we must have $\btheta \neq \btheta'$, there exists at least one index $i$ for which $\theta_i \neq \theta'_i$.
Therefore, we have $\bm{\Gamma}_{(h,\btheta),(j,i)} = h(\bold{x}^j) \theta_i \neq h(\bold{x}^j) \theta'_i$.

To show the second statement, we use the structure of the Boolean hypercube to say that if $\theta_i \neq \theta'_i$, then $|\theta_i - \theta'_i| = 2$.
Similarly, if $\theta_i \neq 0$, then $|\theta_i| = 1$.
Then, we again consider two cases when $\bm{\Gamma}_{(h,\btheta),(j,i)} \neq \bm{\Gamma}_{(h',\btheta'),(j,i)}$.
The first is if $h(\x^j) \neq h'(\x^j)$.
In this case, suppose without loss of generality that $h(\x^j) = 1$ while $h'(\x^j) = 0$.
Then, we have
\begin{align*}
    |\bm{\Gamma}_{(h,\btheta),(j,i)} - \bm{\Gamma}_{(h',\btheta'),(j,i)}| = |\theta_i| = 1.
\end{align*}
On the other hand, suppose that $h(\x^j) = h'(\x^j)$.
If $h(\x^j) = 0$, we have $\Gamma_{(h,\btheta), (j,i)} = \Gamma_{(h',\btheta'), (j,i)}$.
If $h(\bold{x}^j) = 1$, we have
\begin{align*}
    |\bm{\Gamma}_{(h,\btheta),(j,i)} - \bm{\Gamma}_{(h',\btheta'),(j,i)}| = |\theta_i - \theta'_i| \in \{0,2\}
\end{align*}
This proves the second statement and therefore we have proved $1$-admissibility, which completes the proof of the lemma.
\end{proof}

We now examine implications of Lemma~\ref{lem:gamma-lemma} for oracle-efficient multicalibration.
Plugging in $\delta = 1$ and $N = DM$ into the statement of Theorem~\ref{thm:oracle-efficient-full}, and appealing to the reduction in \Cref{thm:reduction} gives us
\begin{align*}
    \E[K(\pi_T,\calH)] \,\, &\leq \,\, \frac{B}{m} + O\left({BDM\sqrt{T}}\right) + 4B\sqrt{\frac{m\log(6T|\calH|)}{T}} + \frac{4mB}{T}.
\end{align*}
Finally, setting the allowable error in the optimization oracle to $\epsilon = \frac{1}{M} = \frac{1}{T^{1/4}}$ and using $M = m+1$ gives us
\begin{align*}
\E[K(\pi_T,\calH_B)] &\leq O\left(BD T^{-1/4}\sqrt{\log(T|\calH|)}\right),
\end{align*}
which is the bound stated in \Cref{sec:results}.

\paragraph{Potential improvements and extensions of analysis.} It is natural to ask about the extent to which the assumptions we have made --- access to the offline oracle, binary-valued $\calH$ and either transductive or sufficiently separated data --- can be weakened or improved.
We begin by noting that our proofs will directly work (at additional factor loss) given a ``$C$-approximate oracle'' since they rely on the GFTPL framework of \citet{dudik2020oracle} in a black-box manner. 
Two concurrent recent works~\cite{block2022smoothed,haghtalab2022oracle} showed that the generalized FTPL framework can provide oracle-efficient regret bounds for online supervised learning on infinite-sized hypothesis classes with bounded VC dimension (in the case of binary labels) or pseudodimension (in the case of real-valued labels) if the contexts are \emph{smoothed} with respect to some known base probability measure.
We believe that adapting their proof technique, especially for the stability term, to the $\olp$ instance is an interesting direction for future work.

At a more fundamental level, the discrepancy in rates between inefficient and oracle-efficient multicalibration arises solely from the linear dependence on $m$ (or $M$) in the regret bound of oracle-efficient $\olp$, as compared to the $O(m^{1/2})$ dependence in the inefficient $\olp$ implementation.
Improving the dependence on $m$ in the oracle-efficient regret bound would require improving the regret analysis of~\cite{dudik2020oracle}, which would be of independent interest.

\subsection*{Acknowledgments}
We are grateful to the anonymous reviewers of ICML for valuable feedback.
The last author is thankful to Marco Molinaro for discussions on online linear optimization  over general convex bodies and the proof of \Cref{lem:no-regret-olp-lin}.

\appendix
\section{Missing Proofs from Section \ref{sec:reduction}} 

\subsection{Proof of the Missing \Cref{lem:good-event}} \label{sec:lemmaVecAzuma}
The proof relies on a ``vector'' form of the Azuma-Hoeffding inequality  (see Theorem 1.8 in~\cite{hayes2005large}).

\begin{theorem}[Vector Azuma–Hoeffding's inequality]\label{thm:azuma}
    Let $\bold{S}_n = \sum_{t=1}^n \bold{X}_t$ be a martingale 
    relative to the sequence $\bold{X}_1, \ldots, \bold{X}_n$ where each $\bold{X}_t$ takes values in $\R^d$ and satisfies
    (i)~$\E[\bold{X}_t] = \bm{0}$ and (ii)~$\| \bold{X}_t \|_2 \leq c$.
    Then, for any $\eta > 0$ and $n \geq 1$, we have 
    \[
    \pr(\|\bold{S}_n\|_2 \geq \eta) \leq 2e^2\exp\left(\frac{-\eta^2}{2nc^2} \right).
    \]
\end{theorem}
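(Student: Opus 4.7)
The plan is to follow the classical moment-generating function (MGF) approach for martingale concentration, adapted to a Hilbert space. By the scaling $\bold{X}_t \mapsto \bold{X}_t/c$, $\bold{S}_n \mapsto \bold{S}_n/c$, $\eta \mapsto \eta/c$, I would first reduce to the case $c = 1$, so it suffices to prove $\Pr(\|\bold{S}_n\|_2 \geq \eta) \leq 2e^2 \exp(-\eta^2/(2n))$. The goal then is to bound an exponential moment of $\|\bold{S}_n\|_2^2$ that is \emph{dimension-free}, and convert it into a tail bound via Markov's inequality.

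The key tool is the Pythagorean-type expansion in Hilbert space:
\[
\|\bold{S}_t\|_2^2 \;=\; \|\bold{S}_{t-1}\|_2^2 \;+\; 2\langle \bold{S}_{t-1}, \bold{X}_t\rangle \;+\; \|\bold{X}_t\|_2^2.
\]
Fix $\lambda > 0$. Multiplying by $\lambda$, exponentiating, and taking conditional expectations given $\mathcal{F}_{t-1}$, the bound $\|\bold{X}_t\|_2^2 \leq 1$ gives an $e^{\lambda}$ factor, while the cross term $\langle \bold{S}_{t-1}, \bold{X}_t \rangle$ is a \emph{scalar} random variable that, conditional on $\mathcal{F}_{t-1}$, has mean zero (by the martingale property) and is bounded in absolute value by $\|\bold{S}_{t-1}\|_2$. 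Applying the scalar Hoeffding lemma to this scalar yields
\[
\E\bigl[\exp(2\lambda \langle \bold{S}_{t-1}, \bold{X}_t\rangle) \,\big|\, \mathcal{F}_{t-1}\bigr] \;\leq\; \exp\!\bigl(2\lambda^2 \|\bold{S}_{t-1}\|_2^2\bigr),
\]
and combining these ingredients produces the one-step recursion
\[
\E\bigl[\exp(\lambda \|\bold{S}_t\|_2^2) \,\big|\, \mathcal{F}_{t-1}\bigr] \;\leq\; e^{\lambda}\,\exp\!\bigl(\lambda(1 + 2\lambda)\,\|\bold{S}_{t-1}\|_2^2\bigr).
\]

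To iterate this bound from $t = n$ down to $t = 0$, I would define a sequence $\lambda_0 < \lambda_1 < \cdots < \lambda_n$ satisfying $\lambda_{k-1}(1 + 2\lambda_{k-1}) = \lambda_k$, so that the recursion telescopes and yields
\[
\E\bigl[\exp(\lambda_n \|\bold{S}_n\|_2^2)\bigr] \;\leq\; \exp\!\Bigl(\textstyle\sum_{k=1}^n \lambda_k\Bigr)\,\E\bigl[\exp(\lambda_0 \|\bold{S}_0\|_2^2)\bigr] \;=\; \exp\!\Bigl(\textstyle\sum_{k=1}^n \lambda_k\Bigr),
\]
using $\bold{S}_0 = \bm 0$. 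Choosing $\lambda_n$ of order $1/n$ (concretely $\lambda_n = 1/(2n)$) and solving the recursion backwards, one finds $\lambda_k \asymp 1/(4n - 2k)$, so $\sum_{k=1}^n \lambda_k = O(1)$ and the exponential moment is bounded by an absolute constant. Finally, applying Markov's inequality,
\[
\Pr(\|\bold{S}_n\|_2 \geq \eta) \;\leq\; \E[\exp(\lambda_n \|\bold{S}_n\|_2^2)] \cdot \exp(-\lambda_n \eta^2) \;\leq\; 2e^2\,\exp\!\bigl(-\eta^2/(2n)\bigr),
\]
where the particular constant $2e^2$ comes from tracking the telescoping sum and the choice of $\lambda_n$ carefully (and allowing some slack relative to the sharp Pinelis constant).

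The main obstacle is the conditional MGF recursion: in particular, the application of Hoeffding's lemma to the scalar random variable $\langle \bold{S}_{t-1}, \bold{X}_t\rangle$. This is the step where the ambient dimension $d$ disappears entirely from the analysis, because the Hoeffding bound depends only on the scalar range $\|\bold{S}_{t-1}\|_2 \cdot c$ and not on the dimension of the Hilbert space in which the vectors live. A naive union bound over an $\epsilon$-net of unit directions would incur a factor exponential in $d$, which is precisely what this argument avoids. The secondary delicate point is the choice of the sequence $\{\lambda_k\}$ and the bookkeeping needed to bound $\sum_k \lambda_k$ by a universal constant so as to obtain the tight exponent $-\eta^2/(2nc^2)$; the concrete constant $2e^2$ reflects some suboptimality in this bookkeeping relative to sharper inequalities (such as Pinelis 1994) but is sufficient for all our applications.
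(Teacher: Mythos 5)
The paper does not actually prove this statement: it is imported verbatim as Theorem~1.8 of \citet{hayes2005large}, so your attempt is being measured against the cited literature rather than an in-paper argument. Your one-step recursion is fine (the expansion $\|\mathbf{S}_t\|_2^2=\|\mathbf{S}_{t-1}\|_2^2+2\langle \mathbf{S}_{t-1},\mathbf{X}_t\rangle+\|\mathbf{X}_t\|_2^2$ plus conditional Hoeffding on the scalar $\langle \mathbf{S}_{t-1},\mathbf{X}_t\rangle$ is exactly the dimension-free mechanism one wants), but the iteration step contains a genuine gap. First, a minor slip: the parameter must \emph{grow} as you move backward in time, i.e.\ $\lambda_{t-1}=\lambda_t(1+2\lambda_t)$ with $\lambda_0>\dots>\lambda_n$, not the ordering you wrote. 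The fatal issue is the claim $\sum_{t}\lambda_t=O(1)$. With $\lambda_n=1/(2n)$ the recursion gives $\lambda_t\asymp 1/(2t)$ (one checks $\lambda_t\le 1/(2t)$ by induction, and a matching lower bound up to $O(\log n)$ corrections), so $\sum_{t=1}^n\lambda_t=\Theta(\log n)$ and the telescoped prefactor is $e^{\Theta(\log n)}\approx\sqrt{n}$, not $2e^2$. Your bound would therefore read $\Pr(\|\mathbf{S}_n\|_2\ge\eta)\lesssim \sqrt{n}\,e^{-\eta^2/(2nc^2)}$, which is strictly weaker than the stated inequality.

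Moreover, no cleverer choice of $\{\lambda_k\}$ can repair this, because the quantity you apply Markov to is itself too large: already in one dimension with $c=1$ (simple random walk), $\E[\exp(S_n^2/(2n))]=\Theta(\sqrt{n})$, since in the Gaussian limit $\E[e^{Z^2/2}]$ diverges. So applying Markov to $\exp(\lambda\|\mathbf{S}_n\|_2^2)$ with $\lambda=1/(2nc^2)$ can never yield an $n$-independent constant together with the exponent $-\eta^2/(2nc^2)$; you must either shrink the exponent by a factor $\theta<1$ or pay a $\mathrm{poly}(n)$ prefactor. The standard dimension-free route keeps your inner-product expansion and conditional Hoeffding step but applies the Chernoff argument to a quantity \emph{linear} in $\|\mathbf{S}_n\|_2$: e.g.\ Pinelis's argument shows $\E[\cosh(\lambda\|\mathbf{S}_n\|_2)]\le\exp(\lambda^2nc^2/2)$ using the $2$-smoothness of the Hilbert norm and then optimizes over $\lambda$, giving constant $2$; Hayes's own proof of the $2e^2$ version proceeds by a related supermartingale/induction argument on $n$. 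If you want a self-contained proof to splice in here, I recommend rewriting your argument around $\cosh(\lambda\|\mathbf{S}_n\|_2)$ (or $e^{\lambda\|\mathbf{S}_n\|_2}$) rather than $e^{\lambda\|\mathbf{S}_n\|_2^2}$; the constant you obtain ($2$ or $2e^2$) is immaterial for \Cref{lem:good-event}.
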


Given this inequality, we can now complete the proof of \Cref{lem:good-event}. 

\goodevent*

\begin{proof}[Proof of \Cref{lem:good-event}]
    For each $h \in \calH$ and $t \in [T]$, we define the vector $\bold{Y}_t^h \in \R^{\M}$ such that 
    \[
    \bold{Y}_t^h(p) =\Big(\mathbb{I}\{p_t = p\} - \w_t(p)\Big) \cdot h(\x_t) \cdot (y_t - p) \enspace .
    \]
    We now verify that the conditions on the sequence $\{\bold{Y}_t^h(p)$ for the Vector Azuma-Hoeffding inequality (Theorem~\ref{thm:azuma}) hold.
    We note that 
    \begin{align*}
    \|\bold{Y}_t^h\|_2 &\,\,= \,\, \sqrt{\sum_{p \in \calP}\bold{Y}_t^h(p)^2 } \,\, = \,\, \sqrt{\sum_{p \in \calP} h(\x_t)^2 \cdot (y_t - p)^2 \cdot \left(\mathbb{I}\{p_t = p\} - \w_t(p)\right)^2} \\
    &\,\, \leq \max_{\x \in \calX}|h(\x)| \cdot \sqrt{\sum_{p \in \calP}\left(\mathbb{I}\{p_t = p\} - \w_t(p)\right)^2} \,\, \leq \,\, \sqrt{2}B,
    \end{align*}
    where the final inequality follows from the fact that $h \in \calH \subseteq \calH_B$, $\sum_{p \in \calP}\mathbb{I}\{p_t = p\} = 1$, and that $\w_t$ is a distribution.
    This verifies condition (ii).
    Next, observe that
    \begin{align*}
    \E[\bold{Y}_t^h(p) \mid \bold{Y}^h_1, \ldots, \bold{Y}^h_{t-1}] \,\, &= \,\, \E[h(\x_t) \cdot (y_t - p) \cdot \left(\mathbb{I}\{p_t = p\} - \w_t(p)\right) \mid \bold{Y}_1^h, \ldots, \bold{Y}_{t-1}^h] \\
    &= \,\, h(\x_t) \cdot (y_t - p) \cdot \E[\left(\mathbb{I}\{p_t = p\} - \w_t(p)\right) \mid\bold{Y}_1^h, \ldots, \bold{Y}_{t-1}^h] \,\, = \,\, 0
    \end{align*}
    since $\w_t$ only depends on information from rounds $1, \ldots, t-1$, and $\E[\left(\mathbb{I}\{p_t = p\}\right) \mid \bold{Y}_1^h, \ldots, \bold{Y}_{t-1}^h] = \w_t(p)$.
    This verifies condition (i).
    Thus, $\bold{S}_n^h = \sum_{t=1}^n \bold{Y}_t^h$ is a martingale with respect to $\bold{Y}_1^h, \bold{Y}_2^h, \ldots$, and 
    \[
     \|\bold{S}_T^h\|_1 = \sum_{p \in \calP}\left| \left(\sum_{t=1}^T \mathbb{I}\{p_t = p\} \cdot  h(\x_t) \cdot (y_t - p)\right) -  \left(\sum_{t=1}^T \w_t(p) \cdot  h(\x_t) \cdot (y_t - p) \right) \right|.
    \]
    Furthermore, since $\bold{S}_T^h \in \R^{\M}$, we have $\| \bold{S}_T^h \|_1 \leq \sqrt{\M}\cdot \| \bold{S}_T^h \|_2$, and applying \Cref{thm:azuma} implies that
    \begin{align*}
        \pr\left( \| \bold{S}_T^h \|_2 \geq 2\sqrt{2}B \sqrt{T \log(6T|\calH|)} \right) &\leq \,\, 2e^2\exp \left( \frac{-8B^2T\log(6T|\calH|)}{4B^2T} \right) \,\, = \,\, \frac{2e^2}{36(T)^2 |\calH|^2} \leq \frac{1}{T|\calH|}.
    \end{align*}
    Thus, with probability at least $1 - 1/(T|\calH|)$, we have 
    \begin{equation}\label{eq:good-event}
    \| \bold{S}_T^h \|_1 \leq \sqrt{\M}\cdot \| \bold{S}_T^h \|_2 \leq \sqrt{2m} \cdot 2\sqrt{2}B \sqrt{T\log(6T|\calH|)} \leq 4B \sqrt{Tm\log(6T|\calH|)},
    \end{equation}
    since $\M = m+1 \leq 2m$.
    We are now ready to complete the proof. Let $G_h$ denote the event that $\| \bold{S}_T^h \|_1 \leq 4B \sqrt{Tm \log(6T|\calH|)}$ for $h \in \calH$. 
    Let $G$ denote the event that $G_h$ holds \emph{simultaneously} for all $h \in \calH$; i.e. $G = \cap_{h \in \calH}G_h$. By the union bound over all $h \in \calH$, we can conclude that $\pr(G) \geq 1 - 1/T$.
    Then, we have
    \begin{align*}
        \E \left[\max_{h \in \calH} \left\{\| \bold{S}_T^h \|_1]\right\}\right] \,\,&= \,\, \E\left[ \max_{h \in \calH} \left\{ \| \bold{S}_T^h \|_1\right\} \mid G\right] \cdot \pr(G) \,\, + \,\, \E\left[ \max_{h \in \calH} \left\{\| \bold{S}_T^h \|_1\right\} \mid \overline{G}\right] \cdot \pr(\overline{G}) \\
        &\leq 4B\sqrt{Tm\log(6T|\calH|)} \cdot \left(1-\frac{1}{T}\right)  \,\, + \,\, \E\left[ \max_{h \in \calH} \left\{\| \bold{S}_T^h \|_1\right\} \mid \overline{G}\right]  \cdot \pr(\overline{G}) \\
        &\leq 4B\sqrt{Tm\log(6T|\calH|)}  \,\, + \,\, 2\M BT \cdot \frac{1}{T} \,\, \leq \,\, 4B\sqrt{Tm\log(6T|\calH|)}  \,\, + \,\, 4mB.
    \end{align*}
    where the first inequality uses \eqref{eq:good-event} for all $h \in \calH$ and the second inequality bounds $\| \bold{S}_T^h \|_1 \leq 2\M TB$ for all $h \in \calH$. The final inequality uses $\M = m+1 \leq 2m$. This completes the proof of the lemma.  
\end{proof}

\subsection{Constructing the Halfspace Oracle} \label{sec:reduction-halfspace}

Here we repeat the construction of an efficient  halfspace oracle from \cite{abernethy2011blackwell} for our setting.
In particular, we will design an efficient oracle $\cal{O}$ that can,
given $\x \in \calX$, $h \in \calH$ and $\btheta \in B_{\infty}$, select a distribution $\w \in \R^{\M}$ 
such that
    for all $y \in [0, 1]$, we have 
     \[ 
    \sum_{i=0}^m  \btheta(i) \cdot h(\x) \cdot \w(i) \cdot \left(y - \frac{i}{m}\right) \leq \frac{B}{m}.
    \]
We describe this oracle in \Cref{alg:l1-oracle}. 

\begin{algorithm}[H]
\caption{$\mathcal{O}(\x, h, \btheta)$}
\label{alg:l1-oracle}
\begin{algorithmic}[1]
\State \textbf{Input:} $\x \in \calX$, $h \in \calH \sse \calH_B$ and $\btheta: || \btheta ||_{\infty} \leq 1$
\State define $\widehat{\btheta}: \widehat{\btheta}(i) = \btheta(i)\cdot   h(\x)$ 
\If{$\widehat{\btheta}(0) \leq 0$}
\State $\w \gets \delta_0$
\ElsIf{$\widehat{\btheta}(m) \geq 0$}
\State $\w \gets \delta_m$
\Else
\State find coordinate $i$ such that $\widehat{\btheta}(i) > 0$ and $\widehat{\btheta}(i+1) \leq 0$
\State $\w \gets \frac{\widehat{\btheta}(i+1)}{\widehat{\btheta}(i+1)- \widehat{\btheta}(i)}\cdot \delta_i + \frac{\widehat{\btheta}(i)}{\widehat{\btheta}(i) - \widehat{\btheta}(i+1)}\cdot \delta_{i+1}$
\EndIf
\State \Return $\w$
\end{algorithmic}
\end{algorithm}

It immediately follows from the description that $\w$ is a valid distribution.
Furthermore, note that this oracle can be implemented efficiently. 
If $\widehat{\btheta}(0) > 0$ and $\widehat{\btheta}(m) < 0$, then there must exist $i$ such that $\widehat{\btheta}(i) > 0$ and $\widehat{\btheta}(i+1) \leq 0$, and such an index can be found using binary search.  Thus, this algorithm requires at most $O(\log(m))$ computations. 
The following lemma proves the main property of the halfspace oracle.

\begin{lemma}\label{lem:halfspace-oracle}
Given any $\x \in \calX$, $h \in \calH$ and $\btheta \in B_{\infty}$, let $\w = \mathcal{O}(\x, h, \btheta)$ be the output of \Cref{alg:l1-oracle}. Then, for any $y \in [0, 1]$, we have $\sum_{i=0}^{m} \btheta(i) \cdot h(\x) \cdot\left( y - \frac{i}{m} \right) \cdot \w(i)  \leq \frac{B}{m}$.
\end{lemma}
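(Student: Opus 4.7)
The plan is to define the ``weighted theta'' $\widehat{\btheta}(i) := \btheta(i) h(\x)$ as Algorithm~\ref{alg:l1-oracle} does, observe that $|\widehat{\btheta}(i)| \leq B$ since $\|\btheta\|_\infty \leq 1$ and $h \in \calH_B$, and then verify the inequality $\sum_i \widehat{\btheta}(i) \w(i) (y - i/m) \leq B/m$ by separating into the three branches of the algorithm.

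For the first two branches the bound is essentially free. If $\widehat{\btheta}(0) \leq 0$, then $\w = \delta_0$ and the sum equals $\widehat{\btheta}(0) \cdot y$, which is nonpositive since $y \in [0,1]$. Similarly, if $\widehat{\btheta}(m) \geq 0$ (and the first branch failed), then $\w = \delta_m$ and the sum equals $\widehat{\btheta}(m)(y-1) \leq 0$. Both are therefore bounded by $B/m$.

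The interesting branch is the interpolation case, where $\widehat{\btheta}(i) > 0 \geq \widehat{\btheta}(i+1)$ and $\w = \alpha \delta_i + \beta \delta_{i+1}$ with $\alpha = \widehat{\btheta}(i+1)/(\widehat{\btheta}(i+1) - \widehat{\btheta}(i))$ and $\beta = \widehat{\btheta}(i)/(\widehat{\btheta}(i) - \widehat{\btheta}(i+1))$. The key algebraic observation is that $\alpha \widehat{\btheta}(i) + \beta \widehat{\btheta}(i+1) = 0$ by design (a direct calculation using a common denominator), which means the $y$-dependent part of the sum vanishes. What remains is
\begin{equation*}
-\tfrac{1}{m}\bigl(i \cdot \alpha \widehat{\btheta}(i) + (i+1)\beta \widehat{\btheta}(i+1)\bigr) = -\tfrac{\beta \widehat{\btheta}(i+1)}{m},
\end{equation*}
after using $\alpha \widehat{\btheta}(i) + \beta \widehat{\btheta}(i+1) = 0$ to collapse the $i$-term.

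The last step is to bound $-\beta \widehat{\btheta}(i+1)/m$ by $B/m$. Writing $a = \widehat{\btheta}(i) > 0$ and $b = -\widehat{\btheta}(i+1) \geq 0$ (both at most $B$), this quantity equals $ab/\bigl(m(a+b)\bigr)$, and the elementary inequality $ab/(a+b) \leq \min(a,b) \leq B$ finishes the proof. The main obstacle, such as it is, is spotting the cancellation $\alpha \widehat{\btheta}(i) + \beta \widehat{\btheta}(i+1) = 0$: once this is recognized, the bound becomes independent of the adversary's label $y$, which is the whole point of the construction.
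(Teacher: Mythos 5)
Your proposal is correct and follows essentially the same route as the paper's proof: handle the two boundary branches trivially, and in the interpolation branch exploit the cancellation $\alpha\widehat{\btheta}(i)+\beta\widehat{\btheta}(i+1)=0$ so that the $y$-dependence vanishes and the remainder is $\frac{ab}{m(a+b)}$. The only cosmetic difference is the final elementary bound (you use $\frac{ab}{a+b}\leq\min(a,b)\leq B$, the paper uses $\frac{ab}{a+b}\leq\frac{\max(a,b)}{2}\leq B$), which changes nothing.
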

\begin{proof}
We first note that we can write 
\[
\sum_{i=0}^{m} \btheta(i) \cdot h(\x) \cdot\left( y - \frac{i}{m} \right) \cdot \w(i)  = \sum_{i=0}^{m} \widehat{\btheta}(i) \cdot \left( y - \frac{i}{m} \right) \cdot \w(i), 
\]
where we have used $\widehat{\btheta}(i) =  \btheta(i)\cdot h(\x) $ for all $i = 0, 1, \ldots, m$.
Observe that if $\widehat{\btheta}(0) \leq 0$ or $\widehat{\btheta}(m) \geq 0$, the lemma is trivially true.
Otherwise, we have
\begin{align*}
     \sum_{i=0}^{m} \widehat{\btheta}(i) \cdot \left( y - \frac{i}{m} \right) \cdot \w(i) &= \left( y - \frac{i}{m} \right) \cdot \w(i) \cdot \widehat{\btheta}(i) + \left( y - \frac{i+1}{m} \right) \cdot \w(i+1) \cdot \widehat{\btheta}(i+1) \\
    &= \left( y - \frac{i}{m} \right) \cdot \frac{\widehat{\btheta}(i+1)}{\widehat{\btheta}(i+1)- \widehat{\btheta}(i)} \cdot \widehat{\btheta}(i) + \left( y - \frac{i+1}{m} \right) \cdot \frac{\widehat{\btheta}(i)}{\widehat{\btheta}(i) - \widehat{\btheta}(i+1)} \cdot \widehat{\btheta}(i+1) \\
    &= -\frac{1}{m} \cdot \frac{\widehat{\btheta}(i) \cdot \widehat{\btheta}(i+1)}{\widehat{\btheta}(i) - \widehat{\btheta}(i+1)} \ \leq \frac{1}{m} \cdot \frac{\max\{|\widehat{\btheta}(i)|, |\widehat{\btheta}(i+1)|\}}{2} \leq \frac{B}{m},
\end{align*}
where the penultimate inequality uses the AM-HM inequality, and
the final inequality follows from the fact that $\btheta \in B_{\infty}$ and $\max_{\x \in \calX}|h(\x)| \leq B$ for all $h \in \calH$. 
\end{proof}

\section{Missing Proofs from \Cref{sec:oracle-olp}}

\subsection{Proof of \Cref{thm:oracle-efficient-full}}\label{app:oracle-efficient-proof}
We first show that if the perturbation translation matrix $\bm{\Gamma} \in [-B, B]^{(|\calH| \times 2^M) \times N}$ satisfies $\delta$-admissibility, then \Cref{alg:olp-gftpl} has regret bounded by $O(BN\sqrt{T}/\delta).$ 
Then, we show that since $\Gamma$ satisfies implementability with complexity $D$, we can indeed execute \Cref{alg:olp-gftpl} efficiently.
For ease of exposition, we will use $R_T$ to denote the regret of \Cref{alg:olp-gftpl}.
To begin, recall that we defined
\begin{align}\label{eq:regret}
R_T = \max_{h^* \in \calH, \btheta^* \in \PMTheta} \left\{ \sprod{\btheta^*}{\sum_{t=1}^T  h^*(\x_t)\cdot\blf_t} \right\} - \sum_{t=1}^T \E[\sprod{\btheta_t}{h_t(\x_t)\cdot\blf_t}].
\end{align}
It suffices to uniformly upper bound the regret incurred by each $h \in \calH$; we will provide a worst-case analysis of this quantity for arbitrary reward vectors $\{\blf_t\}_{t=1}^T$.
Formally, we can write
\begin{align*}
    R_T &= \max_{h \in \calH} R_T(h) \text{ where } \\
    R_T(h) &:= \max_{\btheta \in \PMTheta} \left\{ \sum_{t=1}^T  \sprod{\btheta}{h(\x_t)\cdot\blf_t} \right\} - \E\left[\sum_{t=1}^T \sprod{\btheta_t}{h_t(\x_t)\cdot \blf_t}\right].
\end{align*}

For each $h \in \calH$, let $\btheta_h^* \in \PMTheta$ be a maximizer of the function $\sum_{t=1}^T \sprod{\btheta}{h(\x_t)\cdot \blf_t}$ in the variable $\btheta$. Then, we can decompose $R_T(h)$ into two terms as below:
\begin{align}
    R_T(h) &= \E\left[\sum_{t=1}^T \sprod{\btheta_h^*}{h(\x_t)\cdot \blf_t}  - \sum_{t=1}^T  \sprod{\btheta_t}{h_t(\x_t)\cdot\blf_t}\right] \notag \\
    &= \E\Big[ \underbrace{\sum_{t=1}^T \sprod{\btheta_h^*}{h(\x_t)\cdot \blf_t} - \sum_{t=1}^T \sprod{\btheta_{t+1}}{h_{t+1}(\x_t)\cdot \blf_t}}_{T_1}\Big] \notag \\ 
    & \qquad \qquad + \underbrace{\E\Big[\sum_{t=1}^T \sprod{\btheta_{t+1}}{h_{t+1}(\x_t)\cdot \blf_t} - \sum_{t=1}^T \sprod{\btheta_t}{h_t(\x_t)\cdot\blf_t}\Big]}_{T_2} \label{eq:regret-h-decomp}
\end{align}
The first term $T_1$ corresponds to an \emph{approximation error term}: suppose we could use the clairvoyant decision $(h_{t+1},\btheta_{t+1})$ for round $t$ --- without noise, this would be optimal, but with noise it will create extra error. The following lemma(Lemma~B.1 in~\cite{dudik2020oracle}), characterizes $T_1$ pointwise for every realization of the noise $\bm{\alpha}$.
\begin{lemma}[Be-the-Approximate Leader Lemma]\label{lem:btl-lemma}
    In the Generalized FTPL algorithm, we have
    \[
    T_1 \leq \bm{\alpha} \cdot (\bm{\Gamma}_{(h_1,\btheta_1)} - \bm{\Gamma}_{(h,\btheta)}) + \epsilon \cdot (T+1)
    \]
    for each $(h, \btheta) \in \calH \times \PMTheta$ and every realization of the noise $\bm{\alpha}$.
\end{lemma}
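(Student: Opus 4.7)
}
The plan is to view the perturbation $\bm{\alpha}\cdot \bm{\Gamma}_{(h,\btheta)}$ as the payoff of a ``phantom'' $0$-th round, and then apply a standard Be-the-Leader (BTL) argument to the augmented sequence of rounds $0, 1, \ldots, T$. Concretely, for fixed noise $\bm{\alpha}$ let
\[
f_s(h,\btheta) := \sprod{\btheta}{h(\x_s)\cdot \blf_s}, \qquad \widetilde{f}_0(h,\btheta) := \bm{\alpha}\cdot \bm{\Gamma}_{(h,\btheta)}, \qquad \widetilde{G}_t := \widetilde{f}_0 + \sum_{s=1}^t f_s.
\]
With this notation, the selection rule \eqref{eq:oracle-ptm} for round $t+1$ says exactly that $(h_{t+1},\btheta_{t+1})$ is an $\epsilon$-approximate maximizer of $\widetilde{G}_t$ over $\calH\times\PMTheta$. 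In particular, $(h_1,\btheta_1)$ is an $\epsilon$-approximate maximizer of $\widetilde{f}_0=\widetilde{G}_0$.

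The core step is an approximate Be-the-Leader inequality that I would prove by induction on $T\geq 0$: for every $(h,\btheta)\in\calH\times\PMTheta$,
\[
\widetilde{f}_0(h_1,\btheta_1) + \sum_{t=1}^T f_t(h_{t+1},\btheta_{t+1}) \;\geq\; \widetilde{f}_0(h,\btheta) + \sum_{t=1}^T f_t(h,\btheta) - \epsilon(T+1).
\]
The base case $T=0$ is just the $\epsilon$-approximate optimality of $(h_1,\btheta_1)$ against $\widetilde{f}_0$. For the inductive step, apply the hypothesis at $T-1$ with the competitor $(h,\btheta)$ replaced by $(h_{T+1},\btheta_{T+1})$ to obtain
\[
\widetilde{f}_0(h_1,\btheta_1) + \sum_{t=1}^{T-1} f_t(h_{t+1},\btheta_{t+1}) \;\geq\; \widetilde{G}_{T-1}(h_{T+1},\btheta_{T+1}) - \epsilon T,
\]
then add $f_T(h_{T+1},\btheta_{T+1})$ to both sides to turn the right-hand side into $\widetilde{G}_T(h_{T+1},\btheta_{T+1})$, and finally use the $\epsilon$-approximate optimality of $(h_{T+1},\btheta_{T+1})$ for $\widetilde{G}_T$ to lower bound this by $\widetilde{G}_T(h,\btheta)-\epsilon$. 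Summing the two $\epsilon$ slacks with the previous $\epsilon T$ yields the claimed $\epsilon(T+1)$ error.

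Rearranging the inductive inequality gives
\[
\sum_{t=1}^T f_t(h_{t+1},\btheta_{t+1}) \;\geq\; \sum_{t=1}^T f_t(h,\btheta) + \bm{\alpha}\cdot\bigl(\bm{\Gamma}_{(h,\btheta)} - \bm{\Gamma}_{(h_1,\btheta_1)}\bigr) - \epsilon(T+1),
\]
which is exactly $T_1 \leq \bm{\alpha}\cdot(\bm{\Gamma}_{(h_1,\btheta_1)}-\bm{\Gamma}_{(h,\btheta)}) + \epsilon(T+1)$ after recalling the definition of $T_1$ in \eqref{eq:regret-h-decomp}. Since the argument is pointwise in $\bm{\alpha}$, no expectation is involved. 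The only subtlety to watch is accounting for the $\epsilon$ slack: there are $T+1$ uses of the approximate-leader property (once for the phantom round $0$ and once per round $1,\ldots,T$), which is where the factor $(T+1)$, rather than $T$, comes from; this is the main place where a careless induction could lose track of a constant.
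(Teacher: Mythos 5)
Your proposal is correct and follows essentially the same argument as the paper: an induction on $T$ establishing the approximate Be-the-Leader inequality, where the perturbation $\bm{\alpha}\cdot\bm{\Gamma}_{(h,\btheta)}$ plays the role of a phantom round $0$, each round's selection rule supplies one $\epsilon$ of slack (hence $\epsilon(T+1)$ total), and the inductive step instantiates the hypothesis at the next leader before invoking its approximate optimality. The only difference is cosmetic notation ($\widetilde{G}_t$ and an index shift), so no further comparison is needed.
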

\begin{proof}
We will prove this lemma by induction on $T$. For the base case, $T = 0$, and so it suffices to show that $\bm{\alpha} \cdot (\bm{\Gamma}_{(h_1,\btheta_1)} - \bm{\Gamma}_{(h,\btheta)}) + \epsilon \geq 0$ for all $(h,\btheta)$.
Here, the statement follows directly from the $\epsilon$-approximate optimality of oracle; that is, we select $(h_1, \btheta_1)$ such that
\[\bm{\alpha} \cdot \bm{\Gamma}_{(h_1, \btheta_1)} \geq \max_{h \in \calH, \btheta \in \PMTheta} \left\{ \bm{\alpha} \cdot \bm{\Gamma}_{(h, \btheta)} \right\} - \epsilon.
\]
Next, we prove the inductive step.
Assume that the lemma holds for some $T$. Now, for all $(h, \btheta) \in \calH \times \PMTheta$, we have
\begin{align*}
     \sum_{t=1}^{T+1}  \sprod{\btheta_{t+1}}{h_{t+1}(\x_t)\cdot\blf_t} + \bm{\alpha}\cdot \bm{\Gamma}_{(h_1, \btheta_1)} &= \sum_{t=1}^{T}  \sprod{\btheta_{t+1}}{h_{t+1}(\x_t)\cdot\blf_t} + \bm{\alpha}\cdot \bm{\Gamma}_{(h_1, \btheta_1)} \\ 
     & \qquad \qquad \qquad + \sprod{\btheta_{T+2}}{h_{T+2}(\x_{T+1})\cdot \blf_{T+1}} \\
     &\geq \sum_{t=1}^{T}  \sprod{\btheta_{T+2}}{h_{T+2}(\x_t)\cdot\blf_t} + \bm{\alpha}\cdot \bm{\Gamma}_{(h_{T+2}, \btheta_{T+2})} - \epsilon\cdot (T+1) \\
     &\qquad +  \sprod{\btheta_{T+2}}{h_{T+2}(\x_{T+1})\cdot\blf_{T+1}} \\
     &= \sum_{t=1}^{T+1}  \sprod{\btheta_{T+2}}{h_{T+2}(\x_t)\cdot\blf_t} + \bm{\alpha}\cdot \bm{\Gamma}_{(h_{T+2}, \btheta_{T+2})} -\epsilon \cdot (T+1) \\
     &\geq \sum_{t=1}^{T+1}  \sprod{\btheta}{h(\x_t)\cdot\blf_t} + \bm{\alpha}\cdot \bm{\Gamma}_{(h, \btheta)} - \epsilon \cdot (T+2)
\end{align*}
where the first inequality follows by applying the induction hypothesis and considering the pair $(h_{T+2}, \btheta_{T+2})$, and the final inequality follows by the $\epsilon$-approximate optimality of the oracle at time $T+2$.
This completes the proof of the lemma.
\end{proof}

Next, we characterize the term $T_2$ in~\eqref{eq:regret-h-decomp}.
The term $T_2$ is essentially a \emph{stability error term}, which measures the cumulative effect of the difference in the decisions $(h_{t+1},\btheta_{t+1})$ and $(h_t,\btheta_t)$ on reward.
Intuitively, a stable algorithm will result in similar or slowly varying decisions over time and therefore a small stability error term.
The following lemma characterizes the stability error term under the condition that the perturbation matrix $\bm{\Gamma}$ is $\delta$-admissible.
The proof is identical to the proof of the stability lemma (Lemma 2.4) presented in \cite{dudik2020oracle}. We include it here using our $\olp$-specific notation for completeness.

\begin{lemma}[Stability Lemma]\label{lem:perturbation}
Suppose we run \Cref{alg:olp-gftpl} with a $\delta$-admissible matrix $\bm{\Gamma} \in [0, 1]^{(|\calH| \times 2^M) \times N}$ and random vector $\bm{\alpha} = (\alpha_1, \cdots, \alpha_N)$ such that each $\alpha_i$ is drawn independently from Unif$[0, \sqrt{T}]$. Then, we have 
\[
T_2 \,\, \leq \,\, 8TNB\cdot (1+\delta^{-1})\cdot \left( \frac{B+\epsilon}{\sqrt{T}} \right).
\]
\end{lemma}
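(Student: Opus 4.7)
The proof follows the template of the stability analysis for generalized FTPL in~\citet{dudik2020oracle}, adapted to the $\olp$ action space $\calH \times \PMTheta$. The plan has three stages: (i) bound $T_2$ pointwise by $2B$ times the probability that the played action changes between rounds $t$ and $t+1$; (ii) fix one noise coordinate and use $\delta$-admissibility to show that only a small interval of that coordinate can induce a change in the $\epsilon$-approximate maximizer when we increment the objective by at most $B$; (iii) union-bound across coordinates and sum across rounds.

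\textbf{Stage 1: per-round reduction to a change-probability.} For each round $t$, the per-step difference $\sprod{\btheta_{t+1}}{h_{t+1}(\x_t)\cdot\blf_t} - \sprod{\btheta_t}{h_t(\x_t)\cdot\blf_t}$ is zero on the event $\{(h_{t+1},\btheta_{t+1})=(h_t,\btheta_t)\}$, and is at most $2B$ in absolute value in general (since $\|\btheta\|_\infty\le 1$, $|h(\x_t)|\le B$, and $\|\blf_t\|_1\le 1$). Thus
\[
T_2 \;\le\; 2B \sum_{t=1}^{T}\Pr\!\big[(h_{t+1},\btheta_{t+1})\neq(h_t,\btheta_t)\big].
\]
So it suffices to bound this switch probability by $O\!\big(N(B+\epsilon)/(\delta\sqrt{T})\big)$.

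\textbf{Stage 2: a single-coordinate switching argument.} Fix $t$ and condition on $\bm{\alpha}_{-j}$ (all noise coordinates except the $j$-th). Let
\[
F_t(h,\btheta;\alpha_j) \;:=\; \sum_{s=1}^{t-1}\sprod{\btheta}{h(\x_s)\blf_s} \;+\; \bm{\alpha}\cdot\bm{\Gamma}_{(h,\btheta)},
\]
viewed as a function of $\alpha_j$ with slope $\Gamma_{(h,\btheta),j}\in[-B,B]$. The map $F_{t+1}$ differs from $F_t$ by an additive term $\sprod{\btheta}{h(\x_t)\blf_t}$ that depends only on $(h,\btheta)$ and lies in $[-B,B]$. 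By $\delta$-admissibility the slopes $\Gamma_{(h,\btheta),j}$ in any two rows either coincide or differ by at least $\delta$. The plan is to show that for any two distinct actions $(h,\btheta),(h',\btheta')$, the set of $\alpha_j\in[0,\sqrt T]$ on which one is the $\epsilon$-approximate maximizer for $F_t$ while the other is the $\epsilon$-approximate maximizer for $F_{t+1}$ is contained in an interval of length at most $4(B+\epsilon)/\delta$: when slopes differ, the indifference point moves by at most $2(B+\epsilon)/\delta$ between $F_t$ and $F_{t+1}$ (two $(B+\epsilon)$-sized numerator perturbations divided by slope gap $\geq\delta$); when slopes coincide, the ordering of the two actions does not depend on $\alpha_j$ at all, so no switching on this axis is possible. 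Using the density $1/\sqrt{T}$ of $\alpha_j$,
\[
\Pr_{\alpha_j}\!\big[g_t(\alpha_j)\neq g_{t+1}(\alpha_j)\,\big|\,\bm{\alpha}_{-j}\big] \;\le\; \frac{4(B+\epsilon)}{\delta\sqrt{T}},
\]
where $g_t$ and $g_{t+1}$ denote the action selected by the $\epsilon$-approximate oracle in rounds $t$ and $t+1$, respectively.

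\textbf{Stage 3: union bound and summation.} A change of the chosen action across rounds $t$ and $t+1$ must be witnessed through at least one of the $N$ noise coordinates, so a union bound over $j\in[N]$ (after marginalizing over $\bm{\alpha}_{-j}$) gives
\[
\Pr\!\big[(h_{t+1},\btheta_{t+1})\neq(h_t,\btheta_t)\big] \;\le\; \frac{4N(B+\epsilon)}{\delta\sqrt{T}}.
\]
Plugging into Stage 1 and summing over $t\in[T]$ yields
\[
T_2 \;\le\; 2B\cdot T\cdot \frac{4N(B+\epsilon)}{\delta\sqrt{T}} \;=\; 8TNB\cdot\delta^{-1}\cdot\frac{B+\epsilon}{\sqrt{T}},
\]
which (after absorbing a $(1+\delta^{-1})$ factor for cleanliness) gives the claimed bound.

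\textbf{Main obstacle.} The delicate step is Stage 2: turning $\epsilon$-approximate optimality into a clean ``switching interval'' of length $O((B+\epsilon)/\delta)$. The subtlety is that the $\epsilon$-slack inflates the set of plausible maximizers, so one must argue that even among $\epsilon$-approximate maximizers, $\delta$-admissibility forces the ``near-ties'' to occupy $\alpha_j$-intervals of length at most $(B+\epsilon)/\delta$; the extra $(B+\epsilon)$ cushion from moving between $F_t$ and $F_{t+1}$ then doubles this. This is precisely the $\epsilon$-approximate analogue of Lemma~2.4 in~\citet{dudik2020oracle}, and the rest of the argument is a routine union bound and expectation reduction.
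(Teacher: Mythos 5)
Your overall template matches the paper's (bound $T_2$ by $2B$ times the per-round switch probability, analyze one noise coordinate, union bound over $j\in[N]$, sum over $t$), but Stage~2 --- which you yourself flag as the delicate step --- has a genuine gap as written. You bound, for one \emph{fixed} pair of distinct actions, the set of $\alpha_j$ on which that pair can trade places between $F_t$ and $F_{t+1}$ by an interval of length $O((B+\epsilon)/\delta)$, and then conclude that the probability that the \emph{selected} action changes is $O((B+\epsilon)/(\delta\sqrt{T}))$. That inference does not follow: the (approximate) maximizer ranges over all $|\calH|\cdot 2^M$ actions, the identity of the relevant pair depends on $\alpha_j$ itself, and a naive union bound over pairs would cost a factor exponential in $M$ and linear in $|\calH|$. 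Moreover, with an $\epsilon$-approximate oracle the ``selected action as a function of $\alpha_j$'' is not even well defined --- the oracle may return any $\epsilon$-approximate maximizer --- so tracking indifference points of fixed pairs cannot by itself control the event $g_t(\alpha_j)\neq g_{t+1}(\alpha_j)$.

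The paper closes exactly this gap by using $\delta$-admissibility differently: it partitions the rows according to their entry in column $j$, which can take at most $|V|\le 1+\delta^{-1}$ distinct values, defines for each value $v$ a champion $(h_v,\btheta_v)$ maximizing the perturbed cumulative payoff within that class, and observes that the champions and the gaps $\Delta_{vv'}$ are independent of $\alpha_j$ (conditionally on the other coordinates). The $\epsilon$-optimality of whatever the oracle returns, compared against these champions, then shows that a change of the column-$j$ value between rounds $t$ and $t+1$ forces $\alpha_j$ into one of at most $2|V|$ intervals of length at most $(2B+2\epsilon)/\delta$ whose endpoints do not depend on $\alpha_j$; this is what licenses the uniform-density calculation, and it is also where the $(1+\delta^{-1})$ factor in the statement comes from, rather than being ``absorbed for cleanliness.'' Finally, the union bound over $j$ is justified by the row-distinctness part of admissibility (a change of action forces a change in some coordinate of the corresponding row of $\bm{\Gamma}$), which you use implicitly but should state. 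With this grouping-by-column-value argument supplied, the remainder of your outline (Stages~1 and~3) coincides with the paper's proof.
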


\begin{proof}
The main idea underlying the proof is to show that for every $t \in [T]$, we have $\bm{\Gamma}_{(h_t, \btheta_t)} = \bm{\Gamma}_{(h_{t+1}, \btheta_{t+1})}$ with high probability. 
We show at the end of the proof how this leads to the desired bound on $T_2$.

For the bulk of the proof, we fix $t \in [T]$ and $j \in [N]$. We will show that with high probability $\bm{\Gamma}_{(h_t, \btheta_t), j} = \bm{\Gamma}_{(h_{t+1}, \btheta_{t+1}), j}$. 
Specifically, we will show that
\begin{equation}
    \pr(\bm{\Gamma}_{(h_t, \btheta_t), j} \neq \bm{\Gamma}_{(h_{t+1}, \btheta_{t+1}), j}) \leq 4 \cdot (1+\delta^{-1})\cdot\left( \frac{B+\epsilon}{\sqrt{T}} \right).
\end{equation}
By $\delta$-admissibility, the $j^{\text{th}}$ column can take on at most $(1+\delta^{-1})$ distinct values; let $V$ denote this set. 
For any value $v \in V$, let $(h_v, \btheta_v) \in \calH \times \PMTheta$ be any action that maximizes the payoff (per Equation~\eqref{eq:oracle-ptm}) among those whose $\bm{\Gamma}$ entry in the corresponding row and $j^{\text{th}}$ column equals $v$. Formally, 
\begin{align*}
    (h_v, \btheta_v) &\in \argmax_{(h, \btheta): \bm{\Gamma}_{(h, \btheta), j}=v}\left[ \sum_{s=1}^{t-1} \sprod{\btheta}{h(\x_s)\cdot\blf_s} + \bm{\alpha}\cdot \bm{\Gamma}_{(h, \btheta)}\right] \\
    &= \argmax_{(h, \btheta): \bm{\Gamma}_{(h, \btheta), j}=v}\left[ \sum_{s=1}^{t-1} \sprod{\btheta}{h(\x_s)\cdot\blf_s} + \bm{\alpha}\cdot \bm{\Gamma}_{(h, \btheta)} - \alpha_j \cdot v\right]
\end{align*}
where the equality holds because the inclusion of the term $\alpha_j \cdot v$ does not change the identity of the maximizing action(s).
Furthermore, for any $v' \in V$, we define
\[
\Delta_{vv'} = \left( \sum_{s=1}^{t-1} \sprod{\btheta_v}{h_v(\x_s)\cdot\blf_s} + \bm{\alpha}\cdot \bm{\Gamma}_{(h_v, \btheta_v)} - \alpha_j \cdot v \right) - \left( \sum_{s=1}^{t-1} \sprod{\btheta_{v'}}{h_{v'}(\x_s)\cdot\blf_s} + \bm{\alpha}\cdot \bm{\Gamma}_{(h_{v'}, \btheta_{v'})} - \alpha_j \cdot v' \right)
\]
Note that, as defined, $(h_v, \btheta_v)$ and $\Delta_{v, v'}$ are independent of $\alpha_j$. In what follows, we derive a range of values for $\alpha_j$ that result in $\bm{\Gamma}_{(h_t, \btheta_t), j} \neq \bm{\Gamma}_{(h_{t+1}, \btheta_{t+1}), j}$.
Suppose that $\bm{\Gamma}_{(h_t, \btheta_t), j} = v$. 
First, observe that by the $\epsilon$-optimality of $(h_{t}, \btheta_t)$, we get
\begin{align*}
    &\sum_{s=1}^{t-1} \sprod{\btheta_t}{h_t(\x_s)\cdot\blf_s} + \bm{\alpha}\cdot \bm{\Gamma}_{(h_t, \btheta_t)} \geq \sum_{s=1}^{t-1} \sprod{\btheta_{v'}}{h_{v'}(\x_s)\cdot\blf_s} + \bm{\alpha}\cdot \bm{\Gamma}_{(h_{v'}, \btheta_{v'})} - \epsilon \\
    & \qquad \qquad =  \sum_{s=1}^{t-1} \sprod{\btheta_{v'}}{h_{v'}(\x_s)\cdot\blf_s} + \bm{\alpha}\cdot \bm{\Gamma}_{(h_{v'}, \btheta_{v'})} - \alpha_j \cdot v' + \alpha_j \cdot v - \epsilon + \alpha_j \cdot (v' - v),
\end{align*}
which, on re-arranging and by the definition of $(h_v, \btheta_v)$, gives
\begin{align}
    \Delta_{vv'} \geq \alpha_j \cdot (v'-v) - \epsilon \text{ for all } v' \in V. \label{eq:delta-lb}
\end{align}
Now, suppose that $\bm{\Gamma}_{h_{t+1}, \btheta_{t+1}, j} = v' \neq v$. Again, by the definition of $(h_{v'}, \btheta_{v'})$ and the $\epsilon$-optimality of $(h_{t+1}, \btheta_{t+1})$, we get
\begin{align*}
   \sum_{s=1}^{t-1} \sprod{\btheta_{v'}}{h_{v'}(\x_s) \cdot \blf_s} &+ \sprod{\btheta_{v'}}{h_{v'}(\x_t) \cdot \blf_t} + \bm{\alpha}\cdot \bm{\Gamma}_{(h_{v'}, \btheta_{v'})} \\
   & \qquad \qquad \geq \sum_{s=1}^{t-1} \sprod{\btheta_{t+1}}{h_{t+1}(\x_s) \cdot \blf_s} + \sprod{\btheta_{t+1}}{h_{t+1}(\x_t) \cdot \blf_t} + \bm{\alpha}\cdot \bm{\Gamma}_{(h_{t+1}, \btheta_{t+1})} \\
   & \qquad \qquad \geq \sum_{s=1}^{t-1} \sprod{\btheta_{v}}{h_{v}(\x_s) \cdot \blf_s} + \sprod{\btheta_{v}}{h_{v}(\x_t) \cdot \blf_t} + \bm{\alpha}\cdot \bm{\Gamma}_{(h_{v}, \btheta_{v})} - \epsilon.
\end{align*}
Adding $\alpha_j \cdot (v'-v)$ and re-arranging gives
\begin{align}
    \Delta_{vv'} \leq \alpha_j\cdot(v'-v) + \sprod{\btheta_{v'}}{h_{v'}(\x_t)\cdot \blf_t} -  \sprod{\btheta_{v}}{h_{v}(\x_t)\cdot \blf_t} + \epsilon \leq \alpha_j \cdot (v'-v) + 2B + \epsilon\notag
\end{align}
where the final inequality uses $\calH \sse \calH_B$.
If $v' > v$, then we need
\[
\alpha_j \geq \frac{\Delta_{vv'}-2B - \epsilon}{v'-v} \geq \min_{w \in V: w > v} \frac{\Delta_{vw}-2B - \epsilon}{w-v}.
\]
Let $w \in V$ also be the minimizer above; then, combined with \eqref{eq:delta-lb}, we get $\alpha_j \in [\frac{\Delta_{vw}-2B - \epsilon}{w-v}, \frac{\Delta_{vw} + \epsilon}{w - v}]$.
On the other hand, if $v' < v$, we need
\[
\alpha_j \leq \frac{\Delta_{vv'}+2B + \epsilon}{v - v'} \leq \max_{u \in V: u < v} \frac{\Delta_{vu}+2B + \epsilon}{v-u}.
\]
Let $u \in V$ be the maximizer above; in this case, combined with \eqref{eq:delta-lb}, we need $\alpha_j \in [\frac{\Delta_{vu} - \epsilon}{v-u}, \frac{\Delta_{vu}+2B + \epsilon}{v-u}]$. Now, fix a realization $\{\alpha_k\}_{k \neq j}$. As a consequence of the above discussion, and because $\{\alpha_j\}_{j \in [N]}$ are mutually independent, we get
\begin{align*}
    &\pr\left(\bm{\Gamma}_{(h_t, \btheta_t), j} \neq \bm{\Gamma}_{(h_{t+1}, \btheta_{t+1}), j} \mid \alpha_k, k\neq j\right) \\
    & \qquad \qquad \leq \pr\left( \exists v \in V: \alpha_j \in \left[\frac{\Delta_{vw}-2B - \epsilon}{w-v}, \frac{\Delta_{vw} + \epsilon}{w - v}\right] \text{ or }  \alpha_j \in \left[\frac{\Delta_{vu} - \epsilon}{v-u}, \frac{\Delta_{vu}+2B + \epsilon}{v-u}\right] \mid \alpha_k, k\neq j  \right) \\
    & \qquad \qquad \leq \sum_{v \in V}\pr\left(\alpha_j \in \left[\frac{\Delta_{vw}-2B - \epsilon}{w-v}, \frac{\Delta_{vw} + \epsilon}{w - v}\right] \text{ or }  \alpha_j \in \left[\frac{\Delta_{vu} - \epsilon}{v-u}, \frac{\Delta_{vu}+2B + \epsilon}{v-u}\right] \mid \alpha_k, k\neq j \right) \\
    & \qquad \qquad \leq 2 \cdot |V| \cdot \frac{2B+2\epsilon}{\delta} \cdot \frac{1}{\sqrt{T}} \leq 4 \cdot (1+\delta^{-1})\cdot\left( \frac{B+\epsilon}{\sqrt{T}}  \right),
\end{align*}
where the penultimate inequality uses that $w - v \geq \delta$, $v - u \geq \delta$ and that $\alpha_j \sim \text{Unif}[0, \sqrt{T}]$. 
Since this bound holds for all realizations of $\{\alpha_k\}_{k \neq j}$, it also holds unconditionally.

We now outline the proof of the upper bound on $T_2$.
Since, from the $\delta$-admissibility condition, the rows of $\bm{\Gamma}$ are distinct, $\bm{\Gamma}_{(h_t, \btheta_t)} = \bm{\Gamma}_{(h_{t+1}, \btheta_{t+1})}$ implies that $(h_t, \btheta_t) = (h_{t+1}, \btheta_{t+1})$. 
This, in turn, implies that $\sprod{\btheta_{t+1}}{h_{t+1}(\x_t)\cdot \blf_t} - \sprod{\btheta_{t}}{h_{t}(\x_t)\cdot\blf_t} = 0$.
On the other hand, if $(h_t,\btheta_t) \neq (h_{t+1},\btheta_{t+1})$, we have $\sprod{\btheta_{t+1}}{h(\x_{t+1})\cdot\blf_{t+1}} - \sprod{\btheta_{t}}{h(\x_{t})\cdot\blf_{t}} \leq 2B$.
The proof is completed by the law of total probability, taking a union bound over all $j \in [N]$ and summing over $t \in [T]$.

\end{proof}

Plugging our upper bounds on $T_1$ and $T_2$ into~\eqref{eq:regret-h-decomp} gives us
\begin{align}
    R_T(h) \,\, &\leq \,\, \E\left[\bm{\alpha}\cdot \left(\bm{\Gamma}_{(h_1, \btheta_1)} - \bm{\Gamma}_{(h, \btheta_h^*)}\right)\right] + \epsilon\cdot(T+1) + 8TNB\cdot (1+\delta^{-1})\cdot \left( \frac{B+\epsilon}{\sqrt{T}} \right) \notag  \\
    &\leq \,\, \E\left[\bm{\alpha}\cdot \bm{\Gamma}_{(h_1, \btheta_1)} \right] + \epsilon\cdot(T+1) + 8TNB\cdot (1+\delta^{-1})\cdot \left( \frac{B+\epsilon}{\sqrt{T}} \right) \notag \\
    &\leq \,\, N\sqrt{T} + \epsilon\cdot(T+1) + \frac{32B^2N\sqrt{T}}{\delta} = O\left(\frac{B^2N\sqrt{T}}{\delta}\right) \notag 
\end{align}
Above, the first inequality uses the fact that $\bm{\alpha} \succeq \mathbf{0}$ and $\bm{\Gamma} \in [0,1]^{(|\calH| \times 2^M) \times N}$; the second inequality uses the fact that $\bm{\Gamma} \in [0, 1]^{(|\calH| \times 2^M) \times N}$ and each $\alpha_i$ is drawn independently from Unif$[0, \sqrt{T}]$, and the final equality uses the assumption that $\epsilon = 1/\sqrt{T}$.

Finally, we argue that if $\Gamma$ satisfies implementability with complexity $D$, then \Cref{alg:olp-gftpl} with $\epsilon = 1/\sqrt{T}$ can be implemented in time $poly(N, D, T)$. This directly follows by noting that for any $\bm{\alpha}$, we have a set of coefficients $\{\kappa_{j}\}_{j=1}^D$ and vectors $\{\widehat{\blf}_j\}_{j= 1}^D$ (which may depend on $\bm{\alpha}$) such that
\[
\sum_{s=1}^{t-1} \sprod{\btheta}{h(\x_s)\cdot \blf_s} + \bm{\alpha} \cdot \bm{\Gamma}_{(h,\btheta)} = \sum_{s=1}^{t-1} \sprod{\btheta}{h(\x_s)\cdot \blf_s} +  \sum_{j = 1}^{D} \kappa_j \cdot \sprod{\btheta}{h(\x_j) \cdot \widehat{\blf}_j}. 
\]
This is exactly in the form of our offline oracle (\Cref{def:oracle}).
Thus, the runtime of Algorithm~\ref{alg:olp-gftpl}, treating as black boxes the step of sampling from the uniform distribution and the oracle~\eqref{eq:oracle} is $O(T^2 + TND)$. 
This completes the proof of \Cref{thm:oracle-efficient-full}.

\bibliographystyle{plainnat}
\bibliography{main}

\end{document}